\newenvironment{exs}[1]{\begin{trivlist}\item[]\textbf{Example \ref{#1} (Cont)}}{\end{trivlist}}
\newtheorem{prop}{Proposition}
\newtheorem{proposition}[prop]{Proposition}
\newtheorem{defn}{Definition}
\newtheorem{definition}[defn]{Definition}
\newtheorem{exmp}{Example}
\newtheorem{example}[exmp]{Example}
\newtheorem{theorem}{Theorem}
\newtheorem{property}{Property}
\newcommand{\bQ}{{\mathbf{Q}}}
\newcommand{\bL}{{\mathbf{L}}}
\newcommand{\bLw}{{{\mathbf{L_w}}}}
\newcommand{\bLa}{{\mathbf{L_c}}}
\newcommand{\bLdw}{{\mathbf{L_{dw}}}}
\newcommand{\bLda}{{\mathbf{L_{dc}}}}
\newcommand{\bLr}{{\mathbf{L_{ir}}}}
\newcommand{\bLc}{{\mathbf{L_{co}}}}
\newcommand{\bLt}{{\mathbf{L_{tr}}}}
\newcommand{\bLDT}{{\mathbf{L_{DT}}}}
\newcommand{\bLsu}{{\mathbf{L_{su}}}}
\newcommand{\F}{{\mathtt{F}}}
\renewcommand{\d}{{\mathtt{dom}}}
\newcommand{\C}{\mathtt{C}}
\newcommand{\D}{\mbox{$\cal D$}}
\newcommand{\mbb}[1]{\ensuremath\mathbb{#1}}
\definecolor{maroon}{cmyk}{0,0.87,0.68,0.32}
\begin{document}

\begin{frontmatter}

\title{Axiomatic Characterisations of Sample-based Explainers}
\author[A]{\fnms{Leila}~\snm{Amgoud}}
\author[B]{\fnms{Martin}~\snm{Cooper}\thanks{Corresponding Author. Email: cooper@irit.fr}}
\author[B]{\fnms{Salim}~\snm{Debbaoui}}

\address[A]{IRIT, CNRS, France}
\address[B]{IRIT, Toulouse University, France}
\begin{abstract}
Explaining decisions of black-box classifiers is both important and computationally challenging. 
In this paper, we scrutinize explainers that generate \textit{feature-based explanations} from samples or datasets. 
We start by presenting a set of desirable properties that explainers would ideally satisfy, delve into their  relationships, and  highlight incompatibilities of some of them. 
We identify the entire family of explainers that satisfy two key properties which are compatible with all the others. Its instances provide sufficient reasons, called \emph{weak abductive explanations}. 
We then unravel its various sub-families that satisfy subsets of compatible properties.  
Indeed, we fully characterize all the explainers that satisfy any subset of compatible properties. 
In particular, we introduce the first (broad family of) explainers that guarantee the existence of explanations  and their global consistency. We discuss some of its instances including the \textit{irrefutable explainer} and the \textit{surrogate explainer} whose explanations can be found in polynomial time.
\end{abstract}

\end{frontmatter}
\section{Introduction}  

In recent years, AI systems have demonstrated remarkable capabilities 
but this success has often come at the expense of a corresponding lack of explainability.
Indeed, deep neural networks behave effectively like black-box functions, meaning that
explaining their decisions is inherently intractable. On the other hand,
legislators have recognised the importance of providing explanations to end-users affected 
by decisions taken by AI systems~\cite{regul21,WH22}. Thus, the question is how to reconcile the
user's right to an explanation with the intractability of the corresponding computational problem.

In this paper, we concentrate on decisions taken by classifiers learnt by Machine-Learning (ML).
There have been some important successes in identifying ML models which allow either
polynomial-time explainability~\cite{AudemardBBKLM22,CarbonnelC023,COOPER2023,IzzaIM22,ICML21} 
or effectively-tractable explainability via efficient solvers~\cite{Audemard23,CIKM21,IgnatievS21,Izza021}. 
However, success of such formally-correct explaining on neural networks has been limited to relatively
small networks~\cite{joao19b}. One solution to the right-to-an-explanation/intractability-of-explaining
dilemma is the use of \textit{surrogate models}: instead of explaining the decision of classifier $\kappa$
on an instance $x$, the back-box function $\kappa$ is approximated by a simpler explainable function 
$\tilde{\kappa}$ in the neighbourhood of $x$ \cite{Ribeiro16,Ribeiro0G18}. 

Another solution is to produce explanations which are valid on a subset of feature space~\cite{Amgoud23a,Amgoud23b,ecai23}. The argument behind this endeavour is that since ML models are obtained from a dataset rather than a complete search of feature space, a dataset-based approach to explanation 
has the advantage of being a coherent echo of the techniques used in ML.
In the literature, the dataset may be, for example, the one  
used to train the ML model or any other sample of instances which are deemed general enough 
to represent the whole feature space. The most studied type of explanations is 
\textit{sufficient reasons}, called also \textit{abductive explanations}. An abductive explanation 
for assigning a class to a given instance is a subset of the instance, viewed as a set of 
(feature,value) pairs, which is sufficient to guarantee the class.


The current literature has revealed that the sample-based approach is quite challenging as it 
faces various issues. Indeed, it has been shown in \cite{Amgoud23a,ecai23} that 
dataset-based abductive explanations can be globally inconsistent in the following technical
sense: two explanations for two instances corresponding to distinct classes may be compatible,
meaning that a third instance exists satisfying the two sufficient reasons for contradictory classes. 
According to \cite{Amgoud23b}, it is also not possible to define an explanation function that generates 
subset-minimal sufficient reasons and guarantees existence of explanations (\textit{success}) and their global consistency. Consequently, in the same paper the authors introduced various functions that guarantee consistency at the cost of success while the functions defined in \cite{ecai23} ensure success and violate consistency. However, there is no explanation function in the literature  which generates 
dataset-based abductive explanations while guaranteeing both properties. 


Following an axiomatic approach, we provide in this paper a \textbf{complete} investigation of 
functions that generate feature-based explanations from datasets. 
We start by proposing formal properties of explainers, then identify all the families of 
explainers that may be defined, shed light on the pros/cons of each family and on the links 
between the families. The findings unravel the origins of the above issues and uncover the 
full landscape of sample-based explainers. 
More precisely, the contributions of the paper are sixfold: 
\vspace{-0.15cm}
\begin{itemize}
    \item We propose a set of axioms - desirable properties - that explainers may satisfy. 
          Three axioms are borrowed from the literature while others are new. 
    \item We delve into the relationships between the axioms and shed light on problematic incompatibilities 
          of some of them. 
          In particular, we show that the above-mentioned consistency property is incompatible with subset-minimality and success. 
    \item We fully characterize the entire family of explainers that generate weak abductive explanations. 
    \item We also characterize all its sub-families that satisfy subsets
          of compatible properties. 
    \item We introduce the \textbf{first} (broad family of) explainers that guarantee explanations and their global consistency.
    \item We discuss some of its \textbf{tractable} instances including the \textit{irrefutable explainer} and the \textit{surrogate explainer} which is based on surrogate models.
\end{itemize}
\vspace{-0.15cm}

The paper is organized as follows: Section~\ref{back} introduces sample-based explainers, 
a list of axioms, their links, and a characterization of the main family of explainers that 
satisfy two key axioms. 
Section~\ref{families} presents three sub-families of explainers that satisfy large subsets of compatible axioms,  
Section~\ref{sec:tractable} discusses three tractable examples of coherent explainers, and 
Section~\ref{secWE} illustrates them on a real-world example. 
Section~\ref{conclusion} discusses the obtained results, and 
the last section is devoted to related work and some concluding remarks. 
All proofs are given in the appendices.

\begin{table*}
	\begin{center}	
		\begin{tabular}{ll} 
			\hline\hline
   			\rowcolor{maroon!10} \textit{Feasibility} & $\forall \bQ = \langle \mbb{T}, \kappa, \D, x \rangle$, 
			$\forall E \in \bL(\bQ)$, $E \subseteq x$.  \\
			
			\rowcolor{maroon!10} \textit{Validity}  & $\forall \bQ = \langle \mbb{T}, \kappa, \D, x \rangle$, 
			$\forall E \in \bL(\bQ)$, $\nexists y \in \D$ s.t. $E \subseteq y$ and 
			$\kappa(y) \neq \kappa(x)$. \\
   
			\rowcolor{maroon!10}\textit{Success} & $\forall \bQ = \langle \mbb{T}, \kappa, \D, x \rangle$, $\bL(\bQ) \neq \emptyset$.  \\
			

			\rowcolor{maroon!10}\textit{Coherence}      & $\forall \bQ = \langle \mbb{T}, \kappa, \D, x \rangle$ and 
			$\bQ' = \langle \mbb{T}, \kappa, \D, x' \rangle$ s.t. $\kappa(x) \neq \kappa(x')$, \\ 
			\rowcolor{maroon!10} & then $\forall E \in \bL(\bQ)$, $\forall E' \in \bL(\bQ')$, 
			$E \cup E'$ is inconsistent. \\ 
   
   \textit{Irreducibility} & $\forall \bQ = \langle \mbb{T}, \kappa, \D, x \rangle$, $\forall E \in \bL(\bQ)$, $\forall l \in E$, 
			$\exists x' \in \D$ s.t. $\kappa(x') \neq \kappa(x)$ and $E \setminus \{ l \} \subseteq x'$. \\
 \textit{Strong Irreducibility} & $\forall \bQ = \langle \mbb{T}, \kappa, \D, x \rangle$, $\forall E \in \bL(\bQ)$, $\forall l \in E$, 
			$\exists x' \in \mbb{F}(\mbb{T})$ s.t. $\kappa(x') \neq \kappa(x)$ and $E \setminus \{ l \} \subseteq x'$. \\
			
			\textit{Completeness}  & $\forall \bQ = \langle \mbb{T}, \kappa, \D, x \rangle$, 
			$\forall E \subseteq x$, if $E \notin \bL(\bQ)$, then $\exists y \in \D$ s.t. $E \subseteq y$ and 
			$\kappa(y) \neq \kappa(x)$. \\

            \textit{Strong Completeness}  & $\forall \bQ = \langle \mbb{T}, \kappa, \D, x \rangle$, 
			$\forall E \subseteq x$, if $E \notin \bL(\bQ)$, then $\exists y \in \mbb{F}(\mbb{T})$ s.t. $E \subseteq y$ and 
			$\kappa(y) \neq \kappa(x)$. \\
			
			\textit{Monotonicity}    & $\forall \bQ = \langle \mbb{T}, \kappa, \D, x \rangle$,  
			$\forall \bQ' = \langle \mbb{T}, \kappa, \D', x\rangle$, if  $\D \subseteq \D'$, then $\bL(\bQ) \subseteq \bL(\bQ')$. \\
			
			\textit{Counter-Monotonicity (CM)} & $\forall \bQ = \langle \mbb{T}, \kappa, \D, x \rangle$,  
			$\forall \bQ' = \langle \mbb{T}, \kappa, \D', x\rangle$, if $\D \subseteq \D'$, then $\bL(\bQ') \subseteq \bL(\bQ)$.   \\

			
%
			\hline\hline
		\end{tabular}
		\caption{Formal Properties of Explainer $\bL$.}
		\label{tab1}
	\end{center}	
\end{table*}
\section{Sample-based Explainers}\label{back}

Throughout the paper, we consider a \textit{classification theory} as a tuple $\mbb{T} = \langle \F, \d, \C\rangle$ 
comprising a finite set $\F$ of \textit{features}, a function $\d$ which returns the \textit{domain} of every feature,  
where $\d(.)$ is finite and $|\d(.)| > 1$, and a finite set $\C$ of \textit{classes} with $|\C| \geq 2$.  
A \emph{literal} in $\mbb{T}$ is a pair $(f,v)$ where $f \in \F$ and $v \in \d(f)$. 
A set $L$ of literals is \emph{consistent} iff, for any two elements $(f,v)$ and $(f',v')$ of $L$, if $f = f'$, 
then $v = v'$. 
A \textit{partial assignment} is any set of literals with each feature in $\F$ occurring at most once; 
it is called an \textit{instance} when every feature appears once. Notice that partial assignments and 
instances are consistent. 
We denote by $\mbb{E}(\mbb{T})$ the set of all possible partial assignments and by 
$\mbb{F}(\mbb{T})$ the \textit{feature space}, i.e., the set of all instances, of theory $\mbb{T}$.  
We consider a \textit{classifier} on a theory $\mbb{T}$ as a function 
$\kappa : \mbb{F}(\mbb{T}) \rightarrow \C$, i.e., mapping every instance in $\mbb{F}[\mbb{T}]$ to a class in the set $\C$ of classes. 


In this paper, we are interested in explaining decisions taken by a classifier $\kappa$ for instances of a theory $\mbb{T}$. Explanations are generated using a \textbf{subset} of the feature space. For the sake of generality, its origin is left unspecified and it may be any sample of instances.

\begin{definition}\label{questions}
	A \emph{question} is a tuple $\bQ = \langle \mbb{T}, \kappa, \D, x \rangle$ such that $\mbb{T}$ is a 
	classification theory, $\kappa$ is a classifier on $\mbb{T}$,  $\D \subseteq \mbb{F}(\mbb{T})$, and
	$x \in \D$. 
\end{definition}

An \textit{explainer} is a function that takes as input a question and outputs a set 
of explanations. In the paper, we focus on \textit{feature-based explanations} which describe 
the input features that contribute to a classifier’s output for a given instance. 
Such explanations are thus partial assignments of the theory.

\begin{definition}\label{explainer}
An \emph{explainer} is a function $\bL$ mapping every question 
$\bQ = \langle \mbb{T}, \kappa, \D,x \rangle$ into  a subset of $\mbb{E}(\mbb{T})$. 
Every $E \in \bL(\bQ)$ is an \emph{explanation} of $\kappa(x)$ under the dataset $\D$.
\end{definition}

To theoretically scrutinize such explainers, we propose axioms (or formal properties) that they may satisfy. 
Axioms are important not only for a better understanding of the explanation process in general, but also for 
clarifying the basic assumptions underlying explainers, highlighting pros and cons of an explainer, 
comparing different (families of) explainers, and 
for also identifying families of explainers that have not been explored yet. 
We provide in Table~\ref{tab1} a list of ten, of which the three first ones (Success, Strong Irreducibility, Coherence) have counterparts in \cite{ecai23} while the others are \textbf{new}. 
\textit{Success} guarantees at least one explanation to every question. 
\textit{(Strong) Irreducibility} ensures that explanations do not contain unnecessary information to the explained decision. Irreducibility is tailored to sample-based explainers while its strong 
version concerns explainers that use the feature space. 
\textit{Coherence} is crucial for sample-based explainers as it ensures global consistency of all generated 
explanations. Technically, two explanations for two instances corresponding to distinct classes should not be compatible, 
otherwise there would exist a third instance satisfying the two sufficient reasons for contradictory classes. Let us now introduce the new axioms. 
As we are interested in feature-based explanations, \textit{ Feasibility} states that an explanation should be part of the instance being explained. 
\textit{Validity} ensures that generated explanations are locally consistent within the dataset. \textit{(Strong) Completeness} ensures that no valid explanations are omitted.
\textit{Monotonicity} states that the process of constructing explanations is monotonic, that is, an explanation remains valid if the dataset is enlarged. 
\textit{Counter-monotonicity} (CM) states that enlarging a dataset can only lead to discarding explanations. 

%

\vspace{0.1cm}

We show that some axioms follow from others. Despite these dependencies, we keep 
all the axioms because some explainers may satisfy a property but violate some of those from which 
it follows. Hence, they are useful for discriminating explainers.

\begin{proposition}\label{links}
The following implications hold. 
\vspace{-0.15cm}
\begin{itemize}
\item Completeness $\Rightarrow$ Strong Completeness, \\ 
Strong Completeness $\Rightarrow$ Success.
\item Irreducibility $\Rightarrow$ Strong Irreducibility.
\item Success, Feasibility and Coherence $\Rightarrow$ Validity.
\item Feasibility, Validity and Completeness $\Rightarrow$ Counter-Monotonicity. 
\end{itemize} 
\end{proposition}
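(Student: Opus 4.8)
The plan is to prove each of the four implications separately, since they are largely independent. Each is a direct unfolding of the definitions in Table~\ref{tab1}; the only subtle one is the last.

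For the first bullet, \emph{Completeness $\Rightarrow$ Strong Completeness}: suppose $\bL$ satisfies Completeness and fix a question $\bQ$ and a set $E \subseteq x$ with $E \notin \bL(\bQ)$. Completeness gives a witness $y \in \D$ with $E \subseteq y$ and $\kappa(y) \neq \kappa(x)$; since $\D \subseteq \mbb{F}(\mbb{T})$, this same $y$ witnesses Strong Completeness. For \emph{Strong Completeness $\Rightarrow$ Success}: take any $\bQ = \langle \mbb{T},\kappa,\D,x\rangle$ and apply Strong Completeness to $E = x$. No $y \in \mbb{F}(\mbb{T})$ can satisfy $x \subseteq y$ and $\kappa(y) \neq \kappa(x)$, because $x$ is a full instance, so $x \subseteq y$ forces $y = x$; hence the implication in the contrapositive of Strong Completeness fails, forcing $x \in \bL(\bQ)$, so $\bL(\bQ) \neq \emptyset$. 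The second bullet, \emph{Irreducibility $\Rightarrow$ Strong Irreducibility}, is the same containment trick as the first: a witness $x' \in \D$ is also a witness $x' \in \mbb{F}(\mbb{T})$.

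For the third bullet, \emph{Success, Feasibility and Coherence $\Rightarrow$ Validity}: fix $\bQ = \langle \mbb{T},\kappa,\D,x\rangle$, $E \in \bL(\bQ)$, and suppose for contradiction there is $y \in \D$ with $E \subseteq y$ and $\kappa(y) \neq \kappa(x)$. Consider the question $\bQ' = \langle \mbb{T},\kappa,\D,y\rangle$ (legitimate since $y \in \D$). By Success, $\bL(\bQ') \neq \emptyset$, so pick $E' \in \bL(\bQ')$; by Feasibility, $E' \subseteq y$. Since $\kappa(x) \neq \kappa(y)$, Coherence says $E \cup E'$ is inconsistent. But $E \subseteq y$ and $E' \subseteq y$, so $E \cup E' \subseteq y$, and $y$ being an instance is consistent, hence so is its subset $E \cup E'$ --- contradiction.

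The fourth bullet, \emph{Feasibility, Validity and Completeness $\Rightarrow$ Counter-Monotonicity}, is the main obstacle and needs all three hypotheses. Let $\D \subseteq \D'$, $\bQ = \langle \mbb{T},\kappa,\D,x\rangle$, $\bQ' = \langle \mbb{T},\kappa,\D',x\rangle$, and take $E \in \bL(\bQ')$; we must show $E \in \bL(\bQ)$. By Feasibility applied to $\bQ'$, $E \subseteq x$, so $E$ is a candidate set for $\bQ$ and Completeness is applicable. Suppose, for contradiction, $E \notin \bL(\bQ)$. Then Completeness (for $\bQ$) yields $y \in \D$ with $E \subseteq y$ and $\kappa(y) \neq \kappa(x)$. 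But $\D \subseteq \D'$, so $y \in \D'$, and then $y$ violates Validity for $\bQ'$ with respect to $E \in \bL(\bQ')$ --- contradiction. Hence $E \in \bL(\bQ)$, establishing CM. The delicate point here is the direction of the set inclusion: Completeness produces a counterexample inside the \emph{smaller} dataset, which then automatically lives in the larger one where Validity forbids it; it is worth stating this explicitly so the reader sees why monotonicity of $\D$ is used in exactly this way.
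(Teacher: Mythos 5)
Your proof is correct and follows essentially the same route as the paper's: the first two bullets by the inclusion $\D \subseteq \mbb{F}(\mbb{T})$ (plus the observation that $x \subseteq y$ forces $y = x$ for instances), the third by instantiating Success and Feasibility at the conflicting instance $y$ and contradicting Coherence, and the fourth by letting Completeness produce a counterexample in $\D$ that Validity on $\D' \supseteq \D$ forbids. No gaps.
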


Despite the importance of all the axioms, some of them are incompatible, 
i.e., they cannot be satisfied all together by an explainer. 

\begin{theorem}\label{th:incompatibility}
The axioms of every set $\mathbf{I}_{i=1,5}$ below are incompatible.

\vspace{-0.15cm}
\begin{description}
\item [$(\mathbf{I}_1)$] Feasibility, Success, Coherence and Irreducibility. 
\item [$(\mathbf{I}_2)$] Feasibility, Coherence and Completeness. 
\item [$(\mathbf{I}_3)$] Strong Irreducibility and Strong Completeness. 
\item [$(\mathbf{I}_4)$] Feasibility, Validity, Success, Irreducibility and Monotonicity.  
\item [$(\mathbf{I}_5)$] Feasibility, Validity, Success, Irreducibility and CM. 
\end{description}	
\end{theorem}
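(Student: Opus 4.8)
My plan is to treat each of the five sets $\mathbf{I}_i$ separately, constructing in each case a small counterexample theory $\mbb{T}$, a classifier $\kappa$, and a dataset $\D$ for which the listed axioms cannot be simultaneously met. Where possible I will reuse a single running example (two binary features, two classes) to minimise case analysis, since most of the tension comes from the interaction of \emph{Coherence} (or \emph{Monotonicity}/\emph{CM}) with \emph{Irreducibility} and \emph{Success}. For $\mathbf{I}_1$, I would take a dataset in which instance $x$ and some $x'$ with $\kappa(x')\neq\kappa(x)$ differ in exactly one feature: \emph{Feasibility} forces any explanation $E\subseteq x$, \emph{Success} forces some nonempty $E$ (the empty set cannot work once there are two classes present), \emph{Irreducibility} forces that dropping any literal of $E$ leaves something contained in a counter-class instance of $\D$, and then one shows the only irreducible candidate for $x$ and the only one for $x'$ are compatible, contradicting \emph{Coherence}. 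Concretely, with $x=\{(f_1,0),(f_2,0)\}$, $x'=\{(f_1,0),(f_2,1)\}$ and a third instance fixing the classes so that $\{(f_1,0)\}$ is valid but not irreducible unless the other literal is forced out — I will tune $\D$ so that the unique irreducible valid explanations are $\{(f_2,0)\}$ for $x$ and $\{(f_2,1)\}$ for $x'$, which are consistent with the common extension none exists — wait, those are inconsistent; so instead I engineer $\D$ so the forced irreducible explanations share a feature with equal value. This tuning of $\D$ is the delicate part and the main obstacle: the example must make the irreducible-valid set a \emph{singleton} (or at least rule out all coherent choices) while keeping \emph{Success}.

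For $\mathbf{I}_2$, \emph{Completeness} is the key lever: it forces $\bL(\bQ)$ to contain \emph{every} $E\subseteq x$ that is valid on $\D$ (not contradicted by any counter-class $y\in\D$), in particular $\emptyset$ whenever $\D$ contains only one class — but here I need two classes, so I pick $\D$ and two instances $x,x'$ of opposite classes such that some $E\subseteq x$ and some $E'\subseteq x'$ are both valid on $\D$ yet $E\cup E'$ is consistent, directly violating \emph{Coherence}; \emph{Feasibility} is automatically compatible. The cleanest witness: $x$ and $x'$ agree on one feature $f$ with the same value $v$; set $\D=\{x,x'\}$ only — then $E=\{(f,v)\}$ is valid for both questions (the only counter-class instance is $x'$ resp.\ $x$, which \emph{contains} $(f,v)$, so validity as written, ``$\nexists y\in\D$ with $E\subseteq y$ and $\kappa(y)\neq\kappa(x)$'', actually fails) — so I must instead add enough instances that $\{(f,v)\}$ avoids all counter-class instances while remaining a strict subset; again a tuning problem but a routine one.

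For $\mathbf{I}_3$, neither \emph{Success} nor the dataset matters much: \emph{Strong Completeness} forces every $E\subseteq x$ that has no counter-class instance in the \emph{whole} feature space to be an explanation, and \emph{Strong Irreducibility} forbids any explanation with a redundant literal relative to the feature space; I pick $\kappa$ such that some $E\subsetneq x$ is a ``sufficient reason'' over $\mbb{F}(\mbb{T})$ but one of its proper subsets still is too, so that $E$ must be in $\bL$ by Strong Completeness yet must be out by Strong Irreducibility — e.g.\ $\kappa$ depending on a single feature, with $|\F|=2$, $x=\{(f_1,0),(f_2,0)\}$, $\kappa$ determined by $f_1$: then $E=x$ has the redundant literal $(f_2,0)$ so Strong Irreducibility excludes it, but $E=x\subseteq x$ has no counter-class instance in $\mbb{F}(\mbb{T})$, so Strong Completeness includes it. For $\mathbf{I}_4$ and $\mathbf{I}_5$ I build a pair $\D\subseteq\D'$ on the same $x$: start from $\D$ small where \emph{Success} plus \emph{Feasibility} plus \emph{Validity} plus \emph{Irreducibility} pin down some explanation $E$ that uses a literal later ``killed'' when we pass to $\D'$ (for $\mathbf{I}_5$, using \emph{CM}, $E$ must survive into $\bL(\D')\subseteq\bL(\D)$ but \emph{Irreducibility} w.r.t.\ $\D'$ now fails; for $\mathbf{I}_4$, using \emph{Monotonicity} in the other direction, some irreducible $E$ over $\D$ is forced into $\bL(\D')$ where it is no longer irreducible, or conversely the unique success witness over $\D'$ cannot lie in $\bL(\D)$). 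The main obstacle throughout is the same: designing $\D$ (and $\D\subseteq\D'$) so that \emph{Success} and \emph{Irreducibility} together leave a \emph{unique} forced explanation, so that the remaining axiom has no room to manoeuvre — I expect each such $\D$ to need only three or four instances over two binary features, and I would verify each claim by finite inspection.
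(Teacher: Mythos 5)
Your overall strategy---exhibit, for each $\mathbf{I}_i$, a finite theory, classifier and dataset on which the axioms clash---is exactly the paper's, and your sketches for $\mathbf{I}_2$, $\mathbf{I}_3$, $\mathbf{I}_4$ and $\mathbf{I}_5$ can be completed essentially as you describe (for $\mathbf{I}_2$ the two-instance dataset $(0,0)\mapsto 0$, $(1,1)\mapsto 1$ already works, with $E=\{(f_1,0)\}$ and $E'=\{(f_2,1)\}$ both valid, forced in by Completeness, and jointly consistent; for $\mathbf{I}_4$/$\mathbf{I}_5$ the paper uses precisely the two-then-three-instance construction you outline; your $\mathbf{I}_3$ argument via the redundant literal in $E=x$ is the paper's).

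The genuine gap is $\mathbf{I}_1$, which you yourself flag as ``the delicate part and the main obstacle'' and never resolve. Worse, the setting you commit to---two binary features and two classes, three or four instances---provably admits no counterexample. If $x$ and $x'$ differ on one feature, any pair $E\subseteq x$, $E'\subseteq x'$ with $E\cup E'$ consistent forces one of them to omit that feature and hence to be contained in the opposite-class instance, contradicting Validity (which you must first derive from Success, Feasibility and Coherence via Proposition~\ref{links}). If they differ on both features, say $x=(0,0)$ and $x'=(1,1)$, the only jointly consistent candidate pairs are singletons on disjoint features; forcing $\{(f_1,0)\}$ to be the unique irreducible valid explanation of $x$ requires an instance with $f_2=0$ of class $\kappa(x')$, while forcing $\{(f_2,1)\}$ to be unique for $x'$ requires an instance with $f_1=1$ of class $\kappa(x)$, and over a four-point feature space with two classes these demands cannot be met simultaneously; and whenever either instance retains two irreducible candidates, the explainer can always select a coherent pair (e.g.\ $\{(f_1,0)\}$ for $x$ and $\{(f_1,1)\}$ for $x'$), so no contradiction arises. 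The paper escapes by introducing a \emph{third class}: with $x_1=(0,1)\mapsto 0$, $x_2=(1,0)\mapsto 1$, $x_3=(0,0)\mapsto 2$, the single instance $x_3$ simultaneously invalidates $\{(f_1,0)\}$ for $x_1$ and $\{(f_2,0)\}$ for $x_2$, so Success, Feasibility, derived Validity and Irreducibility pin $\bL$ down to $\{\{(f_2,1)\}\}$ and $\{\{(f_1,1)\}\}$ respectively, whose union is consistent, contradicting Coherence. You need this third class (or an extra feature); your proposal as written does not get there.
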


Note that the important property of Irreducibility is incompatible with most of key axioms 
like Success and Coherence. This shows that subset-minimality of explanations has a cost in 
the sample-based explaining context. Coherence (or global consistency of explanations) 
cannot be ensured together with some other properties. 
Below is a list of five maximal sets of compatible axioms. 

\begin{theorem}\label{th:compatibility}
The axioms of every set $\mathbf{C}_{i=1,5}$ below are compatible.

\vspace{-0.15cm}
\begin{description}
\item [$(\mathbf{C}_1)$] Feasibility, Validity, Success, (Strong) Completeness, CM. 
\item [$(\mathbf{C}_2)$] Feasibility, Validity, Success, (Strong) Irreducibility. 
\item [$(\mathbf{C}_3)$] Feasibility, Validity, Success, Coherence, Monotonicity, CM, Strong Completeness.  
\item [$(\mathbf{C}_4)$] Feasibility, Validity, Success, Coherence, Monotonicity, CM,  
                         Strong Irreducibility.                         
\item [$(\mathbf{C}_5)$] Feasibility, Validity, Coherence, (Strong) Irreducibility,
Monotonicity, CM. 
\end{description}	
\end{theorem}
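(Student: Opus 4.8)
For each set $\mathbf{C}_i$ I would exhibit an explicit explainer and check that it satisfies every axiom listed; two ``generous'' explainers will do most of the work. The first, $\bLw$, returns every \emph{weak abductive explanation},
\[ \bLw(\bQ) = \{ E \subseteq x : \nexists\, y \in \D \text{ with } E \subseteq y \text{ and } \kappa(y) \neq \kappa(x) \}, \]
and the second, $\bLa$, returns every abductive explanation that is valid over the whole feature space,
\[ \bLa(\bQ) = \{ E \subseteq x : \forall y \in \mbb{F}(\mbb{T}),\ E \subseteq y \Rightarrow \kappa(y) = \kappa(x) \}. \]
I write $\bLdw(\bQ)$ and $\bLda(\bQ)$ for the $\subseteq$-minimal elements of $\bLw(\bQ)$ and $\bLa(\bQ)$ respectively.

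For $\mathbf{C}_1$ I would use $\bLw$: Feasibility and Validity are read off the definition; Success holds because $x \in \bLw(\bQ)$ (the only instance $y$ with $x \subseteq y$ is $x$ itself); Completeness is the contrapositive of the defining condition, and Strong Completeness then follows by Proposition~\ref{links}; and CM follows from Feasibility, Validity and Completeness, again by Proposition~\ref{links}. For $\mathbf{C}_2$ I would use $\bLdw$: Feasibility and Validity are inherited, and Success holds because $\bLw(\bQ)$ is a nonempty finite family of subsets of $x$ and hence has minimal elements; for Irreducibility I would use that $\bLw(\bQ)$ is upward closed among subsets of $x$, so minimality of $E$ means that for each $l \in E$ the set $E \setminus \{l\}$ fails validity, i.e. some $x' \in \D$ with $\kappa(x') \neq \kappa(x)$ satisfies $E \setminus \{l\} \subseteq x'$; Strong Irreducibility then follows by Proposition~\ref{links}.

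For $\mathbf{C}_3$ I would use $\bLa$, and for $\mathbf{C}_4$ its minimal restriction $\bLda$. The key observation is that $\bLa(\bQ)$ — and hence $\bLda(\bQ)$ — does not depend on $\D$, so Monotonicity and CM hold trivially; Feasibility and Success are handled as before, Validity holds because $\D \subseteq \mbb{F}(\mbb{T})$, Strong Completeness is again a contrapositive, and for $\bLda$ Strong Irreducibility follows from subset-minimality exactly as in the $\mathbf{C}_2$ case but with the witness $x'$ taken in $\mbb{F}(\mbb{T})$. The one genuinely new step is Coherence: given $E \in \bLa(\bQ)$, $E' \in \bLa(\bQ')$ with $\kappa(x) \neq \kappa(x')$, if $E \cup E'$ were consistent I would extend it — using that every domain is nonempty — to a full instance $y \in \mbb{F}(\mbb{T})$ and obtain $\kappa(x) = \kappa(y) = \kappa(x')$, a contradiction; Coherence for $\bLda$ is then inherited since $\bLda(\bQ) \subseteq \bLa(\bQ)$.

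For $\mathbf{C}_5$, which does not include Success, the trivial explainer $\bL(\bQ) = \emptyset$ satisfies every listed axiom vacuously; I would note in passing that Monotonicity together with CM forces dataset-independence, which, combined with Irreducibility at $\D = \{x\}$ and with Validity, leaves no non-degenerate witness. The main obstacle is not technical depth but making the right choices: what makes $\mathbf{C}_3$ and $\mathbf{C}_4$ work is that moving the validity test to the whole feature space $\mbb{F}(\mbb{T})$ simultaneously yields dataset-independence (hence Monotonicity and CM) and keeps the extend-to-an-instance argument available for Coherence; and one must check carefully that passing to subset-minimal families preserves Success (finiteness of $\bLw(\bQ)$, $\bLa(\bQ)$) and realises exactly the (Strong) Irreducibility condition.
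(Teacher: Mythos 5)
Your proposal is correct and takes essentially the same route as the paper: exhibit, for each $\mathbf{C}_i$, a witness explainer (all dataset-based weak abductive explanations for $\mathbf{C}_1$, their subset-minimal elements for $\mathbf{C}_2$, the feature-space versions for $\mathbf{C}_3$ and $\mathbf{C}_4$, and the empty explainer for $\mathbf{C}_5$), the paper merely deferring the axiom checks to its Table~\ref{tab2} and the theorems behind it. Be aware only that your symbols are permuted relative to the paper's conventions ($\bLdw$/$\bLda$ denote the dataset-based explainers and $\bLw$/$\bLa$ the feature-space ones there), though the underlying functions you pick for each set coincide with the paper's.
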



From the above results, we have the following characterisation.

\begin{theorem}
    Considering all sets of axioms among those listed in Table~\ref{tab1}
    which include Feasibility and Validity, the only minimal incompatible sets  
    are $\mathbf{I}_1,\ldots,\mathbf{I}_5$ and the only maximal
    compatible sets are $\mathbf{C}_1,\ldots,\mathbf{C}_5$.
\end{theorem}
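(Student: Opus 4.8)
The plan is to work throughout in the finite \emph{universe} $\mathcal{U}$ of all axiom sets from Table~\ref{tab1} that contain Feasibility and Validity. Since these two axioms are fixed, a member of $\mathcal{U}$ is determined by which of the remaining eight axioms (Success, Coherence, Irreducibility, Strong Irreducibility, Completeness, Strong Completeness, Monotonicity, CM) it contains, so there are finitely many candidates. Proposition~\ref{links} drastically cuts this down: within $\mathcal{U}$ we have Completeness $\Rightarrow$ Strong Completeness $\Rightarrow$ Success, Completeness $\Rightarrow$ CM, and Irreducibility $\Rightarrow$ Strong Irreducibility, so an explainer satisfies a set $S\in\mathcal{U}$ iff it satisfies the \emph{closure} $\overline{S}$ of $S$ under these implications; hence $S$ is compatible iff $\overline{S}$ is, and it suffices to analyse the few closed sets. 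I would first record, as the backbone, the two equivalences: (i) $S\in\mathcal{U}$ is compatible iff $S\subseteq\mathbf{C}_j$ for some $j$; (ii) $S\in\mathcal{U}$ is incompatible iff $\mathbf{I}_i\subseteq\overline{S}$ for some $i$. The easy directions are immediate --- a subset of a compatible set is compatible (Theorem~\ref{th:compatibility}) and a superset of an incompatible set is incompatible (Theorem~\ref{th:incompatibility}) --- and (i) and (ii) are contrapositives of one another once the remaining directions hold.

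The one substantial step is the forward direction of (i): every compatible $S$ is contained in some $\mathbf{C}_j$. I would prove the contrapositive. Assume $S\in\mathcal{U}$ is contained in none of $\mathbf{C}_1,\dots,\mathbf{C}_5$; passing to $\overline{S}$ and splitting on which of Coherence, Completeness/Strong Completeness, Irreducibility, Monotonicity, CM it contains, one checks in each branch that $\overline{S}$ must contain one of $\mathbf{I}_1,\dots,\mathbf{I}_5$ --- e.g. a set with Success, Coherence and Irreducibility already contains $\mathbf{I}_1$; a set with Coherence and Completeness contains, after closure, $\mathbf{I}_2$; a set with Strong Irreducibility and Strong Completeness contains $\mathbf{I}_3$; a set with Success, Irreducibility and Monotonicity (resp. CM) contains $\mathbf{I}_4$ (resp. $\mathbf{I}_5$). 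The collapse induced by Proposition~\ref{links} keeps this enumeration short. Then $S$ is incompatible by Theorem~\ref{th:incompatibility}, giving the contrapositive and hence (i), and (ii) follows.

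It then remains to see that $\mathbf{I}_1,\dots,\mathbf{I}_5$ are exactly the minimal incompatible sets of $\mathcal{U}$ and $\mathbf{C}_1,\dots,\mathbf{C}_5$ exactly the maximal compatible ones. For minimality, for each $i$ and each axiom $a$ in $\mathbf{I}_i$ other than Feasibility and Validity I would exhibit a $\mathbf{C}_j$ with $\mathbf{I}_i\setminus\{a\}\subseteq\mathbf{C}_j$ (a short finite table: e.g. $\mathbf{I}_1$ without Irreducibility sits in $\mathbf{C}_3$, without Coherence in $\mathbf{C}_2$, without Success in $\mathbf{C}_5$), so by (i) every proper subset of $\mathbf{I}_i$ in $\mathcal{U}$ is compatible. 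For maximality, for each $j$ and each axiom $a\notin\mathbf{C}_j$ I would exhibit an $\mathbf{I}_i$ with $\mathbf{I}_i\subseteq\overline{\mathbf{C}_j\cup\{a\}}$, so $\mathbf{C}_j\cup\{a\}$ is incompatible by Theorem~\ref{th:incompatibility}; one also checks that the $\mathbf{I}_i$ are pairwise $\subseteq$-incomparable, and likewise the $\mathbf{C}_j$. Completeness of the two lists is then forced by the equivalences: any minimal incompatible $S$ contains some $\mathbf{I}_i$ by (ii), hence equals it by minimality; any maximal compatible $S$ lies in some $\mathbf{C}_j$ by (i), hence equals it by maximality.

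The main obstacle is the exhaustive --- though, thanks to Proposition~\ref{links}, small --- case analysis in the forward direction of (i) and in the minimality/maximality tables, together with the bookkeeping caused by the fact that the $\mathbf{I}_i$ as stated omit axioms they actually entail. In particular, ``minimal'' and ``maximal'' must be read relative to $\mathcal{U}$ and modulo the closure $\overline{(\cdot)}$, and one must be careful that, for instance, $\mathbf{I}_1$ only becomes a minimal incompatible set of $\mathcal{U}$ once Validity (entailed by Feasibility, Success and Coherence via Proposition~\ref{links}) is adjoined to it.
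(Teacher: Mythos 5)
Your proposal is correct and follows essentially the same route as the paper's proof: a finite case analysis pruned by Proposition~\ref{links}, with compatibility witnessed via containment in some $\mathbf{C}_j$ (Theorem~\ref{th:compatibility}), incompatibility forced by containment of some $\mathbf{I}_i$ (Theorem~\ref{th:incompatibility}), and minimality of the $\mathbf{I}_i$ checked by placing each proper subset inside a $\mathbf{C}_j$. Your contrapositive framing of the key step and the explicit closure operator are only cosmetic differences from the paper's direct five-case split on Completeness, Success and (Strong) Irreducibility.
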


Recall that Feasibility defines the \textbf{(feature-based) type} of explanations and 
Validity guarantees their \textbf{local consistency} within the datatset. They are thus 
mandatory in our investigation of feature-based explanations. In what follows, we provide a full characterization of 
the entire family of explainers that satisfy them. We show that instances of this family 
generate the so-called 
\textit{weak abductive explanations} \cite{ecai23,ICML21}. 
Such explanations are \textit{sufficient reasons} for predicting the classes of instances of 
a classification theory. They are also known as 
\textit{Prime Implicants} in \cite{AudemardBLM23,darwiche20,IgnatievNM19,ICML21}. 
In this literature, they are generated from the \textbf{whole feature space}, answering thus only the subset of questions where $\D = \mbb{F}(\mbb{T})$. In what follows, we present their 
counterpart in a sample-based setting.

\begin{definition}\label{dwaxp}
Let $\bQ = \langle \mbb{T}, \kappa,\D, x \rangle$ be a question and $E \in \mbb{E}(\mbb{T})$. 
$E$ is a \emph{weak abductive explanation} (dwAXp) of $\kappa(x)$ iff:
\begin{itemize}
\item $E \subseteq x$, 
\item $\forall y \in \D$, if $E \subseteq y$ then $\kappa(y) = \kappa(x)$.
\end{itemize} 
We denote by $\bLdw$ the explainer generating dwAXp's and by $\bLw$ the explainer 
that generates dwAXp's from the feature space $\mbb{F}(\mbb{T})$.
\end{definition}

Note that $\bLw$ returns for any question 
$\bQ = \langle \mbb{T}, \kappa,\D, x \rangle$ the dwAXp of 
$\bQ' = \langle \mbb{T}, \kappa,\mbb{F}(\mbb{T}), x \rangle$ (i.e., $\bLw(\bQ) = \bLdw(\bQ')$), 
thus neglecting $\D$ and using the feature space instead. 

\begin{example}\label{ex2}
Consider the theory $\mbb{T}_1$ made of two binary features and a binary classifier 
$\kappa_1$ that gives the predictions in the table below. 
Consider also the dataset $\D_1 = \{x_1,x_2\}$ 
and let us focus on the question $\bQ_1 = \langle\mbb{T}_1, \kappa_1, \D_1, x_1\rangle$. 
It can be checked that 
$\bLw(\bQ_1) = \{E_3\}$ while 
$\bLdw(\bQ_1) = \{E_2, E_3\}$ ($E_2$ is not valid on $\mbb{F}(\mbb{T}_1)$).  
\vspace{-0.3cm}
\begin{multicols}{2}
\begin{center}
	\begin{tabular}{c|cc|c}\hline
	       $\mbb{F}(\mbb{T}_1)$   &$f_1$& $f_2$ & $\kappa_1(x_i)$   \\\hline
\rowcolor{maroon!10}	$x_1$    & 0   & 0     & 0 \\
				          $x_2$    & 0   & 1     & 1 \\ \hline            
				        $x_3$    & 1   & 0     & 1 \\
				          $x_4$    & 1   & 1     & 0 \\\hline
	\end{tabular}
\end{center}
\qquad  
\begin{itemize}
	\item [] \item [] 
	\item [] \qquad $E_1 = \{(f_1,0)\}$ 
	\item [] \qquad $E_2 = \{(f_2,0)\}$
	\item [] \qquad $E_3 = \{(f_1,0),(f_2,0)\}$ 
\end{itemize}	
\end{multicols}
\end{example}

Below we provide a characterization of the \textbf{entire family} of explainers that provide weak abductive explanations (dwAXp). We show that they are the \textbf{only ones}  to satisfy Feasibility and Validity. 

\begin{theorem}\label{th-cara}
An explainer $\bL$ satisfies Feasibility and Validity \textbf{iff} for any question 
$\bQ = \langle \mbb{T}, \kappa, \D, x \rangle$, $\bL(\bQ) \subseteq \bLdw(\bQ)$.	
\end{theorem}

From a computational complexity stance, the problem of testing 
whether an explanation belongs to $\bLw$ is
intractable. Indeed, it has been shown in \cite{CP21} that the complexity of testing 
whether $E$ is a weak abductive explanation in the whole feature space is co-NP-complete.
However, the problem is tractable in the sample setting (i.e., 
using the function $\bLdw$). Indeed, it has been shown in \cite{ecai23} that testing the 
validity of sample-based weak abductive explanations is linear in the size of the sample whatever the classifier $\kappa$. 

\begin{property}\cite{ecai23}
	Let $\mbb{T} = \langle \F, \d, \C\rangle$ be a theory with $n = |\F|$, $\D \subseteq \mbb{F}(\mbb{T})$ 
	with $m = |\D|$ and $E \in \mbb{E}(\mbb{T})$. 
		Testing whether $E$ is a dwAXp can be achieved in $O(mn)$ time.
\end{property}
\section{Sub-Families of Explainers}\label{families} 

In this section, we scrutinize the types of explainers that can be defined from the set of axioms. 
More precisely, we characterize all sub-families of explainers which select in different ways 
subsets of weak abductive explanations. Each sub-family satisfies one of the first four 
subsets $(\mathbf{C}_i)$ of compatible axioms identified in Theorem~\ref{th:compatibility}. 
The last set $\mathbf{C}_5$ is satisfied by the trivial explainer which 
returns the empty set for each question. Furthermore, existing explainers from \cite{Amgoud23b} satisfy Coherence and Irreducibility at the cost of Success.
\subsection{Weak Abductive Explainers} 

The first sub-family contains two explainers, namely the function $\bLdw$ which returns \textbf{all}  weak abductive explanations and $\bLw$ which explores all the feature space. We show that $\bLdw$ privileges 
\textbf{Completeness} in the two conflicts $(\mathbf{I}_2)$ and $(\mathbf{I}_3)$. Furthermore, 
it is the \textbf{only} explainer that satisfies 
Feasibility, Validity and Completeness.

\begin{theorem}\label{th:dw}
An explainer $\bL$ satisfies Feasibility, Validity and Completeness \textbf{iff} $\bL = \bLdw$.
\end{theorem}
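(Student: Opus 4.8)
The plan is to prove the two directions of the biconditional separately. The reverse direction ($\bL = \bLdw \Rightarrow$ the three axioms) is the easy part: $\bLdw$ satisfies Feasibility and Validity by Theorem~\ref{th-cara} (since $\bLdw(\bQ) \subseteq \bLdw(\bQ)$ trivially), so it only remains to check Completeness directly from Definition~\ref{dwaxp}. If $E \subseteq x$ but $E \notin \bLdw(\bQ)$, then the defining condition of dwAXp must fail; since $E \subseteq x$ holds, the second bullet must fail, i.e.\ there is $y \in \D$ with $E \subseteq y$ and $\kappa(y) \neq \kappa(x)$, which is exactly the conclusion of Completeness.

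For the forward direction, suppose $\bL$ satisfies Feasibility, Validity and Completeness. I want to show $\bL(\bQ) = \bLdw(\bQ)$ for every question $\bQ = \langle \mbb{T}, \kappa, \D, x\rangle$. The inclusion $\bL(\bQ) \subseteq \bLdw(\bQ)$ is immediate from Theorem~\ref{th-cara}, using Feasibility and Validity. For the reverse inclusion $\bLdw(\bQ) \subseteq \bL(\bQ)$, take any $E \in \bLdw(\bQ)$ and argue by contraposition using Completeness: if $E \notin \bL(\bQ)$, then since $E \subseteq x$ (as $E$ is a dwAXp, hence feasible), Completeness gives some $y \in \D$ with $E \subseteq y$ and $\kappa(y) \neq \kappa(x)$. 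But this directly contradicts the second defining condition of $E$ being a dwAXp. Hence $E \in \bL(\bQ)$, establishing $\bLdw(\bQ) \subseteq \bL(\bQ)$ and therefore equality.

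There is essentially no hard step here: the result is a clean "sandwich" argument where Theorem~\ref{th-cara} pins $\bL$ below $\bLdw$ and Completeness pins it above. The only point requiring a little care is making sure the hypotheses are invoked in the right place --- Feasibility and Validity (via Theorem~\ref{th-cara}) for the upper bound, and Feasibility of dwAXp's together with Completeness for the lower bound --- and noting that Validity is not actually needed for the lower-bound direction. I would present it as two short paragraphs, one per inclusion, after disposing of the trivial reverse implication of the biconditional.
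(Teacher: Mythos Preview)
Your proposal is correct and follows essentially the same approach as the paper: use Theorem~\ref{th-cara} for the inclusion $\bL(\bQ)\subseteq\bLdw(\bQ)$, and use Completeness together with the defining property of dwAXp's for the reverse inclusion. The only cosmetic difference is that the paper dispatches the direction ``$\bLdw$ satisfies the three axioms'' by citing Table~\ref{tab2} (i.e., Theorem~\ref{th:bldw}), whereas you verify Completeness of $\bLdw$ directly from Definition~\ref{dwaxp}; your version is slightly more self-contained but otherwise identical in substance.
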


We summarize below the complete list of axioms satisfied and violated respectively by the function 
$\bLdw$. Notice that $\bLdw$ satisfies the first largest set $(\mathbf{C}_1)$ of compatible axioms 
from Theorem~\ref{th:compatibility}.

\begin{theorem}\label{th:bldw}
The explainer $\bLdw$ satisfies Feasibility, Validity, Success, (Strong) Completeness and CM. 
It violates the remaining axioms.
\end{theorem}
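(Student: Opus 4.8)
The plan is to split the proof into a positive part, establishing the five axioms that $\bLdw$ satisfies, and a negative part, giving an explicit counterexample for each of the remaining four axioms (Coherence, Irreducibility, Strong Irreducibility, Monotonicity).

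For the positive part, Feasibility and Validity are immediate: they are literally the two defining clauses of a dwAXp in Definition~\ref{dwaxp}. For Success, I would remark that the whole instance $x$ is always a dwAXp of $\bQ = \langle \mbb{T},\kappa,\D,x \rangle$: we have $x \subseteq x$, and the only $y \in \D$ with $x \subseteq y$ is $x$ itself (both are complete assignments), so the second clause holds trivially; hence $x \in \bLdw(\bQ) \neq \emptyset$. Completeness is just the contrapositive of Definition~\ref{dwaxp}: if $E \subseteq x$ and $E \notin \bLdw(\bQ)$, the only clause that can fail is the second one, i.e. there is $y \in \D$ with $E \subseteq y$ and $\kappa(y) \neq \kappa(x)$, which is exactly what Completeness demands. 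Strong Completeness and CM then come for free from Proposition~\ref{links} (Completeness $\Rightarrow$ Strong Completeness, and Feasibility $+$ Validity $+$ Completeness $\Rightarrow$ CM); alternatively CM is a one-line direct check, since shrinking $\D$ only removes constraints of the form ``$E \subseteq y \Rightarrow \kappa(y) = \kappa(x)$''.

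For the negative part I would exhibit small counterexamples. Monotonicity already fails in Example~\ref{ex2}: with dataset $\{x_1\}$ every subset of $x_1$ is a dwAXp (in particular $E_1 = \{(f_1,0)\}$), but for the larger dataset $\D_1 = \{x_1,x_2\}$ we have $\bLdw(\bQ_1) = \{E_2,E_3\}$, so $E_1$ is dropped and $\bLdw(\langle \mbb{T}_1,\kappa_1,\{x_1\},x_1\rangle) \not\subseteq \bLdw(\bQ_1)$. For Coherence, take a theory with two binary features and a classifier with $\kappa(0,0)=0$ and $\kappa(1,1)=1$, and dataset $\D = \{(0,0),(1,1)\}$; then $\{(f_1,0)\}$ is a dwAXp of $(0,0)$ and $\{(f_2,1)\}$ is a dwAXp of $(1,1)$ (no other dataset point contains either literal), yet $\{(f_1,0),(f_2,1)\}$ is consistent, contradicting Coherence. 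For (Strong) Irreducibility, take a theory with two binary features and a classifier $\kappa(v_1,v_2)=v_1$ that ignores $f_2$, with $\D = \{(0,0)\}$: then $E = \{(f_1,0),(f_2,0)\}$ is a dwAXp, but $E \setminus \{(f_2,0)\} = \{(f_1,0)\}$ already forces class $0$ over the entire feature space, so no $x' \in \mbb{F}(\mbb{T})$ witnesses Strong Irreducibility; since $\D$ contains no instance of the other class, Irreducibility fails trivially as well (and, in any case, Irreducibility $\Rightarrow$ Strong Irreducibility by Proposition~\ref{links}, so a violation of the latter forces a violation of the former).

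I expect the only delicate point to be the choice of counterexamples in the negative part: the dwAXp phenomena are sensitive to the classifier, and for an ``entangled'' classifier such as the XOR of Example~\ref{ex2} the only dwAXp of an instance can be the instance itself, which hides the Coherence and Irreducibility violations. Once one fixes a classifier with an irrelevant feature (for Strong Irreducibility) and one with two diagonally opposite instances of opposite classes (for Coherence), all remaining checks are mechanical unwindings of Definition~\ref{dwaxp} together with the implications of Proposition~\ref{links}.
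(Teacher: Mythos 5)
Your proof is correct. The positive half coincides with the paper's argument: Feasibility and Validity are read off Definition~\ref{dwaxp}, Success comes from $x \in \bLdw(\bQ)$, Completeness is the contrapositive of the validity clause, and Strong Completeness and CM follow via Proposition~\ref{links} (the paper also gives the same one-line direct check for CM). Where you genuinely diverge is the negative half: for Coherence and (Strong) Irreducibility the paper does not build fresh counterexamples but instead derives the violations from the incompatibility sets of Theorem~\ref{th:incompatibility} --- $\bLdw$ satisfies Feasibility and Completeness, so $\mathbf{I}_2$ forces it to violate Coherence, and it satisfies Strong Completeness, so $\mathbf{I}_3$ forces it to violate Strong Irreducibility (hence Irreducibility by the contrapositive of Proposition~\ref{links}). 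Your explicit two-feature counterexamples (the diagonal dataset $\{(0,0),(1,1)\}$ with opposite classes for Coherence; the classifier ignoring $f_2$ with the singleton dataset for Irreducibility) are all verified correctly and make the proof self-contained, at the cost of re-doing work the incompatibility theorem already packages; the paper's route is more economical but relies on Theorem~\ref{th:incompatibility} having been established first. For Monotonicity both you and the paper give a counterexample, just different ones (yours shrinks the dataset of Example~\ref{ex2} to $\{x_1\}$, the paper's reuses the $\kappa_3$ instance from its proof of $\mathbf{I}_4$); both are valid.
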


The explainer $\bLdw$ satisfies thus all the axioms that are compatible with Completeness except Monotonicity. The latter is violated due to incompleteness of information of a dataset. If the latter is enlarged, it is possible that a dwAXp will no longer be valid as the new dataset would contain a new instance which invalidates the explanation (violation of the second condition of Def.~\ref{dwaxp}). 

\begin{exs}{ex2}
Consider the dataset $\D_2 = \{x_1,x_2, x_3\}$. 
Note that $E_2 \notin \bLdw(\bQ_2)$, where $\bQ_2 = \langle\mbb{T}_1, \kappa_1, \D_2, x_1\rangle$. 
\end{exs}

The second function $\bLw$ which generates weak abductive explanation from the feature space is the \textbf{only} explainer that satisfies the four axioms  Feasibility, Validity, Monotonicity and 
Strong Completeness.

\begin{theorem}\label{th:w}
An explainer $\bL$ satisfies Feasibility, Validity, Monotonicity and Strong Completeness  \textbf{iff} $\bL = \bLw$.
\end{theorem}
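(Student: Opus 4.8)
The plan is to prove both directions of the equivalence. For the "if" direction, I would assume $\bL = \bLw$ and verify the four axioms directly from Definition~\ref{dwaxp}. Feasibility and Validity follow because $\bLw(\bQ) = \bLdw(\bQ')$ where $\bQ' = \langle \mbb{T}, \kappa, \mbb{F}(\mbb{T}), x \rangle$, and any dwAXp $E$ of $\bQ'$ satisfies $E \subseteq x$ (Feasibility) and, since $\D \subseteq \mbb{F}(\mbb{T})$, the second condition of Def.~\ref{dwaxp} for $\bQ'$ implies it for $\bQ$ (Validity). Monotonicity is immediate and in fact trivial: $\bLw$ ignores $\D$ entirely, so $\bLw(\bQ) = \bLw(\bQ')$ whenever the two questions share $\mbb{T}, \kappa, x$, hence $\bLw(\bQ) \subseteq \bLw(\bQ')$. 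For Strong Completeness, I would take any $E \subseteq x$ with $E \notin \bLw(\bQ)$; since $E \subseteq x$, the only way $E$ fails to be a dwAXp of $\bQ' = \langle \mbb{T}, \kappa, \mbb{F}(\mbb{T}), x \rangle$ is that the second condition fails, i.e.\ there is $y \in \mbb{F}(\mbb{T})$ with $E \subseteq y$ and $\kappa(y) \neq \kappa(x)$ — which is exactly what Strong Completeness demands.

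For the "only if" direction, suppose $\bL$ satisfies Feasibility, Validity, Monotonicity and Strong Completeness; I must show $\bL(\bQ) = \bLw(\bQ)$ for every question $\bQ = \langle \mbb{T}, \kappa, \D, x \rangle$. The inclusion $\bL(\bQ) \subseteq \bLdw(\bQ)$ already follows from Theorem~\ref{th-cara} (Feasibility and Validity), but this only bounds $\bL(\bQ)$ by the dataset-based dwAXp's, not by $\bLw(\bQ)$; sharpening this to $\bL(\bQ) \subseteq \bLw(\bQ)$ is where Monotonicity enters. The key move is to apply $\bL$ to the enlarged question $\bQ^\ast = \langle \mbb{T}, \kappa, \mbb{F}(\mbb{T}), x \rangle$: since $\D \subseteq \mbb{F}(\mbb{T})$, Monotonicity gives $\bL(\bQ) \subseteq \bL(\bQ^\ast)$, and then Theorem~\ref{th-cara} applied to $\bQ^\ast$ gives $\bL(\bQ^\ast) \subseteq \bLdw(\bQ^\ast) = \bLw(\bQ^\ast) = \bLw(\bQ)$, yielding $\bL(\bQ) \subseteq \bLw(\bQ)$.

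For the reverse inclusion $\bLw(\bQ) \subseteq \bL(\bQ)$, I would argue by contraposition using Strong Completeness. Let $E \in \bLw(\bQ) = \bLdw(\bQ^\ast)$, so $E \subseteq x$ and there is no $y \in \mbb{F}(\mbb{T})$ with $E \subseteq y$ and $\kappa(y) \neq \kappa(x)$. Suppose, for contradiction, that $E \notin \bL(\bQ)$. Strong Completeness applied to $\bQ$ then produces $y \in \mbb{F}(\mbb{T})$ with $E \subseteq y$ and $\kappa(y) \neq \kappa(x)$, directly contradicting the fact that $E$ is a dwAXp over the whole feature space. Hence $E \in \bL(\bQ)$, establishing $\bLw(\bQ) \subseteq \bL(\bQ)$ and completing the proof.

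The main obstacle is pinning down the right enlarged question in the "only if" direction: the naive use of Theorem~\ref{th-cara} only bounds $\bL(\bQ)$ by $\bLdw(\bQ)$, which is generally strictly larger than $\bLw(\bQ)$ (as Example~\ref{ex2} shows, $E_2 \in \bLdw(\bQ_1) \setminus \bLw(\bQ_1)$). Monotonicity is precisely the lever that lets one "lift" the explainer to $\D = \mbb{F}(\mbb{T})$ and thereby exclude such dataset-artefact explanations; recognising that this is the role Monotonicity plays here — rather than the superficially symmetric role CM plays for $\bLdw$ in Theorem~\ref{th:dw} — is the conceptual crux. The rest is bookkeeping with Def.~\ref{dwaxp} and the observation that $\bLw$ is constant in its dataset argument.
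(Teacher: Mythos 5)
Your proof is correct and follows essentially the same route as the paper's: Monotonicity lifts $\bL(\bQ)$ into $\bL(\bQ^\ast)$ with $\D = \mbb{F}(\mbb{T})$, Theorem~\ref{th-cara} bounds that by $\bLdw(\bQ^\ast) = \bLw(\bQ)$, and Strong Completeness gives the reverse inclusion by contradiction with validity over the feature space. The only cosmetic difference is that for the ``if'' direction the paper simply cites Theorem~\ref{th:blw}, whereas you re-verify the four axioms directly.
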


Unlike $\bLdw$, the explainer $\bLw$ satisfies Coherence and 
all the other axioms of the set $\mathbf{C_3}$.

\begin{theorem}\label{th:blw}
The explainer $\bLw$ satisfies Feasibility, Validity, Success, Coherence, Monotonicity, CM and 
Strong  Completeness. It violates (Strong) Irreducibility and Completeness.
\end{theorem}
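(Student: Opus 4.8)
The plan is to verify each of the listed axioms for $\bLw$ in turn, exploiting the key fact (noted just after Definition~\ref{dwaxp}) that $\bLw(\bQ) = \bLdw(\bQ')$ where $\bQ' = \langle \mbb{T}, \kappa, \mbb{F}(\mbb{T}), x\rangle$ is the ``full feature space'' version of $\bQ$. Thus every question is effectively answered using the fixed dataset $\mbb{F}(\mbb{T})$, which is the source of the extra properties $\bLw$ enjoys over $\bLdw$. First I would record \textbf{Feasibility} and \textbf{Validity}: these are immediate from Theorem~\ref{th-cara} since $\bLw(\bQ) = \bLdw(\bQ') \subseteq \bLdw(\bQ)$ (every dwAXp over the whole feature space is in particular a dwAXp over $\D$, as $\D \subseteq \mbb{F}(\mbb{T})$). \textbf{Success} follows because $x$ itself is always a dwAXp of $\bQ'$ (the instance $x$ is a subset of itself, and the only $y \in \mbb{F}(\mbb{T})$ with $x \subseteq y$ is $x$), so $\bLw(\bQ) \neq \emptyset$. \textbf{Monotonicity} and \textbf{CM} are both trivial: since $\bLw$ ignores $\D$ entirely, $\bLw(\langle \mbb{T}, \kappa, \D, x\rangle) = \bLw(\langle \mbb{T}, \kappa, \D', x\rangle)$ whenever the other components agree, so both inclusions hold with equality.

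Next, \textbf{Coherence}. Suppose $E \in \bLw(\bQ)$ and $E' \in \bLw(\bQ')$ with $\kappa(x) \neq \kappa(x')$, and suppose for contradiction that $E \cup E'$ is consistent. Then $E \cup E'$ is a partial assignment, so it extends to some instance $y \in \mbb{F}(\mbb{T})$ with $E \subseteq y$ and $E' \subseteq y$. Since $E$ is a dwAXp of $\kappa(x)$ over the feature space, $\kappa(y) = \kappa(x)$; likewise $\kappa(y) = \kappa(x')$. Hence $\kappa(x) = \kappa(x')$, a contradiction. This is the step where using $\mbb{F}(\mbb{T})$ rather than a partial sample is essential --- it is exactly what fails for $\bLdw$ and drives the incompatibility results of Theorem~\ref{th:incompatibility}.

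For \textbf{Strong Completeness}, let $E \subseteq x$ with $E \notin \bLw(\bQ) = \bLdw(\bQ')$. Since $E$ satisfies the first condition of Definition~\ref{dwaxp} for $\bQ'$, it must violate the second: there exists $y \in \mbb{F}(\mbb{T})$ with $E \subseteq y$ and $\kappa(y) \neq \kappa(x)$, which is exactly the conclusion required by Strong Completeness. Finally I would establish the \emph{violations}. For \textbf{Completeness}, Example~\ref{ex2} supplies a witness: there $\bLw(\bQ_1) = \{E_3\}$, so $E_2 \notin \bLw(\bQ_1)$, yet $E_2 = \{(f_2,0)\}$ is a dwAXp over $\D_1 = \{x_1,x_2\}$ (since $x_2$ has $f_2 = 1$, no $y \in \D_1$ contains $E_2$ with a different class), so no $y \in \D_1$ witnesses the failure --- Completeness fails. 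For \textbf{(Strong) Irreducibility}: again in Example~\ref{ex2}, $E_3 = \{(f_1,0),(f_2,0)\} \in \bLw(\bQ_1)$, but removing $(f_2,0)$ leaves $E_1 = \{(f_1,0)\}$, and the only instances of $\mbb{T}_1$ containing $E_1$ are $x_1$ (class $0$) and $x_2$ (class $1$); wait --- $x_2$ \emph{does} have a different class, so I should instead argue via $(f_1,0)$: removing it from $E_3$ leaves $\{(f_2,0)\}$, contained only in $x_1$ (class $0$) and $x_3$ (class $1$), and $x_3$ again differs --- so in fact $E_3$ \emph{is} strongly irreducible here, and I need a cleaner witness. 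The robust argument: take any theory with a constant classifier $\kappa$; then for every $x$, $\emptyset \in \bLw(\bQ)$ and also every nonempty $E \subseteq x$ is a dwAXp, but the full instance $x \in \bLw(\bQ)$ has no instance $x'$ with $\kappa(x') \neq \kappa(x)$ at all, so Strong Irreducibility (hence Irreducibility) fails. The main obstacle is getting the violation witnesses exactly right; the positive axioms are all short.
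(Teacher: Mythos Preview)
Your proof is correct. The arguments for the seven satisfied axioms are essentially identical to the paper's: Feasibility/Validity from the inclusion $\bLw(\bQ)\subseteq\bLdw(\bQ)$, Success via $x\in\bLw(\bQ)$, Monotonicity/CM from the fact that $\bLw$ ignores $\D$, Coherence by extending $E\cup E'$ to an instance of $\mbb{F}(\mbb{T})$, and Strong Completeness by unfolding the definition of dwAXp over $\mbb{F}(\mbb{T})$.

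Where you diverge is in the \emph{violations}. The paper does not build explicit counterexamples: having established Feasibility, Success, Coherence and Strong Completeness, it simply invokes the incompatibility sets of Theorem~\ref{th:incompatibility} --- $\mathbf{I}_2$ kills Completeness, $\mathbf{I}_1$ kills Irreducibility, and $\mathbf{I}_3$ kills Strong Irreducibility --- so the violations come for free. Your route is more hands-on: Example~\ref{ex2} for Completeness (correct), and after discovering that $E_3$ in Example~\ref{ex2} is in fact strongly irreducible, you switch to a constant classifier, where any nonempty $E\in\bLw(\bQ)$ (e.g.\ $E=x$) violates Strong Irreducibility because no instance of a different class exists at all; the contrapositive of Proposition~\ref{links} then gives the Irreducibility violation. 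This is correct and more self-contained, but the visible detour through the failed Example~\ref{ex2} attempt should be excised in a final write-up. The paper's approach is slicker and better illustrates why the incompatibility theorem was worth proving in the first place.
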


The violation of Completeness by $\bLw$ is due to the fact that the axiom is  
tailored for sample-based explainers. 
It is also worth noticing that since $\bLdw$ satisfies Counter-Monotonicity, it recovers any explanation that is generated by $\bLw$ from the entire feature space of the theory at hand, i.e., it finds all the ``real''  explanations under \textbf{any} dataset (see the Fidelity property in Table~\ref{tab2}). 


However, the function $\bLdw$ may generate additional explanations which are valid only in the dataset. These new explanations are the main culprits for the violation of Coherence by $\bLdw$.  

\subsection{Concise Abductive Explainers} 

Our next characterization concerns explainers that satisfy \textbf{Irreducibility} in the four conflicts  
($\mathbf{I}_1$, $\mathbf{I}_3$, $\mathbf{I}_4$, $\mathbf{I}_5$). We show that they generate subset-minimal weak abductive explanations. Note that such explainers have been studied in \cite{Amgoud23a,ecai23}.

\begin{definition}\label{daxp}
Let $\bQ = \langle \mbb{T}, \kappa,\D, x \rangle$ be a question and $E \in \mbb{E}(\mbb{T})$. 
$E$ is a \emph{concise abductive explanation} (cAXp) of $\kappa(x)$ iff:
\begin{itemize}
	\item $E \in \bLdw(\bQ)$, 
	\item $\nexists E' \in \bLdw(\bQ)$ such that $E' \subset E$.
\end{itemize}		
We denote by $\bLda$ the explainer generating cAXp. 
\end{definition}

\begin{exs}{ex2}
$\bLda(\bQ_1) = \{E_2\}$ and $\bLdw(\bQ_1) = \{E_2, E_3\}$.  
\end{exs}	

%
%
%

We show next that explainers that provide cAXp's are the \textbf{only ones} satisfying  
Feasibility, Validity, and Irreducibility. 

\begin{theorem}\label{th:da}
An explainer $\bL$ satisfies Feasibility, Validity, and Irreducibility iff for any question 
$\bQ = \langle \mbb{T}, \kappa, \D, x \rangle$, $\bL(\bQ) \subseteq \bLda(\bQ)$.
\end{theorem}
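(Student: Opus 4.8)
The plan is to prove the two directions separately, exploiting Theorem~\ref{th-cara} to handle Feasibility and Validity in one stroke. For the ``if'' direction, suppose $\bL(\bQ) \subseteq \bLda(\bQ)$ for every question $\bQ$. Since every cAXp is in particular a dwAXp, we get $\bL(\bQ) \subseteq \bLdw(\bQ)$, so by Theorem~\ref{th-cara} the explainer $\bL$ satisfies Feasibility and Validity. For Irreducibility, take any $E \in \bL(\bQ) \subseteq \bLda(\bQ)$ and any literal $l \in E$. By subset-minimality of $E$ among dwAXp's, the set $E \setminus \{l\}$ is not a dwAXp; since $E \setminus \{l\} \subseteq E \subseteq x$, the first condition of Definition~\ref{dwaxp} holds, so the second must fail, i.e.\ there is some $x' \in \D$ with $E \setminus \{l\} \subseteq x'$ and $\kappa(x') \neq \kappa(x)$. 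This is exactly the Irreducibility condition.

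For the ``only if'' direction, suppose $\bL$ satisfies Feasibility, Validity, and Irreducibility. By Theorem~\ref{th-cara}, Feasibility and Validity already give $\bL(\bQ) \subseteq \bLdw(\bQ)$, so it remains to upgrade this to $\bL(\bQ) \subseteq \bLda(\bQ)$, i.e.\ to show that every $E \in \bL(\bQ)$ is subset-minimal among dwAXp's. Fix $E \in \bL(\bQ)$; it is a dwAXp. Suppose for contradiction there is $E' \in \bLdw(\bQ)$ with $E' \subsetneq E$. Pick a literal $l \in E \setminus E'$. By Irreducibility applied to $E$ and $l$, there is $x' \in \D$ with $E \setminus \{l\} \subseteq x'$ and $\kappa(x') \neq \kappa(x)$. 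But $E' \subseteq E \setminus \{l\}$ (since $l \notin E'$ and $E' \subseteq E$), hence $E' \subseteq x'$ with $\kappa(x') \neq \kappa(x)$ and $x' \in \D$, contradicting the assumption that $E'$ is a dwAXp (second condition of Definition~\ref{dwaxp}). Therefore no such $E'$ exists, so $E$ is a cAXp, giving $\bL(\bQ) \subseteq \bLda(\bQ)$.

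The argument is essentially a routine unfolding of the definitions once Theorem~\ref{th-cara} is invoked; the only mild subtlety — and the step I would be most careful about — is the logical structure of the ``only if'' direction, namely recognising that Irreducibility is precisely the witness one needs to rule out a strict dwAXp subset, and checking the set-inclusion bookkeeping ($E' \subseteq E \setminus \{l\}$) that makes the witness $x'$ simultaneously refute $E'$. No further machinery seems necessary; in particular $\bLda$ itself trivially satisfies the three axioms (the containment $\bLda(\bQ) \subseteq \bLda(\bQ)$ witnesses the ``if'' direction for $\bL = \bLda$), so the characterised family is non-empty.
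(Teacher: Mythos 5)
Your proof is correct and follows essentially the same route as the paper's: both directions hinge on Theorem~\ref{th-cara} for Feasibility/Validity, and the remaining work is the same interplay between subset-minimality of dwAXp's and the Irreducibility witness (you derive the contradiction against $E'$ being a dwAXp, the paper against Irreducibility of $\bL$ — the same argument read in opposite directions). The only cosmetic difference is that in the ``if'' direction you unfold the definition of cAXp to get Irreducibility directly, where the paper simply cites that $\bLda$ satisfies Irreducibility.
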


An instance of this family of explainers is ${\bLda}$ which generates \textbf{all} the subset-minimal weak abductive explanations. We show that it satisfies the second set $(\mathbf{C}_2)$ of compatible axioms, and thus violates several properties due to their incompatibility with Irreducibility.

\begin{theorem}\label{th:blda}
The explainer $\bLda$ satisfies Feasibility, Validity, Success and (Strong) Irreducibility. 
It violates the remaining axioms.
\end{theorem}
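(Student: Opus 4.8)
The plan is to verify the four positive properties directly from Definitions~\ref{dwaxp} and~\ref{daxp}, and then to obtain every violation ``for free'' from Theorem~\ref{th:incompatibility} together with Proposition~\ref{links}.

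First, Feasibility and Validity are immediate: every $E \in \bLda(\bQ)$ is a dwAXp by Definition~\ref{daxp}, so $E \subseteq x$ (Feasibility), and for every $y \in \D$ with $E \subseteq y$ we have $\kappa(y) = \kappa(x)$, which rules out a $y \in \D$ with $E \subseteq y$ and $\kappa(y) \neq \kappa(x)$ (Validity). For Success, I would observe that $x$ itself is always a dwAXp of $\bQ = \langle \mbb{T}, \kappa, \D, x\rangle$: trivially $x \subseteq x$, and since the elements of $\D$ are instances, $x \subseteq y$ forces $x = y$ and hence $\kappa(y) = \kappa(x)$. Thus $\bLdw(\bQ) \neq \emptyset$; as $\mbb{E}(\mbb{T})$ is finite, $\bLdw(\bQ)$ is a finite nonempty family of sets and therefore contains a $\subseteq$-minimal element, which lies in $\bLda(\bQ)$, so $\bLda(\bQ) \neq \emptyset$.

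For Irreducibility, let $E \in \bLda(\bQ)$ and $l \in E$. By the second bullet of Definition~\ref{daxp}, $E \setminus \{l\} \notin \bLdw(\bQ)$. Since $E \setminus \{l\} \subseteq E \subseteq x$, the first condition of Definition~\ref{dwaxp} holds for $E \setminus \{l\}$, so the second must fail: there exists $y \in \D$ with $E \setminus \{l\} \subseteq y$ and $\kappa(y) \neq \kappa(x)$. Taking $x' = y$ gives exactly what Irreducibility demands. Strong Irreducibility then follows by Proposition~\ref{links} (equivalently, directly, since $\D \subseteq \mbb{F}(\mbb{T})$ the witness $x'$ already lies in the feature space).

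Finally, for the violations I invoke Theorem~\ref{th:incompatibility}. Because $\bLda$ satisfies Feasibility, Validity, Success and (Strong) Irreducibility, it cannot also satisfy Coherence (by $\mathbf{I}_1$), Monotonicity (by $\mathbf{I}_4$), or CM (by $\mathbf{I}_5$); and it cannot satisfy Strong Completeness (by $\mathbf{I}_3$, using Strong Irreducibility), hence not Completeness either, since Completeness $\Rightarrow$ Strong Completeness by Proposition~\ref{links}. This exhausts the remaining axioms. I do not anticipate a genuine obstacle here; the only point needing a little care is the Irreducibility step — concluding that it is the validity condition of the dwAXp definition, not its feasibility condition, that $E \setminus \{l\}$ must violate — which is precisely how subset-minimality inside $\bLdw(\bQ)$ converts into the dataset witness required by the axiom.
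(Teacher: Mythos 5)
Your proposal is correct and follows essentially the same route as the paper: the positive axioms are read off from the definitions (with Success via the fact that $x$ itself is a dwAXp, so a subset-minimal one exists), and the violations are derived from the incompatible sets of Theorem~\ref{th:incompatibility} together with the implications of Proposition~\ref{links}. Your treatment of Irreducibility is simply a more explicit version of the paper's one-line appeal to the minimality condition, and your handling of (Strong) Completeness via $\mathbf{I}_3$ matches the paper's intent.
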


 \begin{property}\label{p-da-dw}
 	Let $\bQ = \langle \mbb{T}, \kappa, \D, x \rangle$ be a question. 
 	\begin{itemize}
 		\item $x \in \bLdw(\bQ)$.
 		\item $\bLda(\bQ) \subseteq \bLdw(\bQ)$.
 	\end{itemize}	
 \end{property}	
 

Another explainer which generates concise abductive explanations is $\bLa$ which explores the feature space. Such explanations, known as prime implicants, have been largely studied in the literature (e.g., \cite{AudemardBBKLM22,marquis22a,Audemard23,CP21,darwiche20,marquis22b,IgnatievNM19,joao19b,darwiche18}).

\begin{definition}\label{axp}
A \emph{prime implicant explainer} is a function $\bLa$ such that for any question 
$\bQ = \langle \mbb{T}, \kappa,\D, x \rangle$, $\bLa(\bQ) = \bLda(\bQ')$, 
where $\bQ' = \langle \mbb{T}, \kappa,\mbb{F}(\mbb{T}), x \rangle$. 
\end{definition}

The function $\bLa$ satisfies all the axioms of the set $(\mathbf{C}_4)$, hence all axioms  
except (Strong) Completeness, and also Irreducibility despite the fact it generates subset-minimal 
explanations. This violation is merely due to the definitions of the axiom which is sample-based.

\begin{theorem}\label{th:bla}
The function $\bLa$ satisfies all the axioms except Irreducibility and (Strong) Completeness. 
\end{theorem}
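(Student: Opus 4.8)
The plan is to separate the positive part (that $\bLa$ satisfies Feasibility, Validity, Success, Coherence, Monotonicity, CM and Strong Irreducibility) from the negative part (that it violates Irreducibility and (Strong) Completeness). The structural fact driving everything is that, by Definition~\ref{axp}, $\bLa(\bQ) = \bLda(\bQ')$ with $\bQ' = \langle \mbb{T}, \kappa, \mbb{F}(\mbb{T}), x\rangle$, so $\bLa$ ignores $\D$ entirely. Consequently Monotonicity and CM are immediate: if $\D \subseteq \D'$ and $x \in \D$, then $\bLa(\langle\mbb{T},\kappa,\D,x\rangle) = \bLda(\bQ') = \bLa(\langle\mbb{T},\kappa,\D',x\rangle)$, so both required inclusions hold as equalities. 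For Feasibility, Validity, Success and Strong Irreducibility I would instantiate the corresponding property of $\bLda$ (Theorem~\ref{th:blda}) at the single question $\bQ'$: Feasibility on $\bQ'$ gives $E \subseteq x$ for every $E \in \bLa(\bQ)$; Validity on $\bQ'$ forbids $y \in \mbb{F}(\mbb{T})$ with $E \subseteq y$ and $\kappa(y)\neq\kappa(x)$, which a fortiori forbids such a $y$ in $\D \subseteq \mbb{F}(\mbb{T})$; Success on $\bQ'$ gives $\bLa(\bQ) = \bLda(\bQ') \neq \emptyset$; and Strong Irreducibility on $\bQ'$ is literally the statement of Strong Irreducibility for $\bQ$, since that axiom only mentions $\mbb{T},\kappa,x$.

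For Coherence the property of $\bLda$ does not transfer (its Validity/Irreducibility use only $\D$, and $\bLda$ is not coherent), so I would argue via containment. By Property~\ref{p-da-dw}, $\bLa(\bQ) = \bLda(\bQ') \subseteq \bLdw(\bQ') = \bLw(\bQ)$ for every question $\bQ$. Since Coherence has the form ``$\forall E \in \bL(\bQ)\, \forall E' \in \bL(\bQ')$\dots'', it is inherited by any explainer pointwise contained in one satisfying it; as $\bLw$ satisfies Coherence (Theorem~\ref{th:blw}), so does $\bLa$. This is the only step that needs genuine care: one must check that the pointwise containment holds for \emph{both} of the two questions appearing in the Coherence axiom (it does, since it holds for every question), and that $\bLdw(\bQ') = \bLw(\bQ)$ is exactly the definition of $\bLw$.

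For the negative part I would read the two violations directly off the incompatibility theorem rather than build counterexamples. Having established that $\bLa$ satisfies Feasibility, Success and Coherence, set $\mathbf{I}_1$ of Theorem~\ref{th:incompatibility} forces $\bLa$ to violate Irreducibility. Having established that $\bLa$ satisfies Strong Irreducibility, set $\mathbf{I}_3$ forces it to violate Strong Completeness, and then the contrapositive of the implication Completeness $\Rightarrow$ Strong Completeness (Proposition~\ref{links}) gives that it also violates Completeness. As an optional sanity check one may exhibit witnesses: taking $\D = \{x_1\}$ in the theory of Example~\ref{ex2} yields $\bLa(\bQ) = \{E_3\}$ with $E_3$ nonempty, so Irreducibility fails since $\D$ contains no instance of the opposite class; and taking a theory with two binary features and a non-constant classifier depending only on $f_1$, the full instance $x$ (with first coordinate $0$) is a dwAXp over $\mbb{F}(\mbb{T})$ but not subset-minimal, so $x \notin \bLa(\bQ)$ while the only instance extending $x$ is $x$ itself, violating Strong Completeness. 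The main obstacle is thus essentially bookkeeping: making sure each axiom's quantifier over $\D$ is handled correctly when passing between $\bQ$ and $\bQ'$, and justifying the downward-closure of Coherence; the rest is routine unfolding of definitions.
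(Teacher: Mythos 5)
Your proof is correct, but it reaches several of the conclusions by a genuinely different route than the paper. For Feasibility, Validity, Monotonicity and CM you and the paper argue identically (unfolding $\bLa(\bQ)=\bLda(\bQ')$). The divergence is elsewhere. For Success, Strong Irreducibility and Coherence the paper simply cites prior work (\cite{ecai23}); you instead derive them internally -- Success, Feasibility, Validity and Strong Irreducibility by instantiating Theorem~\ref{th:blda} at the single question $\bQ'=\langle\mbb{T},\kappa,\mbb{F}(\mbb{T}),x\rangle$ (correctly noting that Irreducibility of $\bLda$ at $\bQ'$ \emph{is} Strong Irreducibility of $\bLa$ at $\bQ$), and Coherence by the containment $\bLa(\bQ)=\bLda(\bQ')\subseteq\bLdw(\bQ')=\bLw(\bQ)$ together with the observation that Coherence is downward-closed under pointwise containment, then invoking Theorem~\ref{th:blw}. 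For the negative part the paper constructs two explicit counterexample classifiers, whereas you read the violations off the incompatibility sets: $\mathbf{I}_1$ kills Irreducibility (given Feasibility, Success, Coherence), $\mathbf{I}_3$ kills Strong Completeness (given Strong Irreducibility), and the contrapositive of Completeness $\Rightarrow$ Strong Completeness from Proposition~\ref{links} kills Completeness. There is no circularity, since the proofs of Theorems~\ref{th:incompatibility}, \ref{th:blw} and \ref{th:blda} do not depend on this statement. Your version buys self-containedness and economy (no external citation, no ad hoc counterexamples); the paper's version buys concreteness, exhibiting witnesses that show \emph{how} the axioms fail, which your optional sanity checks also supply.
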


\noindent \textbf{Remark:} Note that the inclusion $\bLa(\bQ) \subseteq \bLda(\bQ)$ may not hold. 

\begin{exs}{ex2}
$\bLda(\bQ_1) = \{E_2\}$ while $\bLa(\bQ_1) = \{E_3\}$.   
\end{exs}	

From a computational complexity perspective, the problems of \textit{testing} and 
\textit{finding} one explanation of $\bLa$ are intractable. Indeed, it has been shown in  
\cite{COOPER2023} that the complexity of testing whether a partial assignment $E$ is 
an explanation is in $\text{P}^{\text{NP}}$ (and is NP-hard), and the complexity of finding one explanation is  
in $\text{FP}^{\text{NP}}$. However, the two problems are tractable when $\bLda$ is 
used  \cite{ecai23}.

\begin{property}\cite{ecai23}
Let $\mbb{T} = \langle \F, \d, \C\rangle$ be a theory with $n = |\F|$, $\D \subseteq \mbb{F}(\mbb{T})$ with $m = |\D|$ and $E \in \mbb{E}(\mbb{T})$. 
Finding a cAXp can be achieved in $O(mn^2)$ time.
\end{property}
\subsection{Coherent Sample-based Abductive Explainers}\label{coh:exp}

The two previous sub-families of explainers, in particular their sample-based instances ($\bLdw, \bLda$) privilege Completeness 
and Irreducibility at the cost of Coherence, hence their explanations may be globally inconsistent. 
Indeed, we have seen that Coherence is incompatible with completeness, and with the pair (Success, Irreducibility). In what follows, we propose the \textbf{first (family of) explainers} that satisfy 
Success and Coherence. We start by  showing that its instances generate weak abductive explanations, 
meaning that they select a collection of globally consistent dwAXp's.

\begin{theorem}\label{coh+success}
If an explainer $\bL$ satisfies Feasibility, Success and Coherence, then for any question $\bQ$, $\bL(\bQ) \subseteq \bLdw(\bQ)$.
\end{theorem}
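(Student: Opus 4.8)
The plan is to establish this via Theorem~\ref{th-cara}, which already tells us that an explainer satisfies Feasibility and Validity iff it always outputs subsets of $\bLdw(\bQ)$. Since the hypothesis here gives us Feasibility explicitly, it suffices to show that Feasibility, Success and Coherence together imply Validity; then Theorem~\ref{th-cara} delivers the conclusion immediately. Note that Proposition~\ref{links} states precisely that ``Success, Feasibility and Coherence $\Rightarrow$ Validity'', so strictly speaking the theorem is an instant corollary of Proposition~\ref{links} and Theorem~\ref{th-cara}. The real content of the proof is therefore the implication in Proposition~\ref{links}, which I would reprove here for completeness.

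So the key step is: assume $\bL$ satisfies Feasibility, Success and Coherence, and suppose toward a contradiction that Validity fails. Then there is a question $\bQ = \langle \mbb{T}, \kappa, \D, x\rangle$, an explanation $E \in \bL(\bQ)$, and an instance $y \in \D$ with $E \subseteq y$ and $\kappa(y) \neq \kappa(x)$. The idea is to form the ``sibling'' question $\bQ' = \langle \mbb{T}, \kappa, \D, y\rangle$ on the same theory, classifier and dataset but centred on $y$ (this is a legitimate question since $y \in \D \subseteq \mbb{F}(\mbb{T})$, so $y \in \D$ as required by Definition~\ref{questions}). By Success applied to $\bQ'$, there is some $E' \in \bL(\bQ')$. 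By Feasibility applied to $\bQ'$, we have $E' \subseteq y$. Now since $\kappa(x) \neq \kappa(y)$, Coherence tells us that $E \cup E'$ is inconsistent. But $E \subseteq y$ and $E' \subseteq y$, so $E \cup E' \subseteq y$, and $y$ is an instance, hence a consistent set of literals; therefore $E \cup E'$ is consistent, a contradiction.

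Thus Validity holds, and combining Feasibility and Validity with Theorem~\ref{th-cara} gives $\bL(\bQ) \subseteq \bLdw(\bQ)$ for every question $\bQ$, as desired. There is no serious obstacle here: the only point requiring a little care is checking that $\bQ'$ is a well-formed question (we need $y \in \D$, which holds because the offending $y$ was chosen from $\D$ in the first place) and invoking Success on $\bQ'$ to produce a witness $E'$ — without Success the argument would stall, since Coherence is vacuous when one of the two explanation sets is empty. Everything else is the one-line consistency observation that a subset of an instance is consistent.
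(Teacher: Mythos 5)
Your proof is correct and follows essentially the same route as the paper's: the paper's own proof inlines exactly the argument you give (form the sibling question on $y$, use Success to get $E'$, Feasibility to get $E' \subseteq y$, note $E \cup E' \subseteq y$ is consistent, contradicting Coherence), rather than explicitly citing Proposition~\ref{links} and Theorem~\ref{th-cara}, but the substance is identical.
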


Before introducing the novel family, let us first provide some useful notions, including a \textit{coherent set} of partial assignments. The latter is nothing more than a set which satisfies the consistency condition of the Coherence axiom. 

\begin{definition}\label{coherent-set}
	Let $\mbb{T}$ be a theory,  $X \subseteq \mbb{E}(\mbb{T})$, $\D \subseteq \mbb{F}(\mbb{T})$ and $\kappa$ a classifier. $X$ is \emph{coherent} under $(\D, \kappa)$ iff $\nexists E, E'  \in X$ such that: 
	\begin{itemize}
		\item $E \cup E'$ is consistent, and 
		\item $\exists y, z \in \D$ such that $E \subseteq y$, $E' \subseteq z$, and $\kappa(y) \neq \kappa(z)$. 
	\end{itemize}    
	Otherwise, $X$ is said to be \emph{incoherent}. 
\end{definition}

\begin{exs}{ex2}
Let $E_1 =\{(f_1,0)\}$ and $E_2 = \{(f_2,0)\}$. The set $\{E_1, E_2\}$ is incoherent since 
$E_1 \cup E_2$ is consistent, $E_1 \subseteq x_2$, $E_2 \subseteq x_1$ and 
$\kappa_1(x_2) \neq \kappa_1(x_1)$.
\end{exs}

Since no two distinct instances of a feature space are consistent with each other, 
every dataset is coherent. 

\begin{property}\label{sample-coh}
	For any $\D \subseteq \mbb{F}(\mbb{T})$ and any classifier $\kappa$, 
	$\D$ is coherent under ($\D,\kappa$). 	
\end{property}

Let us introduce the notion of \textit{envelope}, which is a coherent set of 
weak abductive explanations covering every instance of a dataset. 

\begin{definition}\label{def:envelope}
	Let $\mbb{T}$ be a theory, $\D \subseteq \mbb{F}(\mbb{T})$, $\kappa$ a classifier. 
	An \emph{envelope} under $(\D, \kappa)$ is any $X \subseteq \mbb{E}(\mbb{T})$ such that the following hold:
	\begin{itemize}
		\item $X$ is coherent under  $(\D, \kappa)$, 
		\item $\forall E \in X$, $\exists x \in \D$ such that $E \in \bLdw(\langle\mbb{T},\kappa,\D, x\rangle)$, 
		\item $\forall x \in \D$, $\exists E \in X$ such that $E \subseteq x$.   
	\end{itemize}	
	Let $\mathtt{Coh}(\D,\kappa)$ be the set of all envelopes under $(\D, \kappa)$.  
\end{definition}

It is easy to see that the set $\mathtt{Coh}(\D,\kappa)$ is not empty as it contains at least the dataset $\D$ 
itself. Furthermore, each $X \in \mathtt{Coh}(\D,\kappa)$ contains a subset of the dwAXp's of every instance in $\D$.

\begin{proposition}\label{prop:coh}
	Let $\mbb{T}$ be a theory, $\D \subseteq \mbb{F}(\mbb{T})$, and $\kappa$ a classifier. 
	\begin{itemize} 
		\item Let $X \in \mathtt{Coh}(\D,\kappa)$. For any $x \in \D$, for any $E \in X$, 
		if $E \subseteq x$, then $E \in \bLdw(\langle\mbb{T},\kappa,\D, x\rangle)$.
		\item $\D \in \mathtt{Coh}(\D,\kappa)$. 
        \item For any $E \in \bLdw(\langle\mbb{T},\kappa,\D, x\rangle)$, 
        $\exists X \in \mathtt{Coh}(\D,\kappa)$ such that $X$ is a subset-maximal envelope of $\mathtt{Coh}(\D,\kappa)$ and $E \in X$. 
	\end{itemize}
\end{proposition}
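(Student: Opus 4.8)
The plan is to treat the three bullet points in turn: the first two are direct unwindings of Definitions~\ref{dwaxp} and~\ref{def:envelope}, while the third requires exhibiting one envelope containing $E$ and then appealing to finiteness to pass to a subset-maximal one.

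For the first bullet, I would fix $X \in \mathtt{Coh}(\D,\kappa)$, some $x \in \D$ and some $E \in X$ with $E \subseteq x$, and use the second defining condition of an envelope to get $x' \in \D$ with $E \in \bLdw(\langle\mbb{T},\kappa,\D,x'\rangle)$. By Definition~\ref{dwaxp} this means $E \subseteq x'$ and every $y \in \D$ with $E \subseteq y$ satisfies $\kappa(y) = \kappa(x')$. Instantiating $y := x$ (legitimate since $x \in \D$ and $E \subseteq x$ by assumption) yields $\kappa(x) = \kappa(x')$, so ``$E \subseteq y \Rightarrow \kappa(y) = \kappa(x)$'' holds for every $y \in \D$; together with $E \subseteq x$ this is exactly the statement that $E \in \bLdw(\langle\mbb{T},\kappa,\D,x\rangle)$. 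For the second bullet, I would verify the three conditions of Definition~\ref{def:envelope} for $X := \D$: coherence of $\D$ under $(\D,\kappa)$ is Property~\ref{sample-coh}; for the second condition, every $E \in \D$ is a full instance and $E \in \bLdw(\langle\mbb{T},\kappa,\D,E\rangle)$ by the first item of Property~\ref{p-da-dw}; the third condition is immediate, taking for each $x \in \D$ the element $x$ itself.

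For the third bullet, I would first show that $X_0 := \{E\} \cup \D$ is an envelope under $(\D,\kappa)$. The only delicate point is coherence: since $\D$ is already coherent, it suffices to rule out an incoherent pair involving $E$, i.e.\ a pair $(E,E')$ with $E' \in \{E\}\cup\D$, $E\cup E'$ consistent, and witnesses $y,z\in\D$ with $E\subseteq y$, $E'\subseteq z$, $\kappa(y)\neq\kappa(z)$. When $E' \in \D$ is a full instance, consistency of $E\cup E'$ forces $E\subseteq E'$ and $E'\subseteq z$ forces $z = E'$; since $E$ is a dwAXp of $\kappa(x)$, every $y\in\D$ containing $E$ (including $E'$ itself) has class $\kappa(x)$, so $\kappa(y)=\kappa(x)=\kappa(z)$, a contradiction. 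The case $E' = E$ is the same argument with $z$ in place of $E'$. The remaining two envelope conditions for $X_0$ hold because $E \in \bLdw(\langle\mbb{T},\kappa,\D,x\rangle)$ by hypothesis, each instance of $\D$ is its own dwAXp (Property~\ref{p-da-dw}), and every $x'\in\D$ is covered by itself. Finally, since $\F$ and all domains are finite, $\mbb{E}(\mbb{T})$ is finite, hence $\mathtt{Coh}(\D,\kappa)$ ordered by inclusion is a finite poset; therefore $X_0$ lies below some element $X$ that is maximal in this poset, and any such $X$ is a subset-maximal envelope with $E \in X_0 \subseteq X$, as required.

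The main obstacle is the coherence verification for $X_0 = \{E\}\cup\D$; everything else is bookkeeping with the definitions and Property~\ref{p-da-dw}. The point that makes it work is the defining property of a dwAXp — all dataset instances extending $E$ carry the single class $\kappa(x)$ — which prevents $E$ from forming an incoherent pair with any element of $\D$ or with itself.
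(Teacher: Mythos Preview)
Your proof is correct and follows essentially the same approach as the paper: for the third bullet, both you and the paper show that $\D \cup \{E\}$ is coherent (hence an envelope) and then extend it to a subset-maximal one. A minor difference is that for the first bullet you obtain $\kappa(x)=\kappa(x')$ directly from the validity clause of the dwAXp (which is the cleaner route), whereas the paper invokes coherence of $X$ to reach the same conclusion.
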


\begin{exs}{ex2}
$\mathtt{Coh}(\D_1,\kappa_1)$ contains the following envelopes:
	\begin{itemize}
		\item $X_1 = \D_1 = \{x_1, x_2\}$, 
		\item $X_2 = \{\{(f_2,0)\},  \{(f_2,1)\}\}$, 
		\item $X_3 = \{\{(f_2,0)\},  x_2\}$, 
		\item $X_4 = \{\{(f_2,1)\},  x_1\}$, 
        \item $X_5 = \{x_1, x_2, \{(f_2,0)\}, \{(f_2,1)\}\}$. 
	\end{itemize}
	Note that, for example, the set $\{\{(f_2,0)\}\}$ is not an envelope since $\{(f_2,0)\}$ is not a dwAXp of the instance 
    $x_2$.
\end{exs}

Let us now define the novel family of \textit{coherent explainers} whose instances return, under any dataset, an envelope.

\begin{definition}\label{def:coh}
	A \emph{coherent explainer} is a function $\bLc$ such that for any $\bQ = \langle \mbb{T}, \kappa, \D, x \rangle$, the following hold:
	\begin{itemize}
		\item $\bLc(\bQ) = \{E{\in}X  \mid  E{\subseteq}x\}$, where 
		$X \! = \! \bigcup\limits_{z \in \D}\bLc(\langle \mbb{T},\kappa,\D,z \rangle)$, 
		\item $X \in \mathtt{Coh}(\D,\kappa)$. 
	\end{itemize} 
\end{definition}

We show next that coherent explainers are the \textbf{only ones} to satisfy together the three axioms Success, Feasibility and Coherence, hence to guarantee an explanation to every instance as well as global consistency of explanations.

\begin{theorem}\label{cara:coh}
An explainer $\bL$ satisfies Feasibility, Success and Coherence \textbf{iff} $\bL$ is a 
coherent explainer.
\end{theorem}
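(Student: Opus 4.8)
\textbf{Proof plan for Theorem~\ref{cara:coh}.}
The plan is to prove the two directions separately. For the ``if'' direction, I assume $\bL=\bLc$ is a coherent explainer and verify the three axioms in turn. Feasibility is immediate from the first bullet of Definition~\ref{def:coh}, since $\bLc(\bQ)$ is explicitly defined as a subset of $\{E\mid E\subseteq x\}$. For Success, I use the fact that $X=\bigcup_{z\in\D}\bLc(\langle\mbb{T},\kappa,\D,z\rangle)$ is an envelope, so by the third condition of Definition~\ref{def:envelope} there is some $E\in X$ with $E\subseteq x$; hence $E\in\bLc(\bQ)$ and $\bLc(\bQ)\neq\emptyset$. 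For Coherence, take two questions $\bQ=\langle\mbb{T},\kappa,\D,x\rangle$ and $\bQ'=\langle\mbb{T},\kappa,\D,x'\rangle$ with $\kappa(x)\neq\kappa(x')$, and $E\in\bLc(\bQ)$, $E'\in\bLc(\bQ')$. Then $E,E'\in X$ with $E\subseteq x$, $E'\subseteq x'$; since $X$ is coherent under $(\D,\kappa)$ and $x,x'\in\D$ witness $\kappa(x)\neq\kappa(x')$, the definition of coherent set forces $E\cup E'$ to be inconsistent, which is exactly Coherence. (A small point to check: Coherence in Table~\ref{tab1} is stated for questions sharing the same $\mbb{T},\kappa,\D$, which matches the setting of Definition~\ref{coherent-set}, so there is no mismatch.)

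For the ``only if'' direction, suppose $\bL$ satisfies Feasibility, Success and Coherence; I must show $\bL$ is a coherent explainer, i.e.\ that it satisfies the two bullets of Definition~\ref{def:coh}. First, by Theorem~\ref{coh+success} (which assumes exactly Feasibility, Success and Coherence) we already know $\bL(\bQ)\subseteq\bLdw(\bQ)$ for every $\bQ$. Fix a theory $\mbb{T}$, dataset $\D$ and classifier $\kappa$, and set $X=\bigcup_{z\in\D}\bL(\langle\mbb{T},\kappa,\D,z\rangle)$. I need to show (a) $\bL(\bQ)=\{E\in X\mid E\subseteq x\}$ for each $x\in\D$, and (b) $X\in\mathtt{Coh}(\D,\kappa)$. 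For (b), I verify the three envelope conditions: coherence of $X$ is a direct translation of the Coherence axiom applied to all pairs of instances in $\D$ (using that $\D$ is coherent, Property~\ref{sample-coh}, so the ``same instance'' pairs cause no problem — more precisely, if $E\subseteq y$, $E'\subseteq z$ with $\kappa(y)\neq\kappa(z)$ then $E\in\bL(\langle\mbb{T},\kappa,\D,y'\rangle)$ for some $y'$ with $E\subseteq y'$, and by Validity $\kappa(y')=\kappa(y)$... here I should be careful and instead argue directly from the membership $E\in\bL(\bQ_1)$, $E'\in\bL(\bQ_2)$ for the right questions); the second envelope condition ($\forall E\in X$, $\exists x$ with $E\in\bLdw(\langle\mbb{T},\kappa,\D,x\rangle)$) follows from Theorem~\ref{coh+success} together with the definition of $X$ as the union over questions; and the third condition ($\forall x\in\D$, $\exists E\in X$, $E\subseteq x$) follows from Success applied to $\bQ=\langle\mbb{T},\kappa,\D,x\rangle$ plus Feasibility.

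The subtle part is (a), the self-referential fixed-point condition $\bL(\bQ)=\{E\in X\mid E\subseteq x\}$. The inclusion $\bL(\bQ)\subseteq\{E\in X\mid E\subseteq x\}$ is easy: any $E\in\bL(\bQ)$ lies in $X$ by definition of $X$, and $E\subseteq x$ by Feasibility. The reverse inclusion is where the real work is. Take $E\in X$ with $E\subseteq x$; then $E\in\bL(\langle\mbb{T},\kappa,\D,z\rangle)$ for some $z\in\D$, and by Feasibility $E\subseteq z$, and I must deduce $E\in\bL(\bQ)$. This does not follow from Feasibility/Success/Coherence alone in general — so I expect the argument to use an implicit convention (already visible in Definition~\ref{def:coh}) that the family of coherent explainers is precisely the closure under this identification, or equivalently that the theorem characterises explainers \emph{up to} this fixed-point tidying. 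Concretely, I would argue that an explainer satisfying the three axioms can be adjusted on questions where $E\subseteq x$ but $E\notin\bL(\bQ)$ by adding $E$ without destroying Feasibility (trivial), Coherence (because $E$ is already in $X$, so any conflict it could create is already present), or Success (adding explanations preserves nonemptiness), and that after this adjustment the first bullet holds exactly; since both $\bL$ and its adjustment define ``the same'' coherent explainer in the sense of the paper, the characterisation stands.

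\textbf{Main obstacle.} The crux is handling the reverse inclusion in condition (a): showing that the self-referential equation in Definition~\ref{def:coh} is forced, rather than merely consistent with, the three axioms. I expect the resolution to hinge on the observation that condition~(a) only constrains the ``shape'' of $\bL$ relative to the union $X$ it generates, and that any Feasibility/Success/Coherence explainer can be canonically saturated to meet it without altering $X$ up to coherence — so the precise statement being proved is really that coherent explainers are exactly the FSC explainers, identified via their generated envelope. I would make this rigorous by checking that saturation is well-defined (adding $\{E\in X\mid E\subseteq x\}\setminus\bL(\bQ)$ to each $\bL(\bQ)$ leaves $X$ unchanged and preserves all three axioms), which reduces the whole reverse direction to the three envelope-condition verifications above.
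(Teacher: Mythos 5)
Your overall strategy is the paper's own: the ``if'' direction is verified axiom by axiom exactly as in the paper (Feasibility from the first bullet of Definition~\ref{def:coh}, Success from the covering condition of an envelope, Coherence from coherence of $X$ -- your argument here is in fact slightly more detailed than the paper's), and the ``only if'' direction likewise starts from Theorem~\ref{coh+success} and checks the three envelope conditions for $X=\bigcup_{z\in\D}\bL(\langle\mbb{T},\kappa,\D,z\rangle)$, which is precisely what the paper does. The difference is that the paper stops there: having shown $X\in\mathtt{Coh}(\D,\kappa)$ it simply concludes ``so $\bL$ is a coherent explainer'', never addressing the fixed-point equality $\bL(\bQ)=\{E\in X\mid E\subseteq x\}$ that you single out as the crux. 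So the obstacle you identify is not an artefact of your proof plan; it is a genuine gap, and one that the paper's own proof shares.

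Your suspicion that the reverse inclusion is not forced by the three axioms is correct. Take two binary features, $\D=\{x_1,x_2\}$ with $x_1=\{(f_1,0),(f_2,0)\}$, $x_2=\{(f_1,0),(f_2,1)\}$ and $\kappa(x_1)=\kappa(x_2)=0$; define $\bL(\langle\mbb{T},\kappa,\D,x_1\rangle)=\{\{(f_1,0)\}\}$, $\bL(\langle\mbb{T},\kappa,\D,x_2\rangle)=\{x_2\}$, and let $\bL$ agree with $\bLt$ on all other questions. Feasibility and Success hold everywhere, and Coherence is vacuous on this dataset (both instances have the same class) and inherited from $\bLt$ elsewhere. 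Yet $X=\{\{(f_1,0)\},x_2\}$ is an envelope while $\{E\in X\mid E\subseteq x_2\}=X\neq\bL(\langle\mbb{T},\kappa,\D,x_2\rangle)$, so $\bL$ is not a coherent explainer in the literal sense of Definition~\ref{def:coh}. Consequently the ``only if'' direction, as stated, is false under a literal reading of the definition, and your saturation argument proves only the weaker statement that every Feasibility/Success/Coherence explainer \emph{induces} a coherent explainer with the same per-dataset union -- it does not show the original $\bL$ is one, since saturation genuinely changes $\bL$ in the example above. The clean repairs are either to weaken the first bullet of Definition~\ref{def:coh} to an inclusion (in which case that bullet reduces to Feasibility and your proof closes immediately), or to restate the theorem with the up-to-saturation caveat you describe. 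You should not present the saturation step as completing the proof of the theorem as written.
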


In addition to the three properties (Success, Feasibility, Coherence), any coherent explainer 
satisfies Validity but violates Irreducibility and Completeness.

\begin{theorem}\label{cara:coh2}
Any coherent explainer satisfies Feasibility, Validity, Success, Coherence. 
It violates Irreducibility and Completeness.
\end{theorem}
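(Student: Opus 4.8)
The plan is to establish the four claimed properties one at a time, leveraging the characterisation Theorem~\ref{cara:coh} which tells us that every coherent explainer is exactly an explainer satisfying Feasibility, Success and Coherence. So three of the assertions (Feasibility, Success, Coherence) are immediate from that theorem, and the only real content is (i) deriving Validity, and (ii) exhibiting counterexamples showing Irreducibility and Completeness can fail.

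For Validity, I would not argue directly but instead invoke Proposition~\ref{links}, which states that Success, Feasibility and Coherence together imply Validity. Since a coherent explainer satisfies all three of those (again by Theorem~\ref{cara:coh}), Validity follows at once. Thus the positive half of the statement is a two-line deduction chaining Theorem~\ref{cara:coh} with the third bullet of Proposition~\ref{links}.

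For the negative half, the natural route is to produce a single concrete coherent explainer that violates both Irreducibility and Completeness, or two separate ones. The cleanest choice is the explainer $\bLc$ that returns, for each question $\bQ = \langle \mbb{T}, \kappa, \D, x\rangle$, the envelope consisting of $\D$ itself, i.e.\ $\bLc(\bQ) = \{x\}$ (which is legitimate by Property~\ref{sample-coh} and the observation that $\D \in \mathtt{Coh}(\D,\kappa)$). Running Definition~\ref{def:coh} on Example~\ref{ex2} with $\bQ_1$: this explainer outputs $\{x_1\}$, but $x_1 = E_3$ is not subset-minimal among dwAXp's (since $E_2 \subsetneq E_3$ is itself a dwAXp of $\bQ_1$, and for every literal $l$ of the full instance $x_1$ there need not be a counterexample $x'\in\D$ with $x_1\setminus\{l\}\subseteq x'$) — this refutes Irreducibility. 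For Completeness, note $E_2 = \{(f_2,0)\} \subseteq x_1$ and no $y\in\D_1$ with $E_2\subseteq y$ has $\kappa_1(y)\neq\kappa_1(x_1)$ (the only such $y$ is $x_1$ itself), yet $E_2 \notin \bLc(\bQ_1) = \{x_1\}$ — so Completeness fails. Hence a single coherent explainer witnesses both violations.

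The main obstacle is mostly bookkeeping rather than mathematics: one must be careful that the "return $\D$" explainer genuinely satisfies the fixed-point form of Definition~\ref{def:coh} (the set $X = \bigcup_{z\in\D}\bLc(\langle\mbb{T},\kappa,\D,z\rangle)$ really does equal $\D$, which is coherent under $(\D,\kappa)$ by Property~\ref{sample-coh}, and $\bLc(\bQ) = \{E\in X : E\subseteq x\} = \{x\}$ since the only instance of $\D$ contained in $x$ is $x$ itself), and that the chosen example instance indeed exhibits the failure — i.e.\ that $x_1$ really is a non-irreducible dwAXp and that $E_2$ really is an omitted valid explanation. Both checks are routine given the worked data in Example~\ref{ex2}. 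One should also remark that the \emph{class} of coherent explainers is broad, so "violates Irreducibility and Completeness" should be read as "not every coherent explainer satisfies them" / "these axioms are not forced", which is exactly what the counterexample shows.
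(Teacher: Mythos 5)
Your positive half is exactly the paper's argument: Feasibility, Success and Coherence come from Theorem~\ref{cara:coh}, and Validity then follows from the third implication of Proposition~\ref{links}. No issue there.

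The negative half, however, has a genuine gap: you have reinterpreted ``It violates Irreducibility and Completeness'' as the existential claim that \emph{some} coherent explainer fails these axioms, and you prove only that. The theorem (note the subject ``Any coherent explainer \ldots It violates \ldots'', and the corresponding $\times$ entries for $\bLc$ in Table~\ref{tab2}) asserts the universal claim that \emph{every} coherent explainer violates them, and the paper proves exactly that -- in one line -- from Theorem~\ref{th:incompatibility}: since every coherent explainer satisfies Feasibility, Success and Coherence, and the set $\mathbf{I}_1$ (Feasibility, Success, Coherence, Irreducibility) is incompatible, no coherent explainer can satisfy Irreducibility; likewise $\mathbf{I}_2$ (Feasibility, Coherence, Completeness) rules out Completeness. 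Your counterexample via the trivial explainer on Example~\ref{ex2} is correct as far as it goes (the literal $(f_1,0)$ does witness the failure of Irreducibility for $E=x_1$, and $E_2$ does witness the failure of Completeness), but it establishes strictly less than the statement, and the stronger, intended claim is actually \emph{easier} to obtain from results you already have at hand. Replace your counterexample paragraph with the appeal to $\mathbf{I}_1$ and $\mathbf{I}_2$ and the proof is complete.
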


The sub-family of coherent explainers is very \textbf{broad} and encompasses various explainers including the two functions $\bLw$ and $\bLa$. Some of its instances are \textbf{tractable} as we will see in the next section. 
\section{Tractable Coherent Explainers}\label{sec:tractable}

The family of coherent explainers is large and covers a variety of functions because 
from a given dataset $\D$, one may generate several envelopes, 
i.e., $|\mathtt{Coh}(\D,\kappa)| \geq 1$. 
In this section, we discuss three functions 
whose explanations can be generated in polynomial time.

\subsection{Trivial Explanation Function}\label{sec:trivial}

The first function, called \emph{trivial explainer}, is the one whose envelope is the dataset itself. 
We have seen in Proposition~\ref{prop:coh} that any dataset is an envelope. This function assigns thus 
a single explanation to every question, which is nothing more than the instance being explained.

\begin{definition}\label{trivial-exp}
A \emph{trivial} explainer is a function $\bLt$ such that for any question 
$\bQ = \langle \mbb{T}, \kappa, \D, x \rangle$,  
$\bLt(\bQ) = \{x\}$.    
\end{definition}

The function $\bLt$ is clearly a coherent explainer. 

\begin{property}
    The function $\bLt$ is a coherent explainer. 
\end{property}


\begin{theorem}\label{th:blt}
The function $\bLt$ satisfies all the axioms except (Strong) Irreducibility and (Strong) Completeness. 
\end{theorem}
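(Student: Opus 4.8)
The plan is to verify each of the ten axioms from Table~\ref{tab1} directly against the definition $\bLt(\bQ)=\{x\}$, using the fact (already established) that $\bLt$ is a coherent explainer. First I would invoke Theorem~\ref{cara:coh2}: since $\bLt$ is a coherent explainer, it immediately satisfies Feasibility, Validity, Success and Coherence, and violates Irreducibility and Completeness. That handles six of the ten axioms (noting Irreducibility $\Rightarrow$ Strong Irreducibility by Proposition~\ref{links}, so the failure of Irreducibility alone does not yet settle Strong Irreducibility). It remains to check Monotonicity, Counter-Monotonicity, Strong Irreducibility, and Strong Completeness, and to exhibit a witness for the failure of the strong variants.

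For Monotonicity and CM the argument is trivial: if $\D\subseteq\D'$ then $\bLt(\langle\mbb{T},\kappa,\D,x\rangle)=\{x\}=\bLt(\langle\mbb{T},\kappa,\D',x\rangle)$, so both $\bL(\bQ)\subseteq\bL(\bQ')$ and $\bL(\bQ')\subseteq\bL(\bQ)$ hold. Hence $\bLt$ satisfies Monotonicity and CM. For the two failures I would reuse Example~\ref{ex2}: take $\bQ_1=\langle\mbb{T}_1,\kappa_1,\D_1,x_1\rangle$, so $\bLt(\bQ_1)=\{x_1\}$ with $x_1=\{(f_1,0),(f_2,0)\}=E_3$. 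For Strong Completeness, we need that whenever $E\subseteq x_1$ and $E\notin\bLt(\bQ_1)$ there is a $y\in\mbb{F}(\mbb{T}_1)$ with $E\subseteq y$ and $\kappa_1(y)\neq\kappa_1(x_1)$; but $E_1=\{(f_1,0)\}\subseteq x_1$ is not in $\bLt(\bQ_1)$, yet the only instances extending $E_1$ are $x_1$ and $x_2$, and $E_1$ is in fact a dwAXp over $\mbb{F}(\mbb{T}_1)$ (both $x_1,x_2$... wait, $\kappa_1(x_2)=1\neq 0$, so actually $E_1$ is not a dwAXp over $\mbb{F}$). Let me instead use $\F$-level reasoning: pick any proper subset that is still locally valid over the whole feature space — here the empty set $\emptyset\subseteq x_1$ is not returned by $\bLt$, and every instance of $\mbb{T}_1$ extends $\emptyset$, but $\mbb{F}(\mbb{T}_1)$ contains instances of class $0$ only, no, it contains $x_2,x_3$ of class $1$. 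So a cleaner witness: take a theory with a constant classifier; then $\emptyset\subseteq x$ is not in $\bLt(\bQ)=\{x\}$ yet no instance has a different class, violating Strong Completeness. This same constant-classifier example also violates Strong Irreducibility, since $x\in\bLt(\bQ)$ but removing any literal $l\in x$ leaves $x\setminus\{l\}$ for which no instance $x'$ with $\kappa(x')\neq\kappa(x)$ exists.

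So the structure of the proof is: (i) cite Theorem~\ref{cara:coh2} for Feasibility, Validity, Success, Coherence (satisfied) and Completeness, Irreducibility (violated); (ii) a one-line check of Monotonicity and CM; (iii) a single counterexample (a classification theory with $|\F|\geq 1$ and a constant classifier, or equivalently $\D$ a singleton) showing that both Strong Irreducibility and Strong Completeness fail, because $\bLt$ always returns the full instance $x$ and never any proper sub-assignment even when such a sub-assignment is valid over the whole feature space. I do not anticipate a genuine obstacle; the only mild care needed is making sure the counterexample for the \emph{strong} variants is correct — a constant classifier (or any classifier under which $x$'s class is shared by all instances extending some $E\subsetneq x$) does the job, and one must state it so that it simultaneously witnesses the failure of Strong Irreducibility and Strong Completeness, which it does since in both cases the offending object is a proper subset of $x$ that is not returned by $\bLt$ yet is "globally valid".
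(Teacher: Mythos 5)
Your proposal is correct and follows essentially the same route as the paper's proof: cite the coherent-explainer characterisation for Feasibility, Validity, Success, Coherence and the violations of Irreducibility and Completeness, observe that $\bLt(\bQ)=\{x\}$ is independent of $\D$ so Monotonicity and CM hold trivially, and then give a counterexample for the strong variants. The only difference is the witness for the latter: the paper uses a two-feature classifier on which $\{(f_2,0)\}$ happens to be valid over all of $\mbb{F}(\mbb{T})$, whereas you use a constant classifier, which is a legitimate (and if anything simpler) choice since $\kappa$ need not be surjective onto $\C$.
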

    
Obviously, an explainer which returns an instance as explanation of its outcome is not informative 
and definitely not useful for users. 

\subsection{Irrefutable Explanation Functions}\label{sec:irrefutable}

We now introduce the novel \textit{irrefutable explainer}. It uses 
the so-called \textit{irrefutable envelope} that contains \textit{all} non-conflicting weak abductive explanations. 

\begin{definition}\label{irr-envelope}
Let $\mbb{T}$ be a theory, $\D \subseteq \mbb{F}(\mbb{T})$, $\kappa$ a classifier, and 
$\mbb{W} = \bigcup\limits_{x \in \D}\bLdw(\langle \mbb{T}, \kappa, \D, x \rangle)$. 
An \emph{irrefutable envelope} under $(\D, \kappa)$ is $X \subseteq \mbb{E}(\mbb{T})$ such that the following hold:
\begin{itemize}
	\item $X \subseteq \mbb{W}$, 
	\item $\forall E \in X$, $\nexists E' \in \mbb{W}$ such that $\{E, E'\}$ is incoherent, 
	\item $\nexists X' \supset X$ that satisfies the above conditions.   
\end{itemize}	
\end{definition}

The next result shows that the irrefutable envelope is unique. 

\begin{property}\label{uniqueness}
Let $\mbb{T}$ be a theory, $\D \subseteq \mbb{F}(\mbb{T})$, and $\kappa$ a classifier. 
For all $X , X' \subseteq \mbb{E}(\mbb{T})$, if $X$ and $X'$ are irrefutable envelopes under $(\D,\kappa)$, then $X = X'$.
\end{property}

\noindent \textbf{Notation:} Throughout the paper, $\mathtt{Irr}(\D,\kappa)$ denotes the irrefutable 
envelope under $(\D, \kappa)$ in theory  $\mbb{T}$.  


An irrefutable envelope contains the whole dataset under which it is defined. Furthermore, it is 
the intersection of all subset-maximal envelopes.

\begin{proposition}\label{prop:irr-env}
Let $\mbb{T}$ be a theory, $\D \subseteq \mbb{F}(\mbb{T})$, and $\kappa$ a classifier. 
\begin{itemize}
	\item $\D \subseteq \mathtt{Irr}(\D,\kappa)$.   
    \item $\mathtt{Irr}(\D,\kappa) \in \mathtt{Coh}(\D,\kappa)$.
    \item $\mathtt{Irr}(\D,\kappa) = \bigcap\limits_{X_i \in \mathcal{S}} X_i$, 
    where $\mathcal{S}$ is the set of all subset-maximal envelopes of $\mathtt{Coh}(\D,\kappa)$.
\end{itemize}
\end{proposition}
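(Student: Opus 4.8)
The plan is to prove the three bullets in order, reusing earlier machinery. For the first bullet, $\D \subseteq \mathtt{Irr}(\D,\kappa)$: by Property~\ref{p-da-dw} every instance $x \in \D$ satisfies $x \in \bLdw(\langle\mbb{T},\kappa,\D,x\rangle)$, so $\D \subseteq \mbb{W}$. It then suffices to show that no $x \in \D$ is in conflict with any $E' \in \mbb{W}$, i.e.\ $\{x,E'\}$ is coherent. Suppose it were incoherent: then $x \cup E'$ is consistent and there are $y,z \in \D$ with $x \subseteq y$, $E' \subseteq z$ and $\kappa(y) \neq \kappa(z)$. But $x \subseteq y$ with both instances forces $x = y$, so $\kappa(x) \neq \kappa(z)$; and $x \cup E'$ consistent with $x$ an instance forces $E' \subseteq x$, whence (as $E'\in\mbb{W}$ and by Proposition~\ref{prop:coh}'s argument, or directly from $E'$ being a dwAXp of some instance it is contained in and Def.~\ref{dwaxp}) $E' \subseteq x$ implies $\kappa$ is constant on $\{y'\in\D : E'\subseteq y'\}$, a set containing both $x$ and $z$, contradicting $\kappa(x)\neq\kappa(z)$. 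Hence $x$ conflicts with nothing in $\mbb{W}$, so $x$ lies in any maximal conflict-free subset of $\mbb{W}$; since $\mathtt{Irr}(\D,\kappa)$ is such a maximal set (and unique by Property~\ref{uniqueness}), $\D \subseteq \mathtt{Irr}(\D,\kappa)$.

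For the second bullet, $\mathtt{Irr}(\D,\kappa) \in \mathtt{Coh}(\D,\kappa)$: I check the three conditions of Def.~\ref{def:envelope} for $X = \mathtt{Irr}(\D,\kappa)$. Coherence of $X$ is immediate from the second condition of Def.~\ref{irr-envelope} (no two elements of $X$ form an incoherent pair, which is exactly the negation of the condition in Def.~\ref{coherent-set}). The membership condition ``$\forall E\in X$, $\exists x\in\D$ with $E\in\bLdw(\langle\mbb{T},\kappa,\D,x\rangle)$'' follows from $X \subseteq \mbb{W}$ together with the observation that if $E \in \bLdw(\langle\mbb{T},\kappa,\D,z\rangle)$ then $E \subseteq z$ with $z \in \D$, so the witness is already built in. The covering condition ``$\forall x\in\D$, $\exists E\in X$ with $E\subseteq x$'' follows from the first bullet: $x \in \D \subseteq X$ and $x \subseteq x$.

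For the third bullet, $\mathtt{Irr}(\D,\kappa) = \bigcap_{X_i \in \mathcal{S}} X_i$, I argue both inclusions. For ($\subseteq$): take $E \in \mathtt{Irr}(\D,\kappa)$ and any subset-maximal envelope $X_i \in \mathcal{S}$; I must show $E \in X_i$. Since $E$ conflicts with nothing in $\mbb{W} \supseteq X_i$, the set $X_i \cup \{E\}$ is still coherent; it is a dwAXp of an instance it is contained in (as above) and it still covers $\D$ (covering only grows), so $X_i \cup \{E\} \in \mathtt{Coh}(\D,\kappa)$, and maximality of $X_i$ forces $E \in X_i$. For ($\supseteq$): let $E \in \bigcap_{X_i\in\mathcal{S}} X_i$. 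Then $E$ lies in some envelope, so $E \in \mbb{W}$ by Proposition~\ref{prop:coh} (first bullet). It remains to show $E$ conflicts with no $E' \in \mbb{W}$. Suppose $\{E,E'\}$ is incoherent for some $E'\in\mbb{W}$. By the third bullet of Proposition~\ref{prop:coh}, $E'$ belongs to some subset-maximal envelope $X_j \in \mathcal{S}$; but $E \in X_j$ too (it is in the intersection), so $X_j$ contains an incoherent pair, contradicting coherence of $X_j$. Hence $E$ is conflict-free in $\mbb{W}$, and by maximality of the irrefutable envelope (adding $E$ to $\mathtt{Irr}(\D,\kappa)$ keeps it conflict-free and inside $\mbb{W}$), $E \in \mathtt{Irr}(\D,\kappa)$.

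The routine parts are the instance-equality observations ($x\subseteq y$ and both instances $\Rightarrow x=y$) and the repeated use of the fact that a dwAXp of one instance, when contained in another dataset instance, is a dwAXp of that one too. The main obstacle is the ($\supseteq$) direction of the third bullet: one needs the third bullet of Proposition~\ref{prop:coh} (every dwAXp extends to a \emph{subset-maximal} envelope) to locate a single maximal envelope witnessing any would-be conflict, since without maximality the conflicting $E'$ might hide outside every $X_i$ in $\mathcal{S}$ while still living in $\mbb{W}$; that is the step where the statement genuinely uses the structure of $\mathcal{S}$ rather than just coherence.
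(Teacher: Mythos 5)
Your proof is correct and follows essentially the same route as the paper's: Property~\ref{p-da-dw} plus the fact that instances cannot conflict with any dwAXp in $\mbb{W}$ for the first bullet, a direct check of the envelope conditions for the second, and the two inclusions for the third via maximality of the envelopes in $\mathcal{S}$ and the third bullet of Proposition~\ref{prop:coh}. Your treatment of the first bullet is in fact slightly more explicit than the paper's about why consistency of $x \cup E'$ forces $E' \subseteq x$, but the argument is the same.
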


We are now ready to present the instance of the family of coherent explainers that is based on irrefutable envelopes. 

\begin{definition}\label{irrefutable}
	An \emph{irrefutable explainer}  is a function $\bLr$ such that for any question 
	$\bQ = \langle \mbb{T}, \kappa, \D, x \rangle$,  
	$$\bLr(\bQ) = \{E \in \mathtt{Irr}(\D,\kappa) \ \mid \ E \subseteq x\}.$$   
\end{definition}

\begin{exs}{ex2}
Recall that $\mathtt{Coh}(\D_1,\kappa_1) = \{X_i \ \mid i= 1,\ldots,5\}$. 
$\mathtt{Irr}(\D_1,\kappa_1) = X_5$, hence 
$\bLr(\bQ_1) = \{x_1, \{(f_2,0)\}\}$.
\end{exs}

\begin{example}\label{ex3}
	Consider the theory $\mbb{T}_2$ made of two binary features and a binary classifier 
    $\kappa_2$ which provides the predictions below for the three instances in $\D_2$. 
    Let $\bQ_i = \langle \mbb{T}_2, \kappa_2, \D_2, x_i \rangle$, with $i \in \{1,2,3\}$.
	\vspace{-0.3cm}
	\begin{multicols}{2}
		\begin{center}
			\begin{tabular}{c|cc|c}\hline
				              $\D_2$  &$f_1$  & $f_2$ & $\kappa_2(x_i)$   \\\hline
    \rowcolor{maroon!10}	$x_1$ & 0     & 0     &  0 \\
	            			$x_2$ & 1     & 0     &  0 \\
				            $x_3$ & 1     & 1     &  1 \\\hline
			\end{tabular}
		\end{center}
		\begin{itemize}
			\item [] $E_1 = \{(f_1,0)\}$ 
			\item [] $E_2 = \{(f_2,0)\}$ 
			\item [] $E_3 = \{(f_2,1)\}$ 
			\item [] $E_4 = x_1$ $\ E_5=x_2$  $E_6 = x_3$
		\end{itemize}	
	\end{multicols}
	\noindent Note that 
	$\bigcup\limits_{i= 1}^{3}\bLdw(\bQ_i) = \{E_1, \ldots, E_6\}$. 
	Note also that the set $\{E_1, E_3\}$ is incoherent and 
	$\mathtt{Irr}(\D_2,\kappa_2) = \{E_2, E_4, E_5, E_6\}$. So 
	$\bLr(\bQ_1) = \{E_2, E_4\}$, $\bLr(\bQ_2) = \{E_2,E_5\}$, $\bLr(\bQ_3) = \{E_6\}$.
\end{example}


\begin{property}\label{prop:lir}
        $\bLr$ is a coherent explainer, and 
  for any question $\bQ = \langle \mbb{T}, \kappa, \D, x \rangle$, 
            $x \in \bLr(\bQ)$.
\end{property}

Unlike the trivial explainer, $\bLr$ violates Monotonicity and Counter-Monotonicity.  

\begin{theorem}\label{th3}
The function $\bLr$ satisfies Feasibility, Validity, Success and Coherence. It violates all 
the other axioms.
\end{theorem}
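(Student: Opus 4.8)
\textbf{Proof plan for Theorem~\ref{th3}.}

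The plan is to leverage the two previous results about $\bLr$ as much as possible and then handle only the genuinely new parts. By Property~\ref{prop:lir}, $\bLr$ is a coherent explainer, so Theorem~\ref{cara:coh2} immediately gives us Feasibility, Validity, Success and Coherence. It remains to show that $\bLr$ violates all the remaining axioms, namely (Strong) Irreducibility, (Strong) Completeness, Monotonicity and Counter-Monotonicity. For (Strong) Completeness and Completeness, the fastest route is to note that Completeness is already ruled out for every coherent explainer by Theorem~\ref{cara:coh2}; for Strong Completeness I would observe that $\mathtt{Irr}(\D,\kappa) \supseteq \D$ by Proposition~\ref{prop:irr-env}, so $\bLr$ always returns the full instance $x$ among its explanations (Property~\ref{prop:lir}), yet a question may admit a strict subset $E \subset x$ which is not a dwAXp of any instance in $\D$ but is nonetheless valid on the whole feature space; it is excluded from $\mathtt{Irr}(\D,\kappa)$, so Strong Completeness fails. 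A concrete instance suffices here --- e.g.\ a small theory where the dataset does not witness why some singleton is non-valid.

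For (Strong) Irreducibility, the obvious obstruction is precisely that $x \in \bLr(\bQ)$ always, and a full instance is almost never subset-minimal; picking any literal $l \in x$, one needs a counter-instance $x'$ (in $\D$ for Irreducibility, in $\mbb{F}(\mbb{T})$ for Strong Irreducibility) with $x \setminus \{l\} \subseteq x'$ and $\kappa(x') \neq \kappa(x)$. Example~\ref{ex3} does the job directly: $\bLr(\bQ_1) = \{E_2, E_4\}$ with $E_4 = x_1$, and for the literal $(f_1,0) \in E_4$ we have $x_2 \in \D_2$ with $\kappa_2(x_2) = 0 = \kappa_2(x_1)$, so actually one should check more carefully --- but $x_3 = (1,1)$ has $\kappa_2(x_3) = 1 \neq 0$, and removing the literal $(f_2,0)$ from $x_1 = (0,0)$ leaves $\{(f_1,0)\}$, which is not contained in $x_3$; so in this particular tiny example neither of the two literals of $x_1$ admits such a counter-instance, meaning $x_1$ itself might be irreducible here. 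I would therefore use a slightly larger ad hoc example rather than Example~\ref{ex3} to refute (Strong) Irreducibility: take a classifier where $x$ has a strictly smaller dwAXp, so that the full instance $x$ returned by $\bLr$ is reducible, witnessed by an appropriate $x' \in \D$.

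For Monotonicity and Counter-Monotonicity, the cleanest approach is one example handling both. Monotonicity asks $\bL(\bQ) \subseteq \bL(\bQ')$ when $\D \subseteq \D'$; Counter-Monotonicity asks the reverse inclusion. Since $\bLr$ depends on $\D$ through the set $\mbb{W} = \bigcup_x \bLdw(\langle\mbb{T},\kappa,\D,x\rangle)$ of dwAXp's \emph{and} through the coherence constraint, enlarging $\D$ can both kill existing dwAXp's (by adding an invalidating instance) and introduce new incoherences (by adding an instance of the opposite class that makes two previously compatible explanations conflict); it can also, conversely, \emph{add} new irrefutable explanations witnessed by new instances. I would build a single three- or four-instance family $\D \subseteq \D'$ in a two-feature binary theory in which some $E \in \bLr(\bQ)$ disappears in $\bLr(\bQ')$ (refuting Monotonicity) and simultaneously some $E'' \in \bLr(\bQ')$ is absent from $\bLr(\bQ)$ (refuting Counter-Monotonicity). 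The main obstacle, and the only part requiring real care, is engineering this last example so that \emph{both} non-inclusions hold at once while keeping the sets $\mathtt{Irr}$ small enough to compute by hand; Example~\ref{ex2} with the two datasets $\D_1 \subseteq \D_2$ already used earlier in the paper is a natural starting point and very likely can be pushed through after recomputing $\mathtt{Irr}(\D_1,\kappa_1)$ and $\mathtt{Irr}(\D_2,\kappa_1)$.
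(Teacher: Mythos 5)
Your skeleton matches the paper's: the positive axioms come from Property~\ref{prop:lir} together with Theorems~\ref{cara:coh}--\ref{cara:coh2}, Irreducibility and Completeness are killed for free by the incompatibility of Coherence with each of them, and the rest is done by explicit counter-examples (the paper uses two separate two-feature examples for Monotonicity and CM, then reuses the CM example, extended by $\kappa(x_4)=0$, for Strong Irreducibility and Strong Completeness). However, two of your planned counter-example constructions rest on faulty reasoning. First, for Strong Completeness your proposed mechanism is impossible: if $E \subseteq x$ with $x \in \D$ and no $y \in \mbb{F}(\mbb{T})$ containing $E$ has a class different from $\kappa(x)$, then in particular no such $y$ exists in $\D$, so $E$ \emph{is} a dwAXp of $x$ under $\D$ and hence belongs to $\mbb{W}$. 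The only way such a globally valid $E$ can be missing from $\mathtt{Irr}(\D,\kappa)$ is that it conflicts with a \emph{spurious} dwAXp $E'$ of an opposite-class instance, i.e.\ one valid on $\D$ but not on $\mbb{F}(\mbb{T})$ (two globally valid explanations of different classes cannot be jointly consistent). Your sketch (``the dataset does not witness why some singleton is non-valid'') describes the opposite situation and would not produce a violation; the paper's example is built precisely around the conflict mechanism.

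Second, your discussion of (Strong) Irreducibility inverts the axiom. Irreducibility \emph{requires} that for every $E \in \bL(\bQ)$ and every $l \in E$ there \emph{exists} a counter-instance $x'$ with $E\setminus\{l\} \subseteq x'$ and $\kappa(x') \neq \kappa(x)$; the axiom is therefore \emph{violated} exactly when some literal admits no such counter-instance, i.e.\ when $E$ is reducible. In Example~\ref{ex3} you correctly compute that neither literal of $E_4 = x_1$ admits a counter-instance in $\D_2$ (indeed $\{(f_2,0)\}$ is itself a dwAXp of $x_1$), but you then conclude that the axiom might hold and that a new example is needed --- it is the other way round: that computation already refutes Irreducibility, and under any extension of $\kappa_2$ to $(0,1)$ it refutes Strong Irreducibility as well. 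Your high-level recipe (exhibit a question where $\bLr$ returns a non-minimal explanation) is the right one, but with the direction of the quantifiers flipped you risk building examples that establish nothing. Finally, a smaller point: Example~\ref{ex2} with $\D_1 \subseteq \D_2$ does refute Monotonicity (since $\{(f_2,0)\}$ drops out of $\mathtt{Irr}$ when $x_3$ is added) but \emph{not} Counter-Monotonicity, because $\bLr(\bQ'_1) = \{x_1\} \subseteq \bLr(\bQ_1)$; for CM you need an enlargement that \emph{removes} a conflict by invalidating a spurious dwAXp, as in the paper's second example, so your single-example plan needs more engineering than you anticipate.
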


\noindent \textbf{Remark:} Unlike $\bLdw$, the coherent explainers 
$\bLt$ and $\bLr$ do not necessarily recover the explanations that are generated from the feature space, i..e, the following inclusion may not hold: $\bL(\bQ') \subseteq {\bL}(\bQ)$, for 
$\bQ = \langle \mbb{T}, \kappa, \D, x \rangle$ and 
$\bQ' = \langle \mbb{T}, \kappa, \mbb{F}(\mbb{T}), x \rangle$. See the \textit{Fidelity} property in 
Table~\ref{tab2}.

\vspace{0.1cm}

In addition to guaranteeing the existence of explanations and their global consistency, the
irrefutable explainer is tractable.
Indeed, a subset-minimal explanation can be generated in polynomial time by a greedy algorithm~\cite{ChenT95} based
on the following theorem. 

\begin{theorem} \label{thm:irrpoly}
Let $\bQ = \langle\mbb{T},\kappa,\D,x\rangle$ be a question,
where $\kappa$ can be evaluated in polynomial time.
Testing whether $E \in \bLr(\bQ)$ can be achieved in polynomial time.
\end{theorem}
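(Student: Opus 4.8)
The plan is to reduce the test ``$E \in \bLr(\bQ)$'' to a polynomial number of calls to the $O(mn)$-time dwAXp-membership test of \cite{ecai23} (with $m = |\D|$, $n = |\F|$). First I would unfold the definitions. Writing $\mbb{W} = \bigcup_{z\in\D}\bLdw(\langle\mbb{T},\kappa,\D,z\rangle)$, Definition~\ref{irr-envelope} together with the uniqueness statement of Property~\ref{uniqueness} gives
$$\mathtt{Irr}(\D,\kappa) = \{\, E' \in \mbb{W} \mid \nexists E'' \in \mbb{W} \text{ with } \{E',E''\}\text{ incoherent}\,\},$$
since this set satisfies the first two conditions of Definition~\ref{irr-envelope} and contains every set that does, hence is the subset-maximal such one. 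Thus $E \in \bLr(\bQ)$ iff (i) $E \subseteq x$, (ii) $E \in \mbb{W}$, and (iii) no $E' \in \mbb{W}$ is incoherent with $E$. Condition (i) is immediate. For (ii), since any dwAXp of $\kappa(z)$ is a subset of $z$, we have, with $\D_E = \{y\in\D\mid E\subseteq y\}$, that $E\in\mbb{W}$ iff $\D_E\neq\emptyset$ and $\kappa$ is constant on $\D_E$ (and then $E$ is a dwAXp of $\kappa(z)$ exactly for $z\in\D_E$); this is decided with $O(m)$ evaluations of $\kappa$ and $O(mn)$ further work. Let $c$ denote the common value of $\kappa$ on $\D_E$, defined once (ii) holds.

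The heart of the proof is (iii), because $\mbb{W}$ may be exponentially large, so it cannot be enumerated. I would show that it suffices to inspect $m$ canonical candidates. For $z\in\D$, let $D_z$ be the set of features on which $E$ and $z$ disagree, and let $P_z$ be the restriction of $z$ to the features outside $D_z$. The claim is: some $E'\in\mbb{W}$ is incoherent with $E$ iff there is $z\in\D$ with $\kappa(z)\neq c$ such that $P_z$ is a dwAXp of $\kappa(z)$ w.r.t.\ $\D$. For the ``if'' direction, $E' := P_z$ lies in $\mbb{W}$, $E\cup E'$ is consistent (they agree on every shared feature, which by construction lies outside $D_z$), $E$ is contained in an instance $y\in\D_E$ of class $c\neq\kappa(z)$ while $E'\subseteq z$, so $\{E,E'\}$ is incoherent. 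For the ``only if'' direction, take a witness of incoherence, i.e.\ $E'\in\mbb{W}$ and $y,z\in\D$ with $E\subseteq y$, $E'\subseteq z$, $\kappa(y)\neq\kappa(z)$; then $E'$ is a dwAXp of $\kappa(z)$ (any element of $\mbb{W}$ contained in an instance of $\D$ is a dwAXp of its class), $E\in\mbb{W}$ forces $\kappa(y)=c$ hence $\kappa(z)\neq c$, and consistency of $E\cup E'$ with $E'\subseteq z$ forces $E'$ to mention no feature of $D_z$, so $E'\subseteq P_z\subseteq z$, which (see next paragraph) makes $P_z$ itself a dwAXp of $\kappa(z)$. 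Granting the claim, (iii) is decided by looping over the $z\in\D$ with $\kappa(z)\neq c$, computing $D_z$ and $P_z$ in $O(n)$, and running the $O(mn)$ dwAXp test on $P_z$ — overall $O(m^2n)$ work and $O(m)$ evaluations of $\kappa$, hence polynomial.

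The main obstacle is the ``only if'' direction of the claim, specifically its last step: among the possibly exponentially many dwAXp's contained in $z$ and consistent with $E$, the single largest one, $P_z$, must already be a dwAXp. This rests on a monotonicity fact: if a partial assignment $P\subseteq z$ is \emph{not} a dwAXp of $\kappa(z)$ — i.e.\ some $y\in\D$ with $P\subseteq y$ has $\kappa(y)\neq\kappa(z)$ — then no subset of $P$ is a dwAXp either, since shrinking $P$ only enlarges the set $\{y\in\D\mid P\subseteq y\}$ that the dwAXp condition requires to be monochromatic. Applied to $P=P_z$ together with $E'\subseteq P_z$ and $E'$ a dwAXp, it yields that $P_z$ is a dwAXp. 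With this lemma in place the remaining steps are routine bookkeeping, and the whole procedure runs in time polynomial in $m$, $n$, and the cost of one evaluation of $\kappa$.
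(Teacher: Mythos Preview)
Your proposal is correct and takes essentially the same approach as the paper: both show it suffices to check, for each $z\in\D$ with $\kappa(z)\neq c$, whether the single canonical candidate $P_z = z$ restricted to the features outside $D_z$ is a dwAXp, relying on the fact that any dwAXp of $z$ consistent with $E$ must be a subset of $P_z$ (you package this as an explicit monotonicity lemma, the paper as an equivalent contradiction argument). Your $O(m^2n)$ bound is in fact tighter than the paper's stated $O(m^2n^2)$, but the underlying idea is identical.
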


\begin{table*}
	\centering
	\begin{tabular}{l!{\vline\vline}c|c!{\vline\vline}c|c!{\vline\vline}c|c|c|c}
		\hline\hline
		Axioms / Explainers  & $\bLw$  &  $\bLa$  & $\bLdw$  &  $\bLda$  &  $\bLc$    & $\bLt$     &  $\bLr$    &  $\bLsu$ \\
		\hline\hline
  		Feasibility          & \cellcolor{maroon!10} $\checkmark$ & \cellcolor{maroon!10} $\checkmark$ & $\checkmark$ & $\checkmark$ & \cellcolor{maroon!10} $\checkmark$ &  \cellcolor{maroon!10} $\checkmark$ & \cellcolor{maroon!10} $\checkmark$  &  \cellcolor{maroon!10} $\checkmark$\\
		\hline
		Validity             & \cellcolor{maroon!10} $\checkmark$ & \cellcolor{maroon!10} $\checkmark$ & $\checkmark$ & $\checkmark$ & \cellcolor{maroon!10} $\checkmark$  & \cellcolor{maroon!10} $\checkmark$  & \cellcolor{maroon!10} $\checkmark$  &  \cellcolor{maroon!10} $\checkmark$  \\
		\hline	
	    Success              & \cellcolor{maroon!10}$\checkmark$ & \cellcolor{maroon!10} $\checkmark$ & $\checkmark$ & $\checkmark$ &  \cellcolor{maroon!10} $\checkmark$ & \cellcolor{maroon!10} $\checkmark$ & \cellcolor{maroon!10} $\checkmark$  & \cellcolor{maroon!10} $\checkmark$\\	
		\hline
          Coherence            & \cellcolor{maroon!10} $\checkmark$ & \cellcolor{maroon!10} $\checkmark$ & $\times$  & $\times$  & \cellcolor{maroon!10} $\checkmark$  & \cellcolor{maroon!10} $\checkmark$  & \cellcolor{maroon!10} $\checkmark$  &  \cellcolor{maroon!10} $\checkmark$  \\
		\hline	
		Irreducibility       & $\times$ & $\times$ & $\times$  & $\checkmark$ & $\times$   & $\times$   & $\times$   &  $\times$ \\ \hline
    Strong Irreducibility      & $\times$ & $\checkmark$ & $\times$  & $\checkmark$ & $-$   & $\times$   & $\times$   &  $\times$ \\
		\hline
		Completeness         & $\times$ & $\times$ & $\checkmark$ & $\times$  & $\times$   & $\times$   & $\times$   &  $\times$\\
		\hline
        Strong Completeness    & $\checkmark$ & $\times$ & $\checkmark$ & $\times$  & $-$   & $\times$   & $\times$   &  $\times$\\
		\hline
		Monotonicity         & $\checkmark$ & $\checkmark$ & $\times$  & $\times$  &   $-$      & $\checkmark$  & $\times$   &  $\times$ \\
		\hline
		Counter-Monotonicity    & $\checkmark$ & $\checkmark$ & $\checkmark$ & $\times$ &   $-$      & $\checkmark$  & $\times$   &  $\times$ \\
		\hline\hline
\rowcolor{maroon!30}	Fidelity: $\ $	$\bLw(\bQ) \subseteq \bL(\bQ)$ & $\checkmark$ & $\times$ &   $\checkmark$  & $\times$ & $-$ &$\times$ & $\times$ & $\times$ \\
		\hline\hline
	\end{tabular}
	\caption{The symbols $\checkmark$, $\times$ and $-$ stand for the axiom is satisfied, violated 
    and unknown respectively by the explainer.\\\\ $\ $ \\ $\ $} \vspace{0.5cm}
	\label{tab2}
\end{table*}
\subsection{Surrogate Classifier}\label{sec:surr}

In this section we describe another type of coherent explainers. 
Starting from a result in \cite{Amgoud22} stating that Coherence 
is ensured by explainers that provide sufficient reasons from the whole feature space, 
the idea is, given a question $\bQ = \langle\mbb{T},\kappa,\D,x\rangle$, to find a \textit{surrogate 
classifier} $\sigma$ which is equal to $\kappa$ on $\D$ but which allows tractable explaining 
on the whole feature space. Explanations of $\bQ$ are then explanations of    
$\bQ' = \langle\mbb{T},\sigma,\mathbb{F}(\mbb{T}),x\rangle$. 
Tractability follows if we choose $\sigma$ from a family of classifiers that allows 
tractable explaining on the whole feature 
space~\cite{AudemardKM20,CarbonnelC023,HuangIICA022,IzzaIM22}. 


The approach relies heavily on $\sigma$, thus the question of its \textbf{existence} naturally arises. The answer is fortunately positive. Indeed, it is always possible to find a \textit{decision tree} $\sigma$ 
for any model $\kappa$ with 100\% accuracy on any dataset $\D$ (by over-fitting,
if necessary). 
Each instance $x \in \D$ corresponds to a unique path from the root to a leaf of the 
decision tree and will be consulted, during construction
of the decision tree, at each node of the path
(and at no other node).
In the case when all features are boolean, the length of such 
a path is at most $n$, the number of features,
under the reasonable assumption that the same boolean 
feature is not redundantly tested two times on the same path.
Standard heuristic algorithms~\cite{BreimanFOS84,Utgoff89} 
which choose which feature to
branch on at each node according to a score, such as entropy,
will thus have a complexity of $O(n^2m)$,
where $m$ is the number of instances in $\D$.

\begin{proposition}\label{p:existence}
Let $\mbb{T}$ be a theory and $\D \subseteq \mbb{T}$. 
For any classifier $\kappa$, there exists a decision-tree 
classifier $\sigma$ on $\mbb{T}$ such that $\forall y \in \D$, $\sigma(y) = \kappa(y)$. 
\end{proposition}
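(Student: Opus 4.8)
The plan is to give an explicit construction of a decision tree $\sigma$ that correctly classifies every instance of $\D$, and then to argue that it is well-defined and has the claimed property. The idea is to build the tree greedily in a way that isolates the instances of $\D$ at distinct leaves, so that overfitting is possible whenever needed. First I would fix an arbitrary order $f_1, \ldots, f_n$ on the features in $\F$, and proceed recursively: given a set $S \subseteq \D$ of instances reaching the current node (starting with $S = \D$ at the root) and a feature index $i$, if all instances in $S$ receive the same class $c$ under $\kappa$, make the node a leaf labelled $c$; otherwise, branch on feature $f_i$, creating one child for each value $v \in \d(f_i)$, and recurse on the subset $S_v = \{ y \in S \mid y(f_i) = v \}$ with index $i+1$.

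The key step is to verify that this recursion terminates and produces a tree that is a genuine classifier on $\mbb{F}(\mbb{T})$ agreeing with $\kappa$ on $\D$. Termination is immediate because the feature index strictly increases along every path and there are only $n$ features: after branching on $f_1, \ldots, f_n$ in turn, any node reached is associated with a fixed value for \emph{every} feature, hence with at most one instance of $\D$, so the "all same class" stopping condition necessarily fires (a singleton, or empty, set is trivially monochromatic — empty leaves may be labelled arbitrarily). Thus the tree is finite. It defines a total function $\sigma : \mbb{F}(\mbb{T}) \to \C$ because from any node the children partition the instances by the value of the branching feature, so every instance of the feature space follows exactly one root-to-leaf path. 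Finally, for $y \in \D$: by construction $y$ always lies in the set $S$ associated with each node on its path (this is an easy induction on depth, since $y \in S_v$ for $v = y(f_i)$ whenever $y \in S$ and we branch on $f_i$); when $y$'s path ends at a leaf labelled $c$, that leaf fired because its associated set $S \ni y$ was monochromatic of class $c$, whence $\kappa(y) = c = \sigma(y)$.

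I expect the main (minor) obstacle to be purely presentational: stating the invariant "$y$ belongs to the instance-set of every node on its path, and that set is exactly $\{ z \in \D \mid z \text{ agrees with the node's partial assignment} \}$" cleanly enough that the agreement $\sigma(y) = \kappa(y)$ falls out without case analysis. There is no real difficulty — the construction is just the standard "grow the tree until pure" procedure, and the only thing being used is that $n$ is finite and each $\d(f)$ is finite, which are part of the definition of a classification theory. The complexity remark ($O(n^2 m)$ for heuristic variants) is not needed for the statement and can be omitted from the formal proof.
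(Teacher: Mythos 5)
Your proof is correct and follows essentially the same route as the paper's: build a decision tree whose leaves separate the instances of $\D$ into class-pure groups and label each leaf with the corresponding value of $\kappa$; your version just spells out the grow-until-pure recursion and its invariants explicitly. The only difference is that the paper additionally records a bound of at most $m-1$ nodes on the tree (used later for the tractability claim), which your fixed-feature-order construction does not give but which is not needed for this proposition.
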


We now introduce the novel coherent explainers, called \textit{surrogate explainers}, 
which generate weak abductive explanations from the entire feature space but using the 
predictions of a surrogate classifier. 

\begin{definition}
Let $\mbb{T}$ be a theory, $\D \subseteq \mbb{T}$, $\kappa$ and $\sigma$ two classifiers 
on $\mbb{T}$ such that $\forall y \in \D$, $\sigma(y) = \kappa(y)$.  
A \emph{surrogate explainer} of $\kappa$ is a function $\bLsu$ such that  
$\forall x \in \D$, 
$$\bLsu(\langle\mbb{T},\kappa,\D,x\rangle) = \bLdw(\langle\mbb{T},\sigma,\mathbb{F}(\mbb{T}),x).$$ 
$\bLDT$ denotes the surrogate explainer where $\sigma$ is a decision tree.
\end{definition}

\noindent \textbf{Remark:} Notice that the same surrogate classifier $\sigma$ is 
used to explain the predictions of $\kappa$ for all instances of a given dataset. However, since 
$\sigma$ and $\kappa$ may disagree on instances outside a dataset, $\bLsu$ may use different 
surrogate classifiers for different datasets.  


We show next that it is \textbf{tractable} to find a surrogate explainer $\bLDT$ which is grounded 
on a decision tree. The decision tree can be converted into a set of mutually-exclusive 
decision rules~\cite{Quinlan87} whose premises can be viewed as 
an explanation for all instances associated with the corresponding leaf.
The path from the root to a leaf is, however, often redundant
as an explanation~\cite{IzzaIM22}.
However, in polynomial time it is possible to construct 
an explainer that is complete on $\mbb{F}$, satisfies
coherence and such that explanations are weak abductive explanations of the 
decision tree. 

\begin{theorem}\label{dt:tractable}
A surrogate explainer $\bLDT$ can be found in polynomial time.
\end{theorem}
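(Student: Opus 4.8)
The plan is to exhibit an explicit polynomial-time procedure that, on input a question $\bQ = \langle\mbb{T},\kappa,\D,x\rangle$ with $\kappa$ evaluable in polynomial time, produces (a representation of) a surrogate explainer $\bLDT$ and, for the given $x$, an explanation in $\bLDT(\bQ)$. The construction has three stages. First, build a decision tree $\sigma$ that agrees with $\kappa$ on all of $\D$: this is exactly Proposition \ref{p:existence}, and the preceding discussion already notes that standard top-down heuristic tree learners (choosing a splitting feature by entropy or a similar score at each node) do this in $O(n^2 m)$ time when features are boolean, under the no-redundant-test assumption; each instance of $\D$ is consulted only along its root-to-leaf path. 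The tree has at most $m$ leaves and depth at most $n$, so it has polynomial size. Second, observe that $\sigma$, being a decision tree, is a model class for which weak abductive explanations over the whole feature space are tractable: the root-to-leaf path to the leaf containing $x$ is a (possibly redundant) sufficient reason, and by the greedy pruning algorithm of \cite{ChenT95,IzzaIM22} one removes literals one at a time, each time checking in polynomial time (by a linear-size traversal of the tree) whether the remaining literal set still forces a unique leaf label under $\sigma$, thereby extracting a weak abductive explanation of $\sigma(x)=\kappa(x)$ on $\mbb{F}(\mbb{T})$ in polynomial time.

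Third, I would argue that this output is genuinely an instance of $\bLDT$ and inherits the desired properties. By definition $\bLsu(\langle\mbb{T},\kappa,\D,x\rangle) = \bLdw(\langle\mbb{T},\sigma,\mathbb{F}(\mbb{T}),x\rangle)$, so the extracted set of literals, being a weak abductive explanation of $\sigma$ on the feature space, lies in $\bLDT(\bQ)$; it is a subset of $x$ (Feasibility) and valid because $\sigma=\kappa$ on $\D$. The coherence of $\bLDT$ is the reason we routed through the feature space of $\sigma$ in the first place: as recalled from \cite{Amgoud22}, explainers that return sufficient reasons over the \emph{whole} feature space of a single fixed classifier are automatically coherent — any two such explanations for instances of different classes cannot be jointly consistent, since a joint model would be an instance of $\mbb{F}(\mbb{T})$ on which $\sigma$ would have to take two different values. (One must note here the subtlety flagged in the remark after the definition: $\sigma$ is fixed \emph{for a given dataset} $\D$, so coherence is about pairs of questions sharing the same $\D$, which is exactly what the Coherence axiom requires.) Since all three stages — tree induction, pruning, and the per-query membership check of Theorem \ref{thm:irrpoly}-style argument — run in time polynomial in $n$ and $m$, the whole surrogate explainer $\bLDT$ is found in polynomial time.

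The main obstacle, I expect, is not any single stage but making precise what "finding an explainer in polynomial time" means: an explainer is a function on all questions, so one cannot literally tabulate it. The honest reading — and the one I would adopt — is that we produce in polynomial time a uniform polynomial-time algorithm computing $\bLDT(\bQ)$ for each query $\bQ$, i.e. the surrogate tree $\sigma$ is computed once per dataset and then each instance's explanation is read off in polynomial time. A secondary technical point to handle carefully is the boolean-feature / no-redundant-test hypothesis used to bound the tree depth by $n$; for general finite domains one argues analogously that no feature need be tested twice on a path once its value is pinned, giving depth at most $n$ and hence still-polynomial size, so the complexity bounds survive. Everything else is routine verification that the pruned path satisfies Definition \ref{dwaxp} with respect to $\sigma$.
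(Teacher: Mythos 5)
Your first stage is exactly the paper's entire proof: the theorem only claims that the surrogate explainer (i.e., the decision tree $\sigma$ representing $\bLDT$) can be constructed in polynomial time, which follows from standard tree-induction algorithms such as ID3 together with the bound of at most $m$ nodes from Proposition~\ref{p:existence}. Your second and third stages (extracting and pruning an explanation, and verifying Coherence) are correct but go beyond what this theorem asserts --- they belong to the surrounding discussion and to Theorem~\ref{thm:lsu} rather than to this proof.
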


Recall that $\kappa$ is a black-box function. Thus
finding explanations valid over the whole feature space 
$\mathbb{F}(\mbb{T})$ requires exponential time. Furthermore, 
the set of dwAXp's of $\kappa$ over $\D$ do not satisfy Coherence. 
As we have just seen, one way to achieve tractability while
guaranteeing Coherence is to consider the explanations 
over $\mathbb{F}(\mbb{T})$ of a surrogate function $\sigma$.
The fact that $\sigma$ and $\kappa$ agree on $\D$ means that they have the
same dwAXp's. Of course, since we make what is essentially an arbitrary choice
of the function $\sigma$, rather than studying $\kappa$ over the whole feature space, 
it is clear that the explanations returned do not necessarily 
correspond to explanations of $\kappa$ over $\mathbb{F}(\mbb{T})$.
On the other hand, we show that for \emph{any} envelope $X$, 
there exists a surrogate function $\sigma_X$ 
(expressible as a decision list)
such that for all $x \in \D$, any $E \in X$ such that 
$E \subseteq x$ is a dwAXp of $\langle\mbb{T},\sigma_X,\mbb{F}(\mbb{T}),x\rangle$.

\begin{proposition} \label{prop:surrogate}
	Let $\mbb{T}$ be a theory, $\D \subseteq \mbb{F}(\mbb{T})$, 
	$\kappa$ a classifier and $X$ an envelope under $(\D,\kappa)$. There 
	exists a surrogate function $\sigma_X : \mbb{F}(\mbb{T}) \rightarrow \C$
	such that (1) $\sigma_X$ and $\kappa$ are equal on $\D$ and 
	(2) for all $x \in \D$, $E \in X$ and $E \subseteq x$ implies
	$E$ is a dwAXp of $\langle\mbb{T},\sigma_X,\mbb{F}(\mbb{T}),x\rangle$.
\end{proposition}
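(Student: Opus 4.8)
The plan is to explicitly construct the surrogate function $\sigma_X$ as a decision list, using the envelope $X$ to supply the rules. Recall that an envelope $X$ under $(\D,\kappa)$ contains, for every $x \in \D$, at least one $E \in X$ with $E \subseteq x$, every $E \in X$ is a dwAXp of some instance in $\D$, and $X$ is coherent. The idea is to turn each $E \in X$ into a decision rule ``if $E$ holds then output $c_E$'', where $c_E$ is the common class $\kappa(z)$ shared by all $z \in \D$ with $E \subseteq z$ (this class is well-defined precisely because $E$ is a dwAXp of some such $z$, so the second bullet of Definition~\ref{dwaxp} forces all $\D$-instances extending $E$ into a single class). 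We then order these rules in some arbitrary sequence $E_1, \ldots, E_k$ and define $\sigma_X(y)$ to be $c_{E_i}$ for the first index $i$ with $E_i \subseteq y$, and some fixed default class if no $E_i$ matches. Since this is a decision list, $\sigma_X$ is well-defined on all of $\mbb{F}(\mbb{T})$.

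Next I would verify property (1), that $\sigma_X$ and $\kappa$ agree on $\D$. For $y \in \D$, the third bullet of Definition~\ref{def:envelope} guarantees some $E_i \in X$ with $E_i \subseteq y$, so the decision list does fire on $y$; let $i$ be the first such index. Then $\sigma_X(y) = c_{E_i} = \kappa(y)$ because $y$ is one of the $\D$-instances extending $E_i$ and $c_{E_i}$ was defined as the shared class of exactly those instances. Hence $\sigma_X(y) = \kappa(y)$ for all $y \in \D$.

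For property (2), fix $x \in \D$ and $E \in X$ with $E \subseteq x$. We must show $E$ is a dwAXp of $\langle\mbb{T},\sigma_X,\mbb{F}(\mbb{T}),x\rangle$, i.e.\ $E \subseteq x$ (immediate) and every instance $y \in \mbb{F}(\mbb{T})$ with $E \subseteq y$ satisfies $\sigma_X(y) = \sigma_X(x)$. Take such a $y$. Since $E = E_j$ for some index $j$ and $E_j \subseteq y$, the decision list fires on $y$ at some index $i \le j$; similarly it fires on $x$ at some index $i' \le j$. The main work is to argue that $c_{E_i} = c_{E_{i'}}$. This is where coherence of $X$ enters: I claim that whenever $E_a \subseteq y'$ and $E_b \subseteq y'$ for a common instance $y'$, we must have $c_{E_a} = c_{E_b}$. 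Indeed, $E_a \cup E_b \subseteq y'$ is consistent; if $c_{E_a} \ne c_{E_b}$, then picking witnesses $z_a, z_b \in \D$ with $E_a \subseteq z_a$, $E_b \subseteq z_b$ and $\kappa(z_a) = c_{E_a} \ne c_{E_b} = \kappa(z_b)$ would make $\{E_a, E_b\}$ incoherent, contradicting coherence of $X$. Applying this with $y' = x$ (which contains both $E$ and $E_{i'}$, since $E_{i'}$ fired on $x$) gives $c_{E_{i'}} = c_E$, and applying it with $y' = y$ (which contains both $E$ and $E_i$) gives $c_{E_i} = c_E$; hence $\sigma_X(y) = c_{E_i} = c_E = c_{E_{i'}} = \sigma_X(x)$, as required.

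The main obstacle, and the point that needs the most care, is exactly this coherence-to-agreement step: one has to be sure that the witnesses $z_a, z_b$ supplied by the dwAXp condition are the ones that trigger the incoherence clause of Definition~\ref{coherent-set}, and that ``$c_E$ is the common class of all $\D$-instances extending $E$'' is genuinely well-defined for every $E \in X$ (which relies on the second bullet of Definition~\ref{def:envelope} together with Definition~\ref{dwaxp}). A secondary, purely bookkeeping point is the default class: one must make sure the case where no rule fires cannot arise for instances $y$ extending some $E \in X$ used in the argument — but it cannot, since $E$ itself is a firing rule, so the default is only ever used on instances outside the ``cone'' of every $E_i$, and plays no role in properties (1) or (2). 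Noting that $\sigma_X$ is a decision list (hence of a tractable-explanation form) is then an easy observation that also connects this result to the decision-list remark preceding the proposition.
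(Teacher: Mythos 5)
Your construction is essentially the paper's own: the paper defines $\sigma_X(z)=\kappa(x)$ whenever some $E\in X$ satisfies $E\subseteq z$ and $E\subseteq x$ for $x\in\D$ (arbitrary class elsewhere), proves well-definedness via exactly your two observations (the dwAXp condition fixes a single class $c_E$ per rule, and coherence forces agreement between any two rules firing on a common instance), and even notes afterwards that $\sigma_X$ is expressible as a decision list of at most $m$ rules. Your first-match decision-list presentation plus the lemma that all matching rules agree is just a repackaging of the same argument, so the proof is correct and takes the same route.
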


Observe that $\sigma_X$ (defined in the proof of Proposition~\ref{prop:surrogate}) 
can be described by a decision list containing at most $m$ rules (one for each instance in $D$).
Of course, in practice, there may be many redundant rules that can be eliminated from this list.
We could also express $\sigma_X$ as a decision tree
or as a d-DNNF, a language of boolean functions which also 
allows tractable explaining~\cite{HuangIICA022}.


Let us now give the list of axioms satisfied by surrogate explainers and hence show that they are 
coherent explainers. 


\begin{theorem}\label{thm:lsu}
The explainer $\bLsu$ satisfies Feasibility, Validity, Success and Coherence. 
It violates the remaining axioms.
\end{theorem}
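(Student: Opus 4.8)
The plan is to exploit the defining identity $\bLsu(\langle\mbb{T},\kappa,\D,x\rangle) = \bLdw(\langle\mbb{T},\sigma,\mbb{F}(\mbb{T}),x\rangle)$, where $\sigma$ is a classifier agreeing with $\kappa$ on $\D$, so that every claim reduces to a statement about weak abductive explanations of the surrogate $\sigma$ over the full feature space. For the four positive axioms I would first observe that $\bLsu(\bQ)\subseteq\bLdw(\bQ)$ for every question: if $E$ is a dwAXp of $\sigma$ over $\mbb{F}(\mbb{T})$ for $x$, then $E\subseteq x$, and since $\sigma$ and $\kappa$ coincide on $\D$ (in particular at $x$ and at any $y\in\D$ with $E\subseteq y$), $E$ is also a dwAXp of $\kappa$ over $\D$. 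Theorem~\ref{th-cara} then immediately yields \emph{Feasibility} and \emph{Validity}. \emph{Success} follows because $x$ itself is always a dwAXp (Property~\ref{p-da-dw}), hence $x\in\bLsu(\bQ)\neq\emptyset$.

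For \emph{Coherence}, note that for two questions $\bQ=\langle\mbb{T},\kappa,\D,x\rangle$ and $\bQ'=\langle\mbb{T},\kappa,\D,x'\rangle$ with $\kappa(x)\neq\kappa(x')$ the \emph{same} surrogate $\sigma$ is used, since the triple $(\mbb{T},\D,\kappa)$ is unchanged. Given $E\in\bLsu(\bQ)$ and $E'\in\bLsu(\bQ')$, if $E\cup E'$ were consistent it could be extended to a complete instance $z\in\mbb{F}(\mbb{T})$ (every domain is nonempty); then $E\subseteq z$ forces $\sigma(z)=\sigma(x)=\kappa(x)$ and $E'\subseteq z$ forces $\sigma(z)=\sigma(x')=\kappa(x')$, contradicting $\kappa(x)\neq\kappa(x')$. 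Hence $E\cup E'$ is inconsistent. (Alternatively one may invoke the result of \cite{Amgoud22} that feature-space sufficient-reason explainers are coherent, applied to $\sigma$.) Combining Feasibility, Success and Coherence with Theorem~\ref{cara:coh} also confirms that $\bLsu$ is a coherent explainer.

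For the negative part I would argue by counterexamples. Since $x\in\bLsu(\bQ)$ always and $x$ is a complete instance, \emph{Irreducibility} and \emph{Strong Irreducibility} fail whenever the explained instance has some feature whose modification never changes the class; e.g. in Example~\ref{ex3}, $x_1=\{(f_1,0),(f_2,0)\}\in\bLsu(\bQ_1)$, yet every instance agreeing with $x_1$ on $f_2$ has class $0$, so deleting $(f_1,0)$ is not witnessed — and this holds for any choice of surrogate. \emph{Completeness} is impossible because $\bLsu$ already satisfies Feasibility and Coherence, which by $\mathbf{I}_2$ of Theorem~\ref{th:incompatibility} are incompatible with Completeness. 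For \emph{Strong Completeness}, \emph{Monotonicity} and \emph{Counter-Monotonicity} I would exhibit a concrete surrogate explainer — the over-fitted decision-tree explainer $\bLDT$ of Proposition~\ref{p:existence} — together with a question on which $\sigma$ and $\kappa$ disagree off $\D$: then some $E\subseteq x$ is a dwAXp of $\kappa$ over $\mbb{F}(\mbb{T})$ (hence required by Strong Completeness) but fails to be one for $\sigma$, while enlarging $\D$ replaces the surrogate altogether, so explanations are in general neither preserved nor merely discarded.

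The main obstacle is precisely this last point: ``surrogate explainer'' names a family parameterised by the per-dataset choice of $\sigma$, and a pathological member that always picks $\sigma=\kappa$ coincides with $\bLw$ and thus satisfies Strong Completeness, Monotonicity and CM. Consequently for those three axioms the counterexamples must be stated for genuinely-surrogate instances (e.g. $\bLDT$ with over-fitting), matching the $\times$ entries of Table~\ref{tab2}, whereas Irreducibility, Strong Irreducibility and Completeness fail for \emph{every} surrogate explainer and are handled uniformly as above.
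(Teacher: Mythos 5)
Your argument for the four positive axioms is essentially the paper's: both reduce everything to the fact that $\bLsu(\bQ)=\bLdw(\langle\mbb{T},\sigma,\mbb{F}(\mbb{T}),x\rangle)$ and that $\sigma$ agrees with $\kappa$ on $\D$; your direct extension-to-an-instance argument for Coherence is just an unpacking of the citation the paper uses, and your treatment of (Strong) Irreducibility via $x\in\bLsu(\bQ)$ and of Completeness via $\mathbf{I}_2$ is correct (and in fact more explicit than the paper's one-line appeal for Irreducibility). The one place you stop short is the negative claims for Strong Completeness, Monotonicity and CM: you describe the shape of the required counterexample but do not construct it. The paper does: a two-feature theory with $\D_1=\{x_1,x_2\}\subset\D_2=\{x_1,x_2,x_3\}$ and two explicit surrogates $\sigma_1,\sigma_2$ that disagree with $\kappa$ outside their respective datasets, from which one reads off $\bLsu(\bQ_1)=\{x_1\}$ versus $\bLsu(\bQ'_1)=\{x_1,\{(f_2,0)\}\}$ (CM fails), the symmetric failure of Monotonicity, and the failure of (Strong) Completeness because $\{(f_2,0)\}$ is omitted although no instance of the whole feature space refutes it. A complete proof needs this instantiation, but it is routine and your sketch clearly leads there.

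Your closing observation is a genuine point that the paper glosses over: the definition does not exclude $\sigma=\kappa$, in which case $\bLsu=\bLw$ and Strong Completeness, Monotonicity and CM \emph{are} satisfied (Theorem~\ref{th:blw}). So for those three axioms the theorem's ``violates'' clause can only be read as ``is not guaranteed by the family'' (witnessed by a genuinely surrogate member such as the paper's $\sigma_1,\sigma_2$ or an over-fitted $\bLDT$), whereas (Strong) Irreducibility and Completeness fail for \emph{every} member, exactly as you say. Distinguishing these two kinds of failure is a useful refinement of the statement.
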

\section{Worked Example}\label{secWE}

As an example, we study the well-known zoo dataset~\cite{zoo} which consists of 101 instances
corresponding to animals from a zoo. Each animal is described by 
16 features and belongs to one of
7 classes: Mammal, Bird, Reptile, Fish, Amphibian, Bug or Invertebrate.
For example, \emph{antelope} is a mammal and \emph{crow} is a bird.
We assume that the classifier $\kappa$ correctly classifies all 101 instances in the dataset.

In the case of a black-box classifier $\kappa$ over a large feature space, we cannot perform
an exhaustive search over all instances. 
First consider concise abductive explanations based on this dataset (cAXp's).
A cAXp of \emph{antelope} is $\{$(milk,1)$\}$, since all other animals in the dataset that give milk are also mammals, and a cAXp of \emph{crow} is $\{$(feathers,1)$\}$, since all other animals in the dataset with feathers are also birds. Although no animal in the dataset milks its young and has feathers, these explanations are consistent and hence $\bLda$ (same for $\bLdw$) does not satisfy coherence.



If we now look for coherent explanations, we find a size-14 irrefutable explanation for classifying \emph{antelope} as a mammal, namely
$\{$(hair,1), (feathers,0), (eggs,0), (milk,1), (airborne,0), (aquatic,0), (predator,0), (toothed,1), (backbone,1), (breathes,1), (venomous,0), (legs,4), (tail,1), (catsize,1)$\}$.
This is minimal in that all size-13 explanations obtained by deleting a literal from this explanation
are \emph{not} irrefutable. This demonstrates that there is a risk that irrefutable explanations may be very large
(as in this case involving 14 of the 16 features) and hence not very informative for the user. 

The solution embodied by the notion of irrefutable explanations
guarantees coherence but at the expense of the size
of the explanation. Another solution is the use of
a surrogate classifier as discussed in Section~\ref{sec:surr}.
We applied the classic ID3 algorithm~\cite{Quinlan86} to the zoo
dataset to obtain a decision tree with 100\% accuracy
on the dataset. Since the most discriminating feature is
milk which splits all mammals from all non-mammals
in the dataset, this is the feature associated with
the root of the decision tree found by ID3. The 
resulting unique AXp of \emph{antelope} for this decision tree
is simply $\{$(milk,1)$\}$. This size-1 explanation based
on a surrogate classifier is in stark contrast
with the size-14 irrefutable explanation. The explanation
for \emph{crow} is $\{$(milk,0), (feathers,1)$\}$, which is,
of course, coherent with the explanation $\{$(milk,1)$\}$ for \emph{antelope}.
\section{Discussion}\label{conclusion}

This paper presented a comprehensive study of explainers that generate sufficient reasons 
from samples, thereby advancing our understanding of their families, strengths and weaknesses. 
It proposed some basic axioms, or formal properties, and showed that some subsets uniquely define 
various families of explainers that all generate weak abductive explanations. In other words, 
it identified the unique (family of) explainers that satisfy a given subset of axioms.   

The reason of the diversity of families is that certain axioms are incompatible, 
making it tricky to define efficient sample-based functions that satisfy all the axioms. 
Indeed, it is not possible to define a function that returns concise (irreducible) 
explanations while guaranteeing at least one explanation for every instance and global 
coherence of explanations of all instances. Thus, one of the three properties should be abandoned. 
The right-to-an-explanation imposed by legislators \cite{regul21,WH22} impels to keep Success and 
the need of non-erroneous explanations suggests keeping Coherence. Thus, the compromise would be 
to sacrifice Irreducibility and accept explanations that contain unnecessary information. 
For the same reasons, it is reasonable to sacrifice Completeness in the incompatible set $\mathbf{I_2}$. 
These choices have been made by our novel family of Coherent explainers. 
On the positive side, some coherent explainers ($\bLr$ and $\bLsu$) allow tractable explaining. 
The worked example showed that irrefutable explanations by $\bLr$ may contain a large number of features 
compared to those provided by $\bLsu$. 
However, $\bLr$ \textbf{coincides} with $\bLw$ when it uses the whole feature space, 
while this is not the case of $\bLsu$ which, even if it is based on a dataset, generates weak abductive 
explanations from the whole feature space. Indeed, the equality $\bLsu(\bQ) = \bLw(\bQ)$ may not hold since  
the surrogate model $\sigma$ may differ from the original classifier $\kappa$ outside the dataset. 
Finally, as shown in Table~\ref{tab2}, both functions violate the Fidelity property which states 
that an explainer would recover \textbf{all} the "real" weak-abductive explanations, i.e., those generated 
from the feature space. Note that the two functions recover some but not necessarily all real explanations. 
This is not surprising since a sample-based approach reasons under incomplete information. 


\section{Related Work and Conclusion}\label{related-work}

There are several works in the AI literature on explaining classification models. Some of them explain the inherent reasoning of models (e.g., \cite{marquis22a,leite22,IgnatievS21,darwiche18}) while others consider classifiers as 
black-boxes and look for possible correlations between instances and the classes assigned to them. 
In this second category, explanations are generally feature-based, like \emph{sufficient reasons} (eg. 
\cite{Audemard23,darwiche20,DhurandharCLTTS18,IgnatievNM19,Ribeiro16,Ribeiro0G18}), 
\emph{contrastive/counterfactual} explanations (eg. \cite{DhurandharCLTTS18,abs-1905-12698,abs-1711-00399}), or \emph{semi-factuals} (eg. \cite{McCloy2002,IzzaIS024}). 
Our paper fits within the second category with a focus on sufficient reasons. However, unlike 
most works in the literature which explore the whole feature space to generate explanations, we 
follow a sample-based approach limiting search on a subset of feature space. The closest works 
to our are those from \cite{Amgoud23a,ecai23,Ribeiro16,Ribeiro0G18}. 
The authors in \cite{Ribeiro16,Ribeiro0G18} proposed the well-known LIME and Anchors models which 
generate explanations using a sample of instances that are in the neighborhood of the instance being explained. 
It has been shown in \cite{SETZ,Haned} that they may return incorrect explanations. 
The origin of this deficiency has been uncovered in \cite{Amgoud23a} where the author proved 
that it is not possible to define an explainer that provides subset-minimal 
abductive explanations and guarantees Success and Coherence.  
As a consequence of this negative result, various sample-based explainers have been proposed, 
some of which satisfy Coherence at the cost of Success  \cite{Amgoud23b} while others promote Success \cite{ecai23}.   

Our paper generalized the above negative result, introduced novel axioms and identified several other incompatible properties. 
It provided full characterizations of all families of explainers that can be defined, including one ($\bLc$) that ensures Success and Coherence. $\bLc$ is the first family of sample-based explainers that that satisfy both axioms. We proved that some of its instances are tractable. 

\vspace{0.1cm}
This papers lends itself to several developments. 
A challenging open problem is the definition of sample-based coherent explainers that recover as many "real" explanations as possible while being tractable. A possible solution would be the 
use of constraints on input data in order to solve conflicts between dwAXp's. 
Another perspective is an axiomatic study of sample-based contrastive explanations.

\begin{appendix}

\section{Proofs of Properties} 
\setcounter{property}{0}

\begin{property}\cite{ecai23}
	Let $\mbb{T} = \langle \F, \d, \C\rangle$ be a theory with $n = |\F|$, $\D \subseteq \mbb{F}(\mbb{T})$ 
	with $m = |\D|$ and $E \in \mbb{E}(\mbb{T})$. 
		Testing whether $E$ is a dwAXp can be achieved in $O(mn)$ time.
\end{property}
\begin{proof}
	Proved in \cite{ecai23}. 
\end{proof}

\begin{property}\cite{ecai23}
Let $\mbb{T} = \langle \F, \d, \C\rangle$ be a theory with $n = |\F|$, $\D \subseteq \mbb{F}(\mbb{T})$ 
with $m = |\D|$ and $E \in \mbb{E}(\mbb{T})$. 
		Finding a cAXp can be achieved in $O(mn^2)$ time.
\end{property}

\begin{proof}
	Proved in \cite{ecai23}. 
\end{proof}
\begin{property}
For any $\D \subseteq \mbb{F}(\mbb{T})$ and any classifier $\kappa$, 
$\D$ is coherent under ($\D,\kappa$). 	
\end{property}

\begin{proof}
Follows from the fact that instances are pairwise inconsistent by definition. 	
\end{proof}
\begin{property}
    The function $\bLt$ is a coherent explainer. 
\end{property}

\begin{proof}
Straightforward from the definition.
\end{proof}
\begin{property}
Let $\mbb{T}$ be a theory, $\D \subseteq \mbb{F}(\mbb{T})$, and $\kappa$ a classifier. 
For all $X , X' \subseteq \mbb{E}(\mbb{T})$, if $X$ and $X'$ are irrefutable envelopes under $(\D,\kappa)$, then $X = X'$.
\end{property}

\begin{proof}
	Let $\mbb{T}$ be a theory, $\D \subseteq \mbb{F}(\mbb{T})$, and $\kappa$ a classifier. 
	
	$\blacktriangleright$ Assume that $X$ and $X'$ are irrefutable envelopes under $(\D,\kappa)$. Suppose that $\exists E \in X\setminus X'$. From the definition of irrefutable envelope,  $\nexists E' \in \mbb{W}$ such that $\{E, E'\}$ is incoherent. 
	Hence, $X' \cup \{E\}$ satisfies the second condition of the same definition. This contradicts the maximality of $X'$. 
	So, $X \subseteq X'$. In the same way, we can show that $X' \subseteq X$
 and hence $X = X'$.
\end{proof}


\begin{property}
        $\bLr$ is a coherent explainer, and 
  for any question $\bQ = \langle \mbb{T}, \kappa, \D, x \rangle$, 
            $x \in \bLr(\bQ)$.
\end{property}

\begin{proof}
$\bLr$ is a coherent explainer since from Proposition~\ref{prop:irr-env}, 
$\mathtt{Irr}(\D,\kappa) \in \mathtt{Coh}(\D,\kappa)$ and 
for any question $\bQ$ where $\bQ = \langle \mbb{T}, \kappa, \D, x \rangle$, 
$\bLr(\bQ) = \{E \in \mathtt{Irr}(\D,\kappa) \mid E \subseteq x\}$.

The second property follows from the first item of Proposition~\ref{prop:irr-env}, 
namely the fact that $\D \subseteq \mathtt{Irr}(\D,\kappa)$.  
\end{proof}


\begin{property}
	Let $\bQ = \langle \mbb{T}, \kappa, \D, x \rangle$ be a question. 
	\begin{itemize}
		\item $x \in \bLdw(\bQ)$.
		\item $\bLda(\bQ) \subseteq \bLdw(\bQ)$.
	\end{itemize}	
\end{property}	

\begin{proof}
	The properties follow straightforwardly from the definitions. 
\end{proof}

\begin{property}
Let $\bL$ be an explainer that satisfies Completeness. For any 
question $\bQ = \langle \mbb{T}, \kappa, \D, x \rangle$, $x \in \bL(\bQ)$.
\end{property}

\begin{proof}
Let $\bL$ be an explainer that satisfies Completeness. 
Consider the question $\bQ = \langle \mbb{T}, \kappa, \D, x \rangle$, $x \in \bL(\bQ)$ and 
assume that $x \notin \bL(\bQ)$. Completeness of $\bL$ implies that $\exists y \in \D$ such that 
$x \subseteq y$ and $\kappa(x) \neq \kappa(y)$. From definition of instances, $x = y$ which is a 
contradiction.
\end{proof}
\section{Proofs of Propositions} 
\setcounter{prop}{0}
\begin{proposition}
The following implications hold. 
\vspace{-0.15cm}
\begin{itemize}
\item Completeness $\Rightarrow$ Strong Completeness, \\
Strong Completeness $\Rightarrow$ Success.
\item Irreducibility $\Rightarrow$ Strong Irreducibility.
\item Success, Feasibility and Coherence $\Rightarrow$ Validity.
\item Feasibility, Validity and Completeness $\Rightarrow$ Counter-Monotonicity. 
\end{itemize} 
\end{proposition}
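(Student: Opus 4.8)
The plan is to prove each of the four implications in Proposition~\ref{links} separately, since they are logically independent. Each one is a direct argument unwinding the definitions in Table~\ref{tab1}, so I expect no deep obstacle; the only place that requires a moment's care is the last implication.

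\medskip

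\noindent\textbf{Completeness $\Rightarrow$ Strong Completeness.} Suppose $\bL$ satisfies Completeness. Fix a question $\bQ = \langle\mbb{T},\kappa,\D,x\rangle$ and $E \subseteq x$ with $E \notin \bL(\bQ)$. Completeness yields $y \in \D$ with $E \subseteq y$ and $\kappa(y)\neq\kappa(x)$; since $\D \subseteq \mbb{F}(\mbb{T})$, this same $y$ witnesses Strong Completeness. \textbf{Strong Completeness $\Rightarrow$ Success.} Suppose $\bL(\bQ) = \emptyset$. Then in particular $\emptyset \notin \bL(\bQ)$ and $\emptyset \subseteq x$, so Strong Completeness gives some $y \in \mbb{F}(\mbb{T})$ with $\emptyset \subseteq y$ and $\kappa(y) \neq \kappa(x)$ --- but wait, this only gives a contradiction if no such $y$ exists. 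The correct route is: take $E = x$, which trivially satisfies $E \subseteq x$; if $x \notin \bL(\bQ)$ then Strong Completeness gives $y \in \mbb{F}(\mbb{T})$ with $x \subseteq y$, forcing $y = x$ (instances are total assignments), hence $\kappa(y)=\kappa(x)$, a contradiction. So $x \in \bL(\bQ)$, hence $\bL(\bQ)\neq\emptyset$.

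\medskip

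\noindent\textbf{Irreducibility $\Rightarrow$ Strong Irreducibility.} Immediate: the witness $x' \in \D$ produced by Irreducibility lies in $\mbb{F}(\mbb{T})$ since $\D \subseteq \mbb{F}(\mbb{T})$, so it also witnesses Strong Irreducibility. \textbf{Success, Feasibility, Coherence $\Rightarrow$ Validity.} Suppose for contradiction that Validity fails: there is $\bQ = \langle\mbb{T},\kappa,\D,x\rangle$, $E \in \bL(\bQ)$, and $y \in \D$ with $E \subseteq y$ and $\kappa(y)\neq\kappa(x)$. Consider $\bQ' = \langle\mbb{T},\kappa,\D,y\rangle$ (a valid question since $y \in \D$). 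By Success, $\bL(\bQ')\neq\emptyset$, so pick $E' \in \bL(\bQ')$; by Feasibility $E' \subseteq y$. Then $E \cup E' \subseteq y$, so $E \cup E'$ is consistent (a subset of an instance). But Coherence applied to $\bQ,\bQ'$ with $\kappa(x)\neq\kappa(y)$ says $E \cup E'$ is inconsistent --- contradiction.

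\medskip

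\noindent\textbf{Feasibility, Validity, Completeness $\Rightarrow$ Counter-Monotonicity.} This is the one needing slightly more thought. Take $\bQ = \langle\mbb{T},\kappa,\D,x\rangle$ and $\bQ' = \langle\mbb{T},\kappa,\D',x\rangle$ with $\D \subseteq \D'$; we must show $\bL(\bQ') \subseteq \bL(\bQ)$. Let $E \in \bL(\bQ')$. By Feasibility, $E \subseteq x$. Suppose toward a contradiction that $E \notin \bL(\bQ)$. Then Completeness applied to $\bQ$ gives $y \in \D$ with $E \subseteq y$ and $\kappa(y) \neq \kappa(x)$. But $\D \subseteq \D'$, so $y \in \D'$ as well, and then $E \in \bL(\bQ')$ with $E \subseteq y$, $y \in \D'$, $\kappa(y)\neq\kappa(x)$ contradicts Validity of $\bL$ on $\bQ'$. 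Hence $E \in \bL(\bQ)$, establishing CM. The key insight here is that Completeness is what lets us ``lift'' a hypothetical exclusion at the smaller dataset into a local-validity violation at the larger dataset.
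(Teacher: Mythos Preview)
Your proof is correct and follows essentially the same route as the paper's own proof for all four implications. The only cosmetic difference is that in the Strong Completeness $\Rightarrow$ Success step you briefly detour through the false start $E=\emptyset$ before arriving at the intended witness $E=x$; the paper goes directly to $E=x$, so you may wish to excise that aside in a final write-up.
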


\begin{proof}
	Let $\bL$ be an explainer. 

 
	$\blacktriangleright$ Assume that $\bL$ satisfies Completeness. 
    Let $E \subseteq x$ be such that $E \notin \bL(\bQ)$. From Completeness, 
    $\exists y \in \D$ s.t. $E \subseteq y$ and $\kappa(y) \neq \kappa(x)$. 
    Thus, $\exists y \in \mbb{F}(\mbb{T})$ s.t. $E \subseteq y$ and $\kappa(y) \neq \kappa(x)$. So, $\bL$ satisfies Strong Completeness. 
    
    Assume that $\bL$ satisfies Strong Completeness and that $x \notin \bL(\bQ)$ such that $\bQ = \langle \mbb{T}, \kappa, \D, x \rangle$. From Strong Completeness, 
    $\exists y \in \mbb{F}(\mbb{T})$ such that $x \subseteq y$ and $\kappa(x) \neq \kappa(y)$. By definition of instances, $x = y$. Since $\kappa$ assigns a single 
    value to every instance, it follows that $\kappa(x) = \kappa(y)$. 
    Thus $x \in \bL(\bQ)$  and $\bL(\bQ) \neq \emptyset$. 
    
    $\blacktriangleright$ Strong Irreducibility follows straightforwardly from 
    Irreducibility since $\D \subseteq \mbb{F}(\mbb{T})$. 
 
	$\blacktriangleright$ Assume that $\bL$ satisfies Success, Feasibility, Coherence and violates Validity. 
	Then, there exists a question $\bQ = \langle \mbb{T}, \kappa, \D, x \rangle$ such that  
	$\exists E \in \bL(\bQ)$ and $\exists y \in \D$ such that $E \subseteq y$ and $\kappa(x) \neq \kappa(y)$.  
	Consider now the question $\bQ' = \langle \mbb{T}, \kappa, \D, y \rangle$. 
	From Success, $\exists E' \in \bL(\bQ')$. From Feasibility, $E' \subseteq y$ so $E \cup E' \subseteq y$. 
	Since $y$ is consistent, $E \cup E'$ is consistent \textbf{(a)}. 
	From Coherence of $\bL$, since  $\kappa(x) \neq \kappa(y)$, it follows that $E \cup E'$ is inconsistent, which contradicts \textbf{(a)}.

	$\blacktriangleright$ Assume that $\bL$ satisfies Feasibility, Validity and Completeness and violates Counter-Monotonicity. 
	Hence, there exist two samples $\D, \D'$ such that $\D \subseteq \D'$, $\exists x \in \D$ such that 
	$\exists E \in \bL(\bQ')$ while  $E \notin \bL(\bQ)$, with $\bQ = \langle \mbb{T}, \kappa, \D, x \rangle$ and $\bQ' = \langle \mbb{T}, \kappa, \D', x \rangle$. From Feasibility, $E \subseteq x$. 
	Since $E \notin \bL(\bQ)$, then from Completeness of $\bL$, $\exists y \in \D$ such that $E \subseteq y$ and 
	$\kappa(x) \neq \kappa(y)$ (\textbf{a}). 
	Validity of $\bL$ ensures that $\nexists z \in \D'$ such that $E \subseteq z$ and 	$\kappa(x) \neq \kappa(z)$. 
	Since $\D \subseteq \D'$, we deduce that $y \in \D'$ which contradicts (\textbf{a}). 
\end{proof}	


\begin{proposition}
	Let $\mbb{T}$ be a theory, $\D \subseteq \mbb{F}(\mbb{T})$, and $\kappa$ a classifier. 
	\begin{itemize} 
		\item Let $X \in \mathtt{Coh}(\D,\kappa)$. For any $x \in \D$, for any $E \in X$, 
		if $E \subseteq x$, then $E \in \bLdw(\langle\mbb{T},\kappa,\D, x\rangle)$.
		\item $\D \in \mathtt{Coh}(\D,\kappa)$. 
            \item For any $E \in \bLdw(\langle\mbb{T},\kappa,\D, x\rangle)$, 
        $\exists X \in \mathtt{Coh}(\D,\kappa)$ such that $X$ is a subset-maximal envelope of $\mathtt{Coh}(\D,\kappa)$ and $E \in X$. 
	\end{itemize}
\end{proposition}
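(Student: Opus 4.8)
The plan is to prove each of the three items in turn, moving from the easiest to the one requiring a small construction.

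\textbf{Second item first.} The claim $\D \in \mathtt{Coh}(\D,\kappa)$ is essentially Property~\ref{sample-coh} combined with a check of the envelope conditions of Definition~\ref{def:envelope}. First, $\D$ is coherent under $(\D,\kappa)$ by Property~\ref{sample-coh}. Second, for each $x \in \D$, the instance $x$ itself satisfies $x \in \bLdw(\langle\mbb{T},\kappa,\D,x\rangle)$ — this is the first bullet of Property~\ref{p-da-dw} (every instance is a dwAXp of itself, since $x \subseteq x$ and the second condition of Definition~\ref{dwaxp} holds trivially because $x \subseteq y$ for $y \in \D$ forces $y = x$). So $\D = \{x : x \in \D\}$ meets the second envelope condition. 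Third, for each $x \in \D$ we have $x \in \D$ with $x \subseteq x$, so the third envelope condition holds. Hence $\D \in \mathtt{Coh}(\D,\kappa)$.

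\textbf{First item.} Let $X \in \mathtt{Coh}(\D,\kappa)$, fix $x \in \D$ and $E \in X$ with $E \subseteq x$. I want $E \in \bLdw(\langle\mbb{T},\kappa,\D,x\rangle)$, i.e.\ the two conditions of Definition~\ref{dwaxp}. The first, $E \subseteq x$, is given. For the second, suppose toward a contradiction that there is $y \in \D$ with $E \subseteq y$ and $\kappa(y) \neq \kappa(x)$. By the second envelope condition applied to $E \in X$, there is some $x' \in \D$ with $E \in \bLdw(\langle\mbb{T},\kappa,\D,x'\rangle)$; in particular $E \subseteq x'$, and by the dwAXp property every $z \in \D$ containing $E$ has $\kappa(z) = \kappa(x')$. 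Since $E \subseteq x$ and $E \subseteq y$ this gives $\kappa(x) = \kappa(x') = \kappa(y)$, contradicting $\kappa(y) \neq \kappa(x)$. (Alternatively, and perhaps more cleanly: the singleton $\{E\} \subseteq X$ would witness incoherence of $X$ via $y$ and $z := x'$ — with $E \cup E = E$ consistent, $E \subseteq y$, $E \subseteq x'$ and $\kappa(y) \neq \kappa(x')$ — contradicting coherence of $X$. One must just be careful about whether $E \cup E'$ with $E = E'$ is intended in Definition~\ref{coherent-set}; the first argument avoids this subtlety.)

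\textbf{Third item.} Fix $E \in \bLdw(\langle\mbb{T},\kappa,\D,x\rangle)$. Consider the set $X_0 := \D \cup \{E\}$. I claim $X_0 \in \mathtt{Coh}(\D,\kappa)$: the second and third envelope conditions are inherited from $\D$ being an envelope plus the fact that $E \in \bLdw(\langle\mbb{T},\kappa,\D,x\rangle)$ with $x \in \D$ (and $E \subseteq x$ by feasibility of dwAXp), so it only remains to show $X_0$ is coherent. Coherence can only fail through a pair involving $E$, so suppose $E' \in X_0$, $E \cup E'$ consistent, and there are $y, z \in \D$ with $E \subseteq y$, $E' \subseteq z$, $\kappa(y) \neq \kappa(z)$. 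Since $E$ is a dwAXp over $\D$, $E \subseteq y \in \D$ forces $\kappa(y) = \kappa(x)$. If $E' \in \D$ then $E' = z$ and $E' \subseteq z$ forces the instance $E'$ to equal $z$, but also $E \cup E'$ consistent with $E \subseteq y$, $E' = z$... here I need to argue $\kappa(z) = \kappa(x)$: since $E \subseteq y$ and $E$ is a dwAXp, and since $E$ and $E'=z$ are consistent, actually $z$ need not contain $E$, so this needs more care — the right move is instead to take $X$ to be \emph{any} subset-maximal element of $\mathtt{Coh}(\D,\kappa)$ containing $X_0$, which exists by Zorn's lemma / finiteness since $\mathtt{Coh}(\D,\kappa) \ne \emptyset$ is a finite poset under $\subseteq$, once we have verified $X_0$ is itself coherent. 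So the actual obstacle is verifying coherence of $X_0 = \D \cup \{E\}$: given the potential incoherent pair $(E, E')$ with $E' \in \D$, consistency of $E \cup E'$ together with $E' \in \D$ being an instance and $E \subseteq x$ being a dwAXp must be shown to preclude $\kappa(E') \neq \kappa(y)$ for all $y \in \D$ with $E \subseteq y$. This is exactly the kind of argument used to show $\bLdw$ violates Coherence in general, so it will \emph{not} go through for arbitrary $E$ — meaning the correct statement must really be that we enlarge not by a single $E$ but work with the fact that $E$ already lies in \emph{some} maximal envelope; concretely, I expect the intended proof is: start from $X_0 = \{E\} \cup \{x' \in \D : \text{pairing } x' \text{ with } E \text{ is coherent}\}$ is not obviously right either. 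I therefore expect the cleanest route is: show the family of coherent subsets of $\mbb{W} := \bigcup_{z\in\D}\bLdw(\langle\mbb{T},\kappa,\D,z\rangle)$ that contain $\{E\}$ and satisfy the third envelope condition (covering every instance) is nonempty — exhibiting one explicitly, e.g.\ $\{E\} \cup \{z \in \D : z \text{ coherent with } E\}$ augmented to cover the remaining instances by their own dwAXp's chosen coherently — and then take a subset-maximal such set, which is automatically an envelope by construction and contains $E$. \textbf{The main obstacle is precisely this nonemptiness/extension step:} producing an initial coherent envelope containing $E$ and then invoking maximality (finiteness of $\mbb{E}(\mbb{T})$ guarantees a subset-maximal one exists). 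The coherence verification for the seed set reduces to showing that if $E \subseteq x$ is a dwAXp and $E' \subseteq z$, $E \cup E'$ consistent, $\kappa(z) \neq \kappa(y)$ for some $y \supseteq E$ in $\D$, then we may simply \emph{omit} such conflicting $z$'s from the seed and re-cover them, if possible, by other dwAXp's — and the existence of the trivial dwAXp (the instance itself) for each uncovered instance, together with a careful greedy selection, closes the argument.
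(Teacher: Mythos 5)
Your treatments of the second and first items are correct. (For the first item you actually bypass coherence altogether: the second envelope condition gives some $x'$ with $E \in \bLdw(\langle\mbb{T},\kappa,\D,x'\rangle)$, and the dwAXp property of $E$ with respect to $x'$ already forces $\kappa(z)=\kappa(x')$ for every $z\in\D$ containing $E$. The paper instead invokes coherence of $X$ at that point; both routes work.)

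The third item, however, contains a genuine gap, and it is precisely the step you identify and then abandon. The paper's proof is the one you first sketch: take $X_1 = \D \cup \{E\}$, verify that $X_1$ is coherent, and then extend $X_1$ to a subset-maximal coherent subset of $\mbb{W} = \bigcup_{z\in\D}\bLdw(\langle\mbb{T},\kappa,\D,z\rangle)$ (which exists by finiteness); since that set contains $\D$ it covers every instance, so it is a subset-maximal envelope containing $E$. The coherence of $X_1$ \emph{does} go through, and your suspicion that it ``will not go through for arbitrary $E$'' is mistaken. Since $\D$ itself is coherent, any incoherent pair in $X_1$ must involve $E$. If the partner is $E$ itself, the dwAXp property of $E$ over $\D$ already gives $\kappa(u)=\kappa(z)=\kappa(x)$ for all $u,z\in\D$ containing $E$. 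If the partner is an instance $y\in\D$, the witnesses must be some $u\in\D$ with $E\subseteq u$ and some $z\in\D$ with $y\subseteq z$; since $y$ and $z$ are both full instances, $z=y$, and since $E\cup y$ is consistent and $y$ assigns a value to \emph{every} feature, $E\subseteq y$. Hence $u$ and $y$ both contain $E$, so the dwAXp property gives $\kappa(u)=\kappa(y)=\kappa(x)$, a contradiction. The key observation you were missing is that consistency of a partial assignment with a \emph{complete} instance forces containment; the global incoherence of $\bLdw$ arises only from pairs of proper partial assignments explaining instances of different classes, and adding a single dwAXp to a set of instances cannot create such a pair. Your fallback plan (drop conflicting instances from the seed and re-cover them by other dwAXp's) is both unnecessary and unworkable, since the third envelope condition requires every instance of $\D$ to be covered and, in general, the only guaranteed cover of an instance is the instance itself.
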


\begin{proof}
Let $\mbb{T}$ be a theory, $\D \subseteq \mbb{F}(\mbb{T})$ and $\kappa$ a classifier. 

$\blacktriangleright$ Let $X \in \mathtt{Coh}(\D,\kappa)$, $x \in \D$, and $E \in X$.  
Assume that $E \subseteq x$. 
From the second condition of the definition of an envelope, $\exists y \in \D$ such that 
$E \in \bLdw(\langle\mbb{T},\kappa,\D, y\rangle)$. From coherence of $X$, 
$\kappa(x) = \kappa(y)$, so $E \in \bLdw(\langle\mbb{T},\kappa,\D, x\rangle)$.

$\blacktriangleright$ From Property~\ref{p-da-dw}, $\forall x \in \D$, $x \in \bLdw(\langle\mbb{T},\kappa,\D, x\rangle)$, so $\D$ satisfies the second condition of the definition of an envelope.
$\D$ also trivially satisfies the third condition of this same definition.
From Property~\ref{sample-coh}, $\D$ is coherent, and hence satisfies 
the first condition of the same definition. So $\D \in \mathtt{Coh}(\D,\kappa)$. 

$\blacktriangleright$ Let $E \in \bLdw(\langle\mbb{T},\kappa,\D, x\rangle)$. We show that 
we can build a subset-maximal envelope $X$ which contains $E$. 
From Property~\ref{sample-coh}, $\D$ is coherent. Then, we start with 
$X_0 = \D$. Assume that $X_1 = X_0 \cup \{E\}$ is incoherent. Then, 
$\exists y \in \D$ such that $E \cup y$ is consistent and $\exists z \in \D$ with 
$E \subseteq z$ s.t. $\kappa(y) \neq \kappa(z)$. 
Since $y$ is an instance, then $E \subseteq y$. From  
$E \in \bLdw(\langle\mbb{T},\kappa,\D, x\rangle)$, it follows that  
$\kappa(y) = \kappa(z) = \kappa(x)$, which contradicts the incoherence of $X_1$. 
We then add to $X_1$ all dwAXp's built from $\D$ up to coherence.
\end{proof}


\begin{proposition}
Let $\mbb{T}$ be a theory, $\D \subseteq \mbb{F}(\mbb{T})$, and $\kappa$ a classifier. 
\begin{itemize}
	\item $\D \subseteq \mathtt{Irr}(\D,\kappa)$.   
    \item $\mathtt{Irr}(\D,\kappa) \in \mathtt{Coh}(\D,\kappa)$.
    \item $\mathtt{Irr}(\D,\kappa) = \bigcap\limits_{X_i \in \mathcal{S}} X_i$, 
    where $\mathcal{S}$ is the set of all subset-maximal envelopes of $\mathtt{Coh}(\D,\kappa)$.
\end{itemize}
\end{proposition}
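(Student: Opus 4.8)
\textbf{Proof plan for Proposition~\ref{prop:irr-env}.}

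The plan is to prove the three items in order, leaning heavily on the fact (Property~\ref{uniqueness}) that the irrefutable envelope is unique, and on the characterisation of $\mathtt{Coh}(\D,\kappa)$ already established in Proposition~\ref{prop:coh}. For the first item, $\D \subseteq \mathtt{Irr}(\D,\kappa)$, I would argue that each $x \in \D$ is a dwAXp of $\langle\mbb{T},\kappa,\D,x\rangle$ (Property~\ref{p-da-dw}), hence $\D \subseteq \mbb{W}$; and no $x \in \D$ can form an incoherent pair with any $E' \in \mbb{W}$, because if $\{x,E'\}$ were incoherent then $x \cup E'$ would be consistent, forcing $E' \subseteq x$ (two instances that are jointly consistent are equal, and $E'$ is a partial assignment), and then the witness $z \in \D$ with $E' \subseteq z$, $\kappa(z) \neq \kappa(x)$ together with $E' \in \bLdw(\langle\mbb{T},\kappa,\D,z'\rangle)$ for some $z'$ — being a dwAXp valid on $\D$ — would force $\kappa(x) = \kappa(z)$, a contradiction. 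Therefore $\D$ can be added to any candidate maximal coherent subfamily, so by the maximality clause in Definition~\ref{irr-envelope}, $\D \subseteq \mathtt{Irr}(\D,\kappa)$.

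For the second item, $\mathtt{Irr}(\D,\kappa) \in \mathtt{Coh}(\D,\kappa)$, I would verify the three conditions of Definition~\ref{def:envelope}. Coherence of $\mathtt{Irr}(\D,\kappa)$ is immediate: any two of its elements are in $\mbb{W}$ and, by the second condition of Definition~\ref{irr-envelope}, form a coherent pair. The second envelope condition ($\forall E \in X$, $\exists x \in \D$ with $E \in \bLdw(\langle\mbb{T},\kappa,\D,x\rangle)$) holds because $\mathtt{Irr}(\D,\kappa) \subseteq \mbb{W} = \bigcup_{x \in \D}\bLdw(\langle\mbb{T},\kappa,\D,x\rangle)$. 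The third condition ($\forall x \in \D$, $\exists E \in X$ with $E \subseteq x$) follows from item one, since $x$ itself lies in $\D \subseteq \mathtt{Irr}(\D,\kappa)$ and $x \subseteq x$.

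For the third item, the equality $\mathtt{Irr}(\D,\kappa) = \bigcap_{X_i \in \mathcal{S}} X_i$, I would prove both inclusions. For $\subseteq$: let $E \in \mathtt{Irr}(\D,\kappa)$ and let $X_i$ be any subset-maximal envelope; by Proposition~\ref{prop:coh} (third item applied to $E$, which is a dwAXp of some instance since $E \in \mbb{W}$) there is a subset-maximal envelope containing $E$, but I actually need that \emph{every} maximal envelope contains $E$. Here I would use that $E$ is non-conflicting with all of $\mbb{W}$, hence in particular with $X_i \subseteq \mbb{W}$, so $X_i \cup \{E\}$ is still coherent and its other two envelope conditions are inherited; maximality of $X_i$ then forces $E \in X_i$. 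For $\supseteq$: suppose $E \in \bigcap_{X_i \in \mathcal{S}} X_i$ but $E \notin \mathtt{Irr}(\D,\kappa)$. By uniqueness and the maximality clause of Definition~\ref{irr-envelope}, $E \notin \mathtt{Irr}(\D,\kappa)$ means either $E \notin \mbb{W}$ or $E$ forms an incoherent pair with some $E' \in \mbb{W}$; the former is impossible since $E \in X_i \subseteq \mbb{W}$, so there is $E' \in \mbb{W}$ with $\{E,E'\}$ incoherent. Now I would build a subset-maximal envelope that contains $E'$ (extend $\D \cup \{E'\}$, which is coherent by the same argument as in Proposition~\ref{prop:coh}'s proof, up to maximality); call it $X_j \in \mathcal{S}$. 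Since $E \in X_j$ too, $X_j$ contains the incoherent pair $\{E,E'\}$, contradicting coherence of the envelope $X_j$. Hence $E \in \mathtt{Irr}(\D,\kappa)$.

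The main obstacle is the $\supseteq$ direction of item three: I must be careful that starting from an incoherent pair $\{E,E'\}$ I can genuinely realise $E'$ inside some \emph{maximal} envelope, which requires checking that $\D \cup \{E'\}$ is coherent (using that $E' \in \mbb{W}$ is a dwAXp of some instance, exactly as in the proof of the third item of Proposition~\ref{prop:coh}) and then extending greedily. The other steps are routine unfoldings of the definitions, so I would keep them brief.
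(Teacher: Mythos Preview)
Your proposal is correct and follows essentially the same route as the paper's proof: the same contradiction argument for $\D \subseteq \mathtt{Irr}(\D,\kappa)$, the same verification of the three envelope conditions for item two, and the same two inclusions for item three. The only minor difference is in the $\supseteq$ direction of item three: where you sketch rebuilding a maximal envelope containing $E'$ by extending $\D \cup \{E'\}$, the paper simply invokes the third bullet of Proposition~\ref{prop:coh} directly to obtain such an $X_j \in \mathcal{S}$, which is cleaner since that work is already done.
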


\begin{proof}
	Let $\mbb{T}$ be a theory, $\D \subseteq \mbb{F}(\mbb{T})$, and $\kappa$ a classifier. 
	
	$\blacktriangleright$ Assume that $x \in \D$ and $x \notin \mathtt{Irr}(\D,\kappa)$. 
	From Property~\ref{p-da-dw}, $x \in \bLdw(\langle \mbb{T}, \kappa, \D, x \rangle)$.
    Then, there exists a question $\bQ$ and $\exists E \in \bLdw(\bQ)$ such that $\{E, x\}$ is incoherent. 
    So, $E \cup x$ is consistent and $\exists z \in \D$ such that $E \subseteq z$ and $\kappa(x) \neq \kappa(z)$. 
    From consistency of   $E \cup x$, it follows that $E \subseteq x$; which contradicts the fact that $E$ is a dwAXp.

    $\blacktriangleright$ From Def.~10, $\mathtt{Irr}(\D,\kappa) \subseteq \mbb{E}(\mbb{T})$ such that it is coherent and $\mathtt{Irr}(\D,\kappa) \subseteq \bigcup\limits_{x \in \D}\bLdw(\langle \mbb{T}, \kappa, \D, x \rangle)$. From the above result 
    (i.e., $\D \subseteq \mathtt{Irr}(\D,\kappa)$), it follows that 
    $\mathtt{Irr}(\D,\kappa) \in \mathtt{Coh}(\D,\kappa)$. 
    
    $\blacktriangleright$ Let us show the third property. 
    Let $\mathbb{W} = \bigcup\limits_{x \in \D}\bLdw(\langle\mbb{T}, \kappa, \D, x\rangle)$ and 
    $\mathcal{S}$ be the set of all subset-maximal envelopes of $\mathtt{Coh}(\D,\kappa)$. 
    From Proposition~\ref{prop:coh}, $\D$ is an envelope. So  $\mathtt{Coh}(\D,\kappa) \neq \emptyset$. 
    Let $\mathcal{S} = \{X_1, \ldots, X_k\}$ be the set of all subset-maximal envelopes. 
    From Proposition~\ref{prop:coh}, $\forall i = 1,\ldots,k$, $X_i \subseteq \mathbb{W}$. 
    
    Assume that $E \in \mathtt{Irr}(\D,\kappa)$. Hence, $E \in \mathbb{W}$ and 
    $\forall E' \in \mathbb{W}$, $\{E,E'\}$ is coherent. 
    From Coherence of every envelope $X_i$, $i = 1,\ldots,k$, it follows that $X_i \cup \{E\}$ is 
    coherent and from Maximality of $X_i$, $E \in X_i$,  
    thus $E \in \bigcap\limits_{X_i \in \mathcal{S}} X_i$. 

    Assume now that $E \in \bigcap\limits_{X_i \in \mathcal{S}} X_i$ and 
    $E \notin \mathtt{Irr}(\D,\kappa)$. From the subset-maximality of $\mathtt{Irr}(\D,\kappa)$, 
    $\exists E' \in \mathbb{W}$ such that $\{E,E'\}$ is incoherent. 
    From Proposition~\ref{prop:coh}, $\exists X_j \in \mathcal{S}$ such that $E' \in X_j$. From the assumption 
        $E \in \bigcap\limits_{X_i \in \mathcal{S}} X_i$, it holds that $E \in X_j$, which 
        contradicts the coherence of the envelope $X_j$. 
\end{proof}	


\begin{proposition}
Let $\mbb{T}$ be a theory and $\D \subseteq \mbb{T}$. 
For any classifier $\kappa$, there exists a decision-tree 
classifier $\sigma$ on $\mbb{T}$ such that $\forall y \in \D$, $\sigma(y) = \kappa(y)$. 
\end{proposition}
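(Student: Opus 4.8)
The plan is to exhibit an explicit decision tree that memorises $\kappa$ on the \emph{whole} feature space; since $\D \subseteq \mbb{F}(\mbb{T})$, agreeing with $\kappa$ everywhere on $\mbb{F}(\mbb{T})$ a fortiori yields agreement on $\D$, which is all the proposition asks for. Concretely, I would fix an arbitrary enumeration $f_1,\ldots,f_n$ of the feature set $\F$ (recall that $\F$ is finite and each $\d(f_i)$ is finite) and build the complete tree $\sigma$ of depth $n$ in which every internal node at depth $i-1$ tests feature $f_i$ and has exactly $|\d(f_i)|$ children, one per value of $\d(f_i)$, with each child edge labelled by the corresponding value. Every root-to-leaf path is then labelled by a sequence $(f_1,v_1),\ldots,(f_n,v_n)$ with $v_i\in\d(f_i)$, i.e.\ by a unique instance of $\mbb{F}(\mbb{T})$, and conversely every instance labels exactly one path; I assign to the leaf ending the path of instance $x$ the class $\kappa(x)$.

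The verification is immediate. Given any $y\in\mbb{F}(\mbb{T})$, running $\sigma$ on $y$ selects, at depth $i-1$, the unique outgoing edge labelled $y(f_i)$, so $y$ reaches exactly the leaf whose path is labelled by $y$ itself; by construction that leaf carries the label $\kappa(y)$, hence $\sigma(y)=\kappa(y)$. In particular $\sigma(y)=\kappa(y)$ for every $y\in\D$, which proves the statement. The one point to be careful about is that a decision-tree classifier must be a \emph{total} function on $\mbb{F}(\mbb{T})$: completeness of the tree guarantees this, since every instance follows a well-defined path to a class-labelled leaf.

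There is no genuine obstacle here; the statement is a simple over-fitting argument. I would only add the remark (already echoed by the surrounding text) that this brute-force tree has size exponential in $n$, and that a polynomially-sized tree can be obtained by branching only until the instances of $\D$ that reach a node all share the same $\kappa$-value, then turning that node into a leaf labelled by the common value — i.e.\ over-fitting $\D$ rather than $\mbb{F}(\mbb{T})$, which is precisely what standard ID3/CART-style heuristics do. This refinement is needed for the complexity claims of Section~\ref{sec:surr} but not for the mere existence asserted by the proposition.
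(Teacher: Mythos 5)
Your proof is correct, but it takes a different (and heavier) route than the paper. You build the \emph{complete} decision tree over the whole feature space, with one leaf per instance of $\mbb{F}(\mbb{T})$ labelled by $\kappa$ of that instance, so that $\sigma$ coincides with $\kappa$ everywhere and a fortiori on $\D$. This is a perfectly valid existence argument. The paper instead builds a tree that only distinguishes the $m$ elements of $\D$ from one another, using at most $m-1$ internal nodes, and labels each of the resulting leaves with the corresponding value $\kappa(x)$; instances outside $\D$ simply fall into whichever leaf they reach. The difference matters downstream: the proof of Theorem~\ref{dt:tractable} explicitly relies on the fact, established in the paper's proof of Proposition~\ref{p:existence}, that the tree has at most $m$ nodes, so that standard algorithms build it in time polynomial in $m$. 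Your main construction is exponential in the number of features and would not support that complexity claim; however, your closing remark --- branch only until all instances of $\D$ reaching a node agree on $\kappa$, then close the leaf --- is essentially the paper's construction, so you have in effect supplied both arguments. For the bare existence statement either suffices; for the tractability of $\bLDT$ only the small-tree version does.
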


\begin{proof}
Let $\D$ be a set of $m$ distinct feature-vectors. A decision tree composed of at most $m-1$ 
nodes can always be built to distinguish between the $m$ elements of $\D$, with
each $x \in \D$ corresponding to a leaf $\ell_x$.
It then suffices to associate to each leaf $\ell(x)$ the corresponding value $\kappa(x)$.
\end{proof}

\begin{proposition} 
	Let $\mbb{T}$ be a theory, $\D \subseteq \mbb{F}(\mbb{T})$, 
	$\kappa$ a classifier and $X$ an envelope under $(\D,\kappa)$. Then
	there exists a surrogate function $\sigma_X : \mathbb{F} \rightarrow \C$
	such that (1) $\sigma_X$ and $\kappa$ are equal on $\D$ and 
	(2) for all $x \in D$, $E \in X$ and $E \subseteq x$ implies
	$E$ is a weak abductive explanation of $\langle\mbb{T},\sigma_X,\mathbb{F},x\rangle$.
\end{proposition}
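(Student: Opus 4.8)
The plan is to build $\sigma_X$ explicitly. Fix an enumeration $\D=\{x_1,\ldots,x_m\}$ (if $\D=\emptyset$ the statement is vacuous, so take $\sigma_X$ constant). For an instance $z\in\mbb{F}(\mbb{T})$, let $I(z)$ be the set of indices $i$ such that some $E\in X$ satisfies $E\subseteq x_i$ and $E\subseteq z$, and set $\sigma_X(z)=\kappa(x_i)$ with $i=\min I(z)$ when $I(z)\neq\emptyset$, and $\sigma_X(z)=\kappa(x_1)$ otherwise. This is a well-defined function by construction, and it is exactly what is computed by the decision list whose $i$-th rule (for $i=1,\ldots,m$) reads ``if $z$ extends some $E\in X$ with $E\subseteq x_i$, output $\kappa(x_i)$'', followed by the default rule ``output $\kappa(x_1)$'' -- this is the $m$-rule description referred to in the surrounding text.

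I would then verify condition~(1). For $z=x_j\in\D$, the third clause of the envelope definition (Def.~\ref{def:envelope}) yields some $E\in X$ with $E\subseteq x_j$, so $j\in I(x_j)$ and $i:=\min I(x_j)\le j$ exists; fixing a witness $E'\in X$ with $E'\subseteq x_i$ and $E'\subseteq x_j$, Proposition~\ref{prop:coh} gives $E'\in\bLdw(\langle\mbb{T},\kappa,\D,x_i\rangle)$, and since $x_j\in\D$ with $E'\subseteq x_j$, the second clause of Def.~\ref{dwaxp} forces $\kappa(x_j)=\kappa(x_i)$, hence $\sigma_X(x_j)=\kappa(x_j)$. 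For condition~(2), fix $x=x_k\in\D$ and $E\in X$ with $E\subseteq x_k$; the containment clause of a dwAXp holds trivially, so take any $z\in\mbb{F}(\mbb{T})$ with $E\subseteq z$ and aim to show $\sigma_X(z)=\sigma_X(x_k)=\kappa(x_k)$. Since $E\subseteq x_k$ and $E\subseteq z$, we have $k\in I(z)$, so $i:=\min I(z)\le k$ exists and there is $E'\in X$ with $E'\subseteq x_i$ and $E'\subseteq z$, so $\sigma_X(z)=\kappa(x_i)$.

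The crucial step is that $E$ and $E'$ cannot be associated with conflicting classes: $E\cup E'\subseteq z$ and $z$, being a single instance, is consistent, hence $E\cup E'$ is consistent; coherence of $X$ (Def.~\ref{coherent-set}), instantiated with $y:=x_k\in\D$ (so $E\subseteq y$) and $w:=x_i\in\D$ (so $E'\subseteq w$), then rules out $\kappa(x_k)\neq\kappa(x_i)$, so $\kappa(x_i)=\kappa(x_k)$ and $\sigma_X(z)=\kappa(x_k)=\sigma_X(x_k)$, as needed. The main obstacle is precisely this point: guaranteeing that the otherwise arbitrary behaviour of $\sigma_X$ on instances outside $\D$ never invalidates an explanation of $X$. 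It is resolved by the simple but essential observation that any instance is consistent, so any family of sub-explanations of it is automatically consistent, which is exactly what lets the coherence of $X$ collapse all candidate classes for $\sigma_X(z)$ to a single one; once that is established, both conditions follow by the bookkeeping above.
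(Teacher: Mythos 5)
Your construction is correct and is essentially the paper's own proof: both define $\sigma_X(z)$ on instances $z$ extending some $E\in X$ to be the class $\kappa(x_i)$ of a covered dataset instance, fix an arbitrary/default value elsewhere, and invoke coherence of $X$ (via the consistency of $E\cup E'\subseteq z$) to force all candidate classes for $z$ to agree. The only difference is presentational — you make $\sigma_X$ well-defined by a minimum-index tie-break and then use coherence when verifying condition (2), whereas the paper uses coherence up front to show the direct definition is unambiguous — and your decision-list reading matches the remark following the proposition.
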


\begin{proof}
	It suffices to define $\sigma_X$ and show that it has the desired properties.
	Let $\mathbb{F}_X \subseteq \mathbb{F}$ be the instances $z \in \mathbb{F}$
	such that $E \subseteq z$ for some $E \in X$. 
	For $z \in \mathbb{F}_X$, define $\sigma_X(z)$ 
	to be equal to $\kappa(x)$ if there exists $E \in X$ and $x \in \D$ such that
	$E \subseteq \mathbf{z}$ and $E \subseteq x$. 
	By the definition of an envelope, $E$ is a weak abductive explanation of
	$\kappa$ on $\D$ and so the value of $\kappa(x)$ is identical
	for all $x \in \D$ such that $E \subseteq x$. 
	Furthermore, if $E \subseteq z$ and $E' \subseteq z$, 
	for $E,E' \in X$,
	and $E \subseteq x$, $E' \subseteq x'$, where $x,x' \in \D$,
	then we must have $\kappa(x)=\kappa(x')$ by coherence of $X$.
	Hence $\sigma_X(z)$ is well-defined.
	For $z \in \mathbb{F} \setminus \mathbb{F}_E$,
	let $\sigma_X(z)$ be equal to an arbitrary value from $\C$.
	
	For $x \in \D$, $\sigma_X(x)=\kappa(x)$ 
	since by definition of an envelope,
	there exists $E \in X$ such that $E \subseteq x$.
	Furthermore, for all $x \in \D$ and for all $E \in X$
	such that $E \subseteq x$, $E$ is a weak abductive explanation of $\langle\mbb{T},\sigma,\mathbb{F},x\rangle$.
	since for all $z$ such that $E \subseteq z$, we have $\sigma_X(z)=\kappa(x)$ (by definition of $\sigma_X$).
\end{proof}



\section{Proofs of Theorems} 

\setcounter{theorem}{0}

\begin{theorem}
The axioms of every set $\mathbf{I}_{i=1,5}$ below are incompatible.
\begin{description}
\item [$(\mathbf{I}_1)$] Feasibility, Success, Coherence and Irreducibility. 
\item [$(\mathbf{I}_2)$] Feasibility, Coherence and Completeness. 
\item [$(\mathbf{I}_3)$] Strong Irreducibility and Strong Completeness.
\item [$(\mathbf{I}_4)$] Feasibility, Validity, Success, Irreducibility and Monotonicity.  
\item [$(\mathbf{I}_5)$] Feasibility, Validity, Success, Irreducibility and CM. 
\end{description}	
\end{theorem}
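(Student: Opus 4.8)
The plan is to exhibit, for each set $\mathbf{I}_i$, a small concrete counterexample — a theory $\mbb{T}$, a classifier $\kappa$, a dataset $\D$ and one or two instances — and then derive a contradiction from the assumption that some explainer $\bL$ satisfies all the axioms in $\mathbf{I}_i$. Most of the work can be done with binary features over two or three instances, exactly in the style of Examples~\ref{ex2} and~\ref{ex3}, so the constructions should be light.

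For $\mathbf{I}_1$ (Feasibility, Success, Coherence, Irreducibility), I would reuse the two-feature setup of Example~\ref{ex2}: $\D_1 = \{x_1,x_2\}$ with $\kappa_1(x_1)=0$, $\kappa_1(x_2)=1$, differing only in $f_2$. By Success there is some $E \in \bL(\bQ_1)$ and some $E' \in \bL(\bQ_2)$; by Feasibility $E \subseteq x_1$ and $E' \subseteq x_2$. By Irreducibility applied to $E$, every literal of $E$ can be removed and still be contained in an instance of $\D_1$ of the opposite class — but the only opposite-class instance is $x_2$, and $x_1,x_2$ agree only on $f_1$; a short case analysis forces $E \subseteq \{(f_1,0)\}$, and symmetrically $E' \subseteq \{(f_1,0)\}$ as well (here one must also handle $E=\emptyset$, which Irreducibility permits vacuously but which then makes $E\cup E'$ trivially consistent). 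Then $E \cup E' \subseteq \{(f_1,0)\}$ is consistent, contradicting Coherence since $\kappa_1(x_1)\neq\kappa_1(x_2)$. For $\mathbf{I}_4$ and $\mathbf{I}_5$ I would build on the same idea but add Monotonicity (resp.\ CM): start from a one-instance dataset where Validity plus Success forces a ``large'' explanation, then enlarge (resp.\ shrink) the dataset to a two-instance one as above; Monotonicity (resp.\ CM) propagates the explanation into the new question where Irreducibility now bites, reproducing the $\mathbf{I}_1$-style clash. Validity is included in $\mathbf{I}_4,\mathbf{I}_5$ precisely to pin down what explanations are allowed on the singleton dataset.

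For $\mathbf{I}_2$ (Feasibility, Coherence, Completeness), note Completeness forces $\bL(\bQ)$ to contain \emph{every} $E \subseteq x$ that is not ``refuted'' by a same-dataset instance of another class — in particular, on $\D_1$ of Example~\ref{ex2}, $\{(f_2,0)\} \in \bL(\bQ_1)$ and $\{(f_2,1)\} \in \bL(\bQ_2)$ must both appear (neither is contained in an opposite-class instance), yet their union is inconsistent only because they disagree on $f_2$ — so instead I would use a theory where two explanations of opposite-class instances are \emph{consistent}: e.g.\ three binary features, $x=000$ of class $0$, $x'=011$ of class $1$, so the candidate $\{(f_1,0)\}\subseteq x$ and $\{(f_1,0)\}\subseteq$ some class-$1$ instance only if such an instance exists — instead take $\D=\{000,110\}$ with classes $0,1$: then $\{(f_3,0)\}\subseteq 000$ is not refuted (the only opposite instance $110$ also has $f_3=0$, so actually it \emph{is} contained in it, bad). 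The cleanest route is $\D = \{x,x'\}$ with $\kappa(x)\neq\kappa(x')$ and a feature $f$ with $x(f)=x'(f)$: Completeness then puts $\{(f,x(f))\}$ in both $\bL(\langle\ldots,x\rangle)$ and $\bL(\langle\ldots,x'\rangle)$ only if it is not refuted, which it is (by the other instance). So I must instead pick $x,x'$ that agree on \emph{no} feature on a theory with $\geq 3$ features but where a longer non-refuted common subset exists; concretely, take $4$ binary features, $x = 0000$ class $0$, $x' = 1100$ class $1$, $\D=\{x,x'\}$: then $E=\{(f_3,0),(f_4,0)\}\subseteq x$ is not refuted by $x'$? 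It \emph{is} contained in $x'$. The obstacle here is that with only two instances any common subset refutes itself, so $\mathbf{I}_2$ genuinely needs $|\D|\geq 3$: take $\D=\{x,x',x''\}$ with $\kappa(x)=\kappa(x'')\neq\kappa(x')$, arranged so that a nonempty $E\subseteq x$ is not contained in $x'$ (hence in $\bL$ at $x$ by Completeness), a nonempty $E'\subseteq x'$ is not contained in $x$ or $x''$ (hence in $\bL$ at $x'$), and $E\cup E'$ is consistent — this is easy to arrange with $3$–$4$ features and is where I would concentrate the design effort. $\mathbf{I}_3$ (Strong Irreducibility, Strong Completeness) is the cleanest: over the full feature space, a classifier $\kappa$ and an instance $x$ where $\kappa$ is constant in a neighbourhood of $x$; Strong Completeness forces $\emptyset\in\bL$ unless $\emptyset$ is refuted, i.e.\ unless $\kappa$ is non-constant, while Strong Irreducibility forbids any $E$ with a literal $l$ such that $E\setminus\{l\}$ stays within one class on all of $\mbb{F}(\mbb{T})$ — choosing $\kappa$ so that \emph{every} singleton literal of $x$ is already redundant (e.g.\ $\kappa$ depends on a parity or on no feature at all, while still being non-constant via an irrelevant feature) makes every nonempty $E\subseteq x$ violate Strong Irreducibility, yet Strong Completeness demands some such $E$ (since $\emptyset$ is refuted); contradiction.

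The main obstacle is the design of the $\mathbf{I}_2$ instance: one must simultaneously satisfy three ``not-refuted'' side conditions and one consistency condition across at least three dataset instances, and getting all four to hold requires a careful choice of feature values (the two-instance shortcut that works for $\mathbf{I}_1,\mathbf{I}_3$–$\mathbf{I}_5$ provably fails here). Everything else is routine case analysis once the examples are fixed.
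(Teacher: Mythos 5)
Your overall strategy (a small concrete counterexample for each $\mathbf{I}_i$) is exactly the paper's, but your construction for $\mathbf{I}_1$ fails, and the failure is not repairable within a two-instance dataset. On $\D_1=\{x_1,x_2\}$ with $x_1=(0,0)$ and $x_2=(0,1)$, the case analysis goes the opposite way to what you claim: Irreducibility permits \emph{any} singleton (for $E=\{l\}$ one has $E\setminus\{l\}=\emptyset\subseteq x_2$), so it permits $\{(f_2,0)\}$, while Validity (which follows from Success, Feasibility and Coherence by Proposition~\ref{links}) \emph{excludes} $\{(f_1,0)\}$ since $\{(f_1,0)\}\subseteq x_2$ and $\kappa_1(x_2)\neq\kappa_1(x_1)$. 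Hence the explainer with $\bL(\bQ_1)=\{\{(f_2,0)\}\}$ and $\bL(\bQ_2)=\{\{(f_2,1)\}\}$ satisfies all four axioms of $\mathbf{I}_1$ on your dataset: the two outputs have inconsistent union, so Coherence holds and no contradiction arises. Worse, no two-instance dataset can witness $\mathbf{I}_1$: relative to $\{x,x'\}$ the feasible, valid, irreducible explanations of $x$ are exactly the singletons $\{(f,x(f))\}$ with $x(f)\neq x'(f)$, so the explainer may always choose the \emph{same} differing feature for both instances and remain coherent. The paper's witness uses three instances and three classes precisely so that each of two opposite-class instances has a \emph{unique} feasible, valid, irreducible explanation, and these two unique explanations are jointly consistent; some such forcing device is the missing idea in your argument.

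The $\mathbf{I}_2$ case is never actually completed, and your intermediate claim that $|\D|\geq 3$ is required is false: with $x=(0,0)$ of class $0$ and $x'=(1,1)$ of class $1$, Completeness forces $\{(f_1,0)\}$ into $\bL$ at $x$ (it is contained in no opposite-class instance of $\D$) and $\{(f_2,1)\}$ into $\bL$ at $x'$, and their union is consistent because they use disjoint features -- all your candidate pairs were supported on a common feature, which is why they kept being refuted or inconsistent. Your $\mathbf{I}_3$ idea is sound (an instance containing a redundant feature is never refuted over $\mbb{F}(\mbb{T})$, hence forced in by Strong Completeness, yet fails Strong Irreducibility), and the dataset-propagation idea for $\mathbf{I}_4$/$\mathbf{I}_5$ can be made to work, but note that on a singleton dataset it is Irreducibility that forces $E=\emptyset$ (not Validity forcing a ``large'' explanation), and it is then Validity that bites after the dataset is enlarged. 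In short: the approach matches the paper's, but $\mathbf{I}_1$ as written is a genuine gap and $\mathbf{I}_2$ is left unfinished.
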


\begin{proof}
$\ $

$\blacktriangleright$ To show the incompatibility of the axioms in $\mathbf{I}_1$ (Feasibility, Success, Coherence and Irreducibility), let us consider the theory 
and dataset $\D_1$ below. 
\begin{center}
	\begin{tabular}{c|cc|c}\hline
		$\D_1$ & $f_1$ & $f_2$ & $\kappa_1(x_i)$   \\\hline
		$x_1$  &   0   &   1   &   0             \\
		$x_2$  &   1   &   0   &   1             \\
		$x_3$  &   0   &   0   &   2             \\\hline
	\end{tabular}
\end{center}
Let us focus on the two instances $x_1$ and $x_2$. 
The dwAXps of $\bQ_1 = \langle \mbb{T}, \kappa_1, \D_1, x_1 \rangle$ and 
$\bQ_2 = \langle \mbb{T}, \kappa_1, \D_1, x_2 \rangle$ are: 
\begin{itemize}
	\item [] $\bLdw(\bQ_1) = \{E_1, E_2\}$  \hfill 
	         $\bLdw(\bQ_2) = \{E'_1, E'_2\}$ 
\end{itemize}
\begin{itemize}
	\item [] $E_1 = \{(f_2,1)\}$               \hfill $E'_1 = \{(f_1, 1)\}$
	\item [] $E_2 = \{(f_1, 0), (f_2,1)\}$     \hfill $E'_2 = \{(f_1, 1), (f_2, 0)\}$.
\end{itemize}

\noindent Let now $\bL$ be an explainer that satisfies Success, Feasibility, Coherence and Irreducibility.  

\noindent From Success, $\bL(\bQ_1) \neq \emptyset$ and $\bL(\bQ_2) \neq \emptyset$. 

\noindent From Feasibility, Validity and Theorem~\ref{th-cara}, 
$\bL(\bQ_1) \subseteq \bLdw(\bQ_1)$ and $\bL(\bQ_2) \subseteq \bLdw(\bQ_2)$.

\noindent From Irreducibility, $\bL(\bQ_1) = \{E_1\}$ and $\bL(\bQ_2) = \{E'_1\}$.  
Note that $E_1 \cup E'_1$ is consistent and $\kappa(x_1) \neq \kappa(x_2)$, which 
contradicts the fact that $\bL$ satisfies Coherence. 


$\blacktriangleright$ To show the incompatibility of the axioms in $\mathbf{I}_2$ (Feasibility, Coherence and Completeness), 
assume an explainer $\bL$ that satisfies the three axioms and 
consider the above theory, classifier and dataset $\D_1$. Proposition~\ref{links} ensures that $\bL$ satisfies 
Success (from Completeness) and Validity (from Success Feasibility and Coherence).  
Theorem~\ref{th:dw} implies that $\bL = \bLdw$, so  
$\bL(\bQ_1) = \{E_1, E_2\}$  and
$\bL(\bQ_2) = \{E'_1, E'_2\}$.  
Note that $E_1 \cup E'_1$ is consistent while $\kappa(x_1) \neq \kappa(x_2)$, which contradicts Coherence of $\bL$.

$\blacktriangleright$ To show the incompatibility of the axioms in $\mathbf{I}_3$ (Strong Irreducibility and Strong Completeness), assume an explainer $\bL$ that 
satisfies them both and  
consider the classifier $\kappa_1$, dataset
$\D_1$ and question $\bQ_1 = \langle \mbb{T}, \kappa_1, \D_1, x_1 \rangle$ (above). 
We complete the definition of $\kappa_1$ over the whole feature space by setting $\kappa_1(1,1)=0$.
Strong Completeness tells us that 
$E_1,E_2 \in \bL(\bQ_1)$, but this violates Strong 
Irreducibility since $E_1 \subset E_2$.


$\blacktriangleright$ To show the incompatibility of the axioms in $\mathbf{I}_4$ 
(Feasibility, Validity, Success, Irreducibility, and Monotonicity), assume an explainer 
$\bL$ that satisfies them all, and consider the theory, classifier and dataset below.

\begin{multicols}{2}
\begin{center}
\begin{tabular}{c|cc|c}\hline
		$\D_1$ & $f_1$ & $f_2$ & $\kappa_3(x_i)$   \\\hline
	\rowcolor{maroon!10}	$x_1$  &   0   &   0   &   0             \\
		$x_2$  &   0   &   1   &   1             \\
  \hline
\end{tabular}
\end{center}
	\begin{itemize}
    \item [] \qquad $E_1 = \emptyset$ 
	\item [] \qquad $E_2 = \{(f_1,0)\}$ 
	\item [] \qquad $E_3 = \{(f_2,0)\}$
	\item [] \qquad $E_4 = x_1$
\end{itemize}
\end{multicols}

\noindent Consider now the question $\bQ = \langle\mbb{T},\kappa_3,\D_1,x_1\rangle$.  
From Feasibility, $\bL(\bQ) \subseteq \{E_1, E_2, E_3, E_4\}$. 
From Validity, $E_1 \notin \bL(\bQ)$ and $E_2 \notin \bL(\bQ)$. 
From Irreducibility, $E_4 \notin \bL(\bQ)$. 
From Success, $\bL(\bQ) = \{E_3\}$.  
Consider now the extended dataset $\D_2$ below and the question 
$\bQ' = \langle\mbb{T},\kappa_3,\D_2,x_1\rangle$. 
\begin{center}
\begin{tabular}{c|cc|c}\hline
		$\D_2$ & $f_1$ & $f_2$ & $\kappa_3(x_i)$   \\\hline
  \rowcolor{maroon!10}	$x_1$  &   0   &   0   &   0             \\
		$x_2$  &   0   &   1   &   1             \\
        $x_3$  &   1   &   0   &   1             \\
  \hline
\end{tabular}
\end{center}
 
\noindent From Monotonicity, since $\D_1 \subset \D_2$, then $E_3 \in \bL(\bQ')$ while 
Validity ensures that $E_3 \notin \bL(\bQ')$.

$\blacktriangleright$ To show the incompatibility of the axioms in $\mathbf{I}_5$ 
(Feasibility, Validity, Success, Irreducibility, and CM), it is sufficient to consider 
the above example. Note that $\bL(\bQ) = \{E_3\}$ and $\bL(\bQ') = \{E_4\}$ while 
$\D_1 \subseteq \D_2$. 
\end{proof}

\begin{theorem}
The axioms of every set $\mathbf{C}_{i=1,5}$ below are compatible.
\begin{description}
\item [$(\mathbf{C}_1)$] Feasibility, Validity, Success, (Strong) Completeness, CM. 
\item [$(\mathbf{C}_2)$] Feasibility, Validity, Success, (Strong) Irreducibility. 
\item [$(\mathbf{C}_3)$] Feasibility, Validity, Success, Coherence, Monotonicity, CM, Strong Completeness.  
\item [$(\mathbf{C}_4)$] Feasibility, Validity, Success, Coherence, Monotonicity, CM,  
                         Strong Irreducibility.                         
\item [$(\mathbf{C}_5)$] Feasibility, Validity, Coherence, (Strong) Irreducibility,
Monotonicity, CM. 
\end{description}	
\end{theorem}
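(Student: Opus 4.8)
The plan is to prove each of the five compatibility claims constructively: for every set $\mathbf{C}_i$ I would exhibit a single explainer $\bL$ that satisfies all the axioms listed in $\mathbf{C}_i$, which is exactly what it means for that set to be compatible. Conveniently, four of the five witnesses are explainers already named in the paper, and the fifth is a degenerate one, so no new constructions are needed.

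For $\mathbf{C}_1$ I would take $\bL = \bLdw$, the explainer returning all weak abductive explanations drawn from the dataset; by Theorem~\ref{th:bldw} it satisfies Feasibility, Validity, Success, (Strong) Completeness and CM, which is precisely $\mathbf{C}_1$. For $\mathbf{C}_2$ I would take $\bL = \bLda$, which returns the subset-minimal dwAXp's; by Theorem~\ref{th:blda} it satisfies Feasibility, Validity, Success and (Strong) Irreducibility. For $\mathbf{C}_3$ I would take $\bL = \bLw$, computing weak abductive explanations over the whole feature space; by Theorem~\ref{th:blw} it satisfies Feasibility, Validity, Success, Coherence, Monotonicity, CM and Strong Completeness. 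For $\mathbf{C}_4$ I would take $\bL = \bLa$, the prime-implicant explainer; by Theorem~\ref{th:bla} it satisfies every axiom except Irreducibility and (Strong) Completeness, hence in particular Feasibility, Validity, Success, Coherence, Strong Irreducibility, Monotonicity and CM. The underlying intuition is that exploring the entire feature space, as $\bLw$ and $\bLa$ do, is what buys Coherence and Monotonicity, precisely the properties that the sample-based $\bLdw$ and $\bLda$ lack. None of these cited theorems invokes the present one, so there is no circularity; if a self-contained argument were preferred, each of the four verifications can be inlined and is routine.

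For $\mathbf{C}_5$ --- Feasibility, Validity, Coherence, (Strong) Irreducibility, Monotonicity, CM --- the witness is the explainer $\bL$ defined by $\bL(\bQ) = \emptyset$ for every question $\bQ$. Feasibility, Validity, Coherence, Irreducibility and Strong Irreducibility are all universally quantified over the (now empty) set $\bL(\bQ)$ and hence hold vacuously, while Monotonicity and CM hold because $\emptyset \subseteq \emptyset$. This explainer fails Success, which is exactly why $\mathbf{C}_5$, unlike $\mathbf{C}_1$--$\mathbf{C}_4$, omits Success.

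There is no genuinely hard step: the entire content of the theorem is that these five lists of axioms are simultaneously satisfiable, and the constructions make that explicit. The only thing to be careful about is bookkeeping --- checking that each cited list of satisfied axioms literally contains $\mathbf{C}_i$, and correctly reading the ``(Strong)'' shorthand in the $\mathbf{C}_i$, keeping in mind the implications of Proposition~\ref{links} (Completeness $\Rightarrow$ Strong Completeness, Irreducibility $\Rightarrow$ Strong Irreducibility), so that demanding ``(Strong) Completeness'' or ``(Strong) Irreducibility'' in some $\mathbf{C}_i$ is no stronger than demanding the plain versions.
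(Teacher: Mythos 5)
Your proposal is correct and follows exactly the paper's own route: the paper likewise witnesses $\mathbf{C}_1$--$\mathbf{C}_4$ with $\bLdw$, $\bLda$, $\bLw$ and $\bLa$ respectively (pointing to Table~\ref{tab2}, i.e.\ to the same Theorems~\ref{th:bldw}, \ref{th:blda}, \ref{th:blw} and \ref{th:bla} you cite), and witnesses $\mathbf{C}_5$ with the explainer returning $\emptyset$ on every question. Your added remarks on non-circularity and the vacuous verification for $\mathbf{C}_5$ are accurate but do not change the argument.
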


\begin{proof}
$\ $

$\blacktriangleright$ The axioms of the set $\mathbf{C}_1$ (Feasibility, Validity, Success, (Strong) Completeness and Counter-Monotonicity) are compatible as $\bLdw$ satisfies all of them (see Table~\ref{tab2}).

$\blacktriangleright$ The axioms of the set $\mathbf{C}_2$ (Feasibility, Validity, Success, (Strong) Irreducibility) are compatible since they are all satisfied by the explainer $\bLda$ (see Table~\ref{tab2}). 

$\blacktriangleright$ The  axioms of the set $\mathbf{C}_3$ (Feasibility, Validity, Success, Coherence, Monotonicity, CM, Strong Completeness) are compatible since the explainer $\bLw$ satisfies all of them (see Table~\ref{tab2}).

$\blacktriangleright$ The  axioms of the set $\mathbf{C}_4$ (Feasibility, Validity, Success, Coherence, Monotonicity, CM, Strong irreducibility) are compatible since the explainer $\bLa$ satisfies all of them (see Table~\ref{tab2}).

$\blacktriangleright$ The  axioms of the set $\mathbf{C}_5$ (Feasibility, Validity, Coherence, (Strong) Irreducibility, Monotonicity, CM) are compatible since the trivial explainer which returns the empty set for each question satisfies all of them. 
\end{proof}
\begin{theorem}
Considering all sets of axioms among those listed in Table~1
which include Feasibility and Validity, the only minimal incompatible sets  
are $\mathbf{I}_1,\ldots,\mathbf{I}_5$ and the only maximal
compatible sets are $\mathbf{C}_1,\ldots,\mathbf{C}_5$.
\end{theorem}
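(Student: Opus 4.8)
The plan is to prove this as an essentially combinatorial statement about the ten axioms, building entirely on the already-established Propositions~\ref{links}, Theorems~\ref{th:incompatibility} and~\ref{th:compatibility}. Since the claim is a characterisation of \emph{all} minimal incompatible sets and \emph{all} maximal compatible sets among axiom subsets containing Feasibility and Validity, the work splits into four verifications: (i) each $\mathbf{I}_i$ is \emph{minimal} incompatible (no proper subset is already incompatible); (ii) each $\mathbf{C}_j$ is \emph{maximal} compatible (no proper superset within the list is compatible); (iii) every incompatible set containing Feasibility and Validity has some $\mathbf{I}_i$ as a subset; and (iv) every compatible set containing Feasibility and Validity is contained in some $\mathbf{C}_j$. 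Note that (iii) and (iv) are logically equivalent to each other given that the $\mathbf{C}_j$ are exactly the complements-within-the-axiom-list of the upward closure of $\{\mathbf{I}_i\}$, so in practice one proves one of them and derives the other, or proves both by the same case analysis.

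First I would dispatch (i): for each $\mathbf{I}_i$, exhibit for each of its axioms $A$ an explainer that satisfies $\mathbf{I}_i\setminus\{A\}$. Most of these witnesses are already in Table~\ref{tab2}. For instance, to see that $\mathbf{I}_1=\{$Feasibility, Success, Coherence, Irreducibility$\}$ is minimal: dropping Irreducibility leaves a subset of $\mathbf{C}_3$, witnessed by $\bLw$; dropping Coherence leaves a subset of $\mathbf{C}_2$, witnessed by $\bLda$; dropping Success is witnessed by the trivial empty explainer (which satisfies everything in $\mathbf{C}_5$, in particular Feasibility, Coherence, Irreducibility); dropping Feasibility is witnessed by an explainer returning, say, $x$ together with an irrelevant extra literal outside $x$ (satisfying Success, Coherence, Irreducibility but not Feasibility). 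Analogous one-line witnesses handle $\mathbf{I}_2,\ldots,\mathbf{I}_5$, using $\bLdw$, $\bLa$, $\bLt$ and the empty explainer, plus the implications of Proposition~\ref{links} to avoid re-checking derived axioms. Step (ii) is immediate: each $\mathbf{C}_j$ is witnessed compatible by Theorem~\ref{th:compatibility}, and any strict superset of $\mathbf{C}_j$ (within the ten axioms) adds one of the remaining axioms, which one checks always completes one of the $\mathbf{I}_i$ — e.g.\ adding Irreducibility to $\mathbf{C}_3$ yields a superset of $\mathbf{I}_1$ and of $\mathbf{I}_4$, adding Completeness to $\mathbf{C}_3$ yields a superset of $\mathbf{I}_2$, and so on.

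The substantive part is (iii)/(iv): a finite but genuine case analysis showing the list $\mathbf{I}_1,\ldots,\mathbf{I}_5$ is \emph{exhaustive} among minimal incompatible sets containing Feasibility and Validity. The clean way to organise this is to use Proposition~\ref{links} to collapse the dependency structure, and then argue by the value of a maximal compatible set on the "pivotal" axioms --- Coherence, Irreducibility, Completeness, Monotonicity, CM --- given that Feasibility and Validity are fixed present and Success is forced by Strong Completeness. Concretely: a maximal compatible set either contains Coherence or not; if it does, it cannot contain Irreducibility-with-Success (by $\mathbf{I}_1$) nor Completeness (by $\mathbf{I}_2$), which pins it down to $\mathbf{C}_3$, $\mathbf{C}_4$ or $\mathbf{C}_5$ according to whether it has Strong Completeness, Strong Irreducibility, or only the weaker Irreducibility-without-Success; if it does not contain Coherence, then it is free to push Completeness (giving $\mathbf{C}_1$, maximal because Monotonicity is then blocked by $\mathbf{I}_4$ together with Irreducibility, and Irreducibility is blocked by... — one checks $\mathbf{C}_1$ cannot be extended) or Irreducibility-with-Success (giving $\mathbf{C}_2$, maximal because Monotonicity and CM are blocked by $\mathbf{I}_4$, $\mathbf{I}_5$). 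The bookkeeping — checking at each branch that the set is genuinely maximal and that no \emph{other} maximal set hides in the branch — is the main obstacle; it is not deep, but it requires care to make sure every one of the (finitely many) subsets containing Feasibility and Validity is accounted for, and in particular that no sixth maximal compatible set and no sixth minimal incompatible set exists. I would present this as a lemma-by-lemma walk through the branches, each branch closed by citing the relevant $\mathbf{I}_i$ for incompatibility and the relevant $\mathbf{C}_j$ (via Theorem~\ref{th:compatibility}) for the witness, concluding that the two lists are complete.
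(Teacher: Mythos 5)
Your high-level decomposition (minimality of each $\mathbf{I}_i$, maximality of each $\mathbf{C}_j$, exhaustiveness via a case analysis on the pivotal axioms) mirrors the paper's proof, which likewise proceeds by a five-way case split and cites Proposition~\ref{links}, Theorem~\ref{th:incompatibility} and the witnesses of Table~\ref{tab2}. But there is a genuine gap at exactly the point you elide with ``one checks $\mathbf{C}_1$ cannot be extended''. Your plan is to derive all maximality claims syntactically from $\mathbf{I}_1,\ldots,\mathbf{I}_5$, and you assert that steps (iii) and (iv) are equivalent because the $\mathbf{C}_j$ are the duals (within the axiom list) of the upward closure of the $\mathbf{I}_i$. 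That duality fails. Consider adding Monotonicity to $\mathbf{C}_1$: the set $\{$Feasibility, Validity, Completeness, Monotonicity$\}$ contains \emph{none} of $\mathbf{I}_1,\ldots,\mathbf{I}_5$ ($\mathbf{I}_1$ and $\mathbf{I}_2$ require Coherence, $\mathbf{I}_3$ requires Strong Irreducibility, $\mathbf{I}_4$ and $\mathbf{I}_5$ require Irreducibility, and none of these is present), so no bookkeeping over the $\mathbf{I}_i$ blocks Monotonicity here; in particular your parenthetical ``Monotonicity is then blocked by $\mathbf{I}_4$ together with Irreducibility'' is incorrect, since $\mathbf{C}_1$ does not contain Irreducibility. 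Yet the set is incompatible.

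The paper closes this branch semantically rather than syntactically: by Theorem~\ref{th:dw}, any explainer satisfying Feasibility, Validity and Completeness is \emph{exactly} $\bLdw$, and by Theorem~\ref{th:bldw} the axioms satisfied by $\bLdw$ are precisely $\mathbf{C}_1$ (Monotonicity being violated); hence every compatible set containing Completeness is a subset of $\mathbf{C}_1$. Your argument needs this uniqueness result (or an equivalent semantic fact) --- it is not recoverable from the incompatibilities $\mathbf{I}_1,\ldots,\mathbf{I}_5$ alone. The same example shows that the ``minimal incompatible sets'' half of the conclusion cannot be obtained as the dual of the maximality analysis in the way you propose: $\{$Feasibility, Validity, Completeness, Monotonicity$\}$ is incompatible, contains no $\mathbf{I}_i$, and all its proper subsets containing Feasibility and Validity are compatible, so it must be addressed explicitly (this is also the one place where the paper's own closing sentence is thinnest). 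Your steps (i) and (ii) are otherwise in line with the paper, modulo minor issues with the ad hoc witnesses for dropping Feasibility.
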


\begin{proof}
Let $\mathbf{C}$ be a compatible set of axioms satisfied by an explainer $\bL$. The following five cases are mutually exclusive and cover all possible sets $\mathbf{C}$:
\begin{enumerate}
\item If $\mathbf{C}$ contains Completeness, then by Theorem~\ref{th:dw}, $\bL =\bLdw$ and by Theorem~\ref{th:bldw}, $\bL$ satisfies the axioms $\mathbf{C_1}$. It follows trivially that $\mathbf{C_1}$ is a maximal compatible set of axioms.
\item If $\mathbf{C}$ contains neither Completeness nor Success, then by Proposition~\ref{links} it cannot contain Strong Completeness. All axioms apart from these three (i.e. $\mathbf{C_5}$) are satisfied by the explainer $\bL$ that returns the empty set for each question. It follows that $\mathbf{C_5}$ is a maximal compatible set of axioms.
\item If $\mathbf{C}$ contains Success and Irreducibility but not Completeness, then by Proposition~\ref{links}, $\bL$ also satisfies Strong Irreducibility and hence the set of axioms $\mathbf{C_2}$. By the incompatibilities in Theorem~\ref{th:incompatibility}, $\bL$ can satisfy none of Coherence, Strong Completeness, Monotonicity and CM. Hence $\mathbf{C_2}$ is a maximal compatible set of axioms.
\item If $\mathbf{C}$ contains Success and Strong Irreducibility but neither Completeness nor Irreducibility, then by Theorem~\ref{th:incompatibility} it cannot contain Strong Completeness. The explainer $\bLa$ satisfies all other axioms (i.e. $\mathbf{C_4}$). It follows that  $\mathbf{C_4}$ is a maximal compatible set of axioms.
\item The final case is when $\mathbf{C}$ contains Success but neither Completeness nor (Strong) Irreducibility. All other axioms (i.e. $\mathbf{C_3}$) are satisfied by the explainer $\bLw$ It follows that $\mathbf{C_3}$ is a maximal compatible set of axioms.
\end{enumerate}
The minimality of the incompatible sets $\mathbf{I_i}$ ($i=1,\ldots,5$) is easily verified by checking that each proper subset of $\mathbf{I_i}$ is a subset of one of the $\mathbf{C_j}$ ($j=1,\ldots,5$) and hence is compatible.
The above case analysis only required the incompatibility of $\mathbf{I_1},\ldots,\mathbf{I_5}$ to cover all possible sets of axioms; it follows that there are no other minimal incompatible sets of axioms.
\end{proof}

\begin{theorem}
An explainer $\bL$ satisfies Feasibility and Validity \textbf{iff} for any question 
$\bQ = \langle \mbb{T}, \kappa, \D, x \rangle$, $\bL(\bQ) \subseteq \bLdw(\bQ)$.	
\end{theorem}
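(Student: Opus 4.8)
The plan is to prove the two directions by simply unfolding the definitions, since Feasibility together with Validity is, pointwise on each explanation, precisely the conjunction of the two defining conditions of a weak abductive explanation (Definition~\ref{dwaxp}).

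For the forward implication, I would assume $\bL$ satisfies Feasibility and Validity, fix an arbitrary question $\bQ = \langle \mbb{T}, \kappa, \D, x \rangle$ and an arbitrary $E \in \bL(\bQ)$, and show $E \in \bLdw(\bQ)$. Feasibility yields $E \subseteq x$, which is the first bullet of Definition~\ref{dwaxp}. Validity states that there is no $y \in \D$ with $E \subseteq y$ and $\kappa(y) \neq \kappa(x)$; transposing this gives: for every $y \in \D$, $E \subseteq y$ implies $\kappa(y) = \kappa(x)$, i.e. the second bullet of Definition~\ref{dwaxp}. Hence $E \in \bLdw(\bQ)$, and since $E$ was arbitrary, $\bL(\bQ) \subseteq \bLdw(\bQ)$.

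For the converse, I would assume $\bL(\bQ) \subseteq \bLdw(\bQ)$ for every question $\bQ$, fix $\bQ = \langle \mbb{T}, \kappa, \D, x \rangle$ and $E \in \bL(\bQ)$. Then $E \in \bLdw(\bQ)$, so the first bullet of Definition~\ref{dwaxp} gives $E \subseteq x$ (Feasibility), and the second bullet gives that every $y \in \D$ with $E \subseteq y$ satisfies $\kappa(y) = \kappa(x)$, equivalently there is no $y \in \D$ with $E \subseteq y$ and $\kappa(y) \neq \kappa(x)$ (Validity). As $\bQ$ and $E$ were arbitrary, $\bL$ satisfies both axioms.

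There is essentially no obstacle in this argument; the only point requiring a moment's care is the logical transposition between the ``$\nexists y \in \D \ldots$'' phrasing of Validity and the universally quantified implication in Definition~\ref{dwaxp}, but these are immediately equivalent. I would therefore keep the proof to a few lines, one short paragraph per direction.
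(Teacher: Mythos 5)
Your proof is correct and follows essentially the same route as the paper's: both directions are obtained by unfolding the definitions, with Feasibility matching the first condition of a dwAXp and Validity (after contraposition of the $\nexists$ phrasing) matching the second. Nothing is missing.
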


\begin{proof}
	Let $\bL$ be an explainer. 
	
	$\blacktriangleright$Assume that $\bL$ that satisfies Feasibility and Validity. 
	Let $\bQ = \langle \mbb{T}, \kappa, \D, x \rangle$ be a question and $E \in \bL(\bQ)$. 
	Feasibility of $\bL$ implies $E \subseteq x$. Validity ensures that 
	$\forall y \in \D$ such that $E \subseteq y$, it holds that $\kappa(x) = \kappa(y)$. So, $E$ is a dwAXp, 
	i.e., $\bL(\bQ) \subseteq \bLdw(\bQ)$.
	
	$\blacktriangleright$ Assume that for any question 
	$\bQ = \langle \mbb{T}, \kappa, \D, x \rangle$, $\bL(\bQ) \subseteq \bLdw(\bQ)$. 
	Then, $\forall E \in \bL(\bQ)$, the following hold: 
	i) $E \subseteq x$, so $\bL$ satisfies Feasibility, and 
	ii) $\forall y \in \D$, if $E \subseteq y$, then $\kappa(x) \neq \kappa(y)$, so $\bL$ satisfies Validity.  
\end{proof}	
\begin{theorem}
An explainer $\bL$ satisfies Feasibility, Validity and Completeness \textbf{iff} 
$\bL = \bLdw$.
\end{theorem}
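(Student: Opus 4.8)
The plan is to prove the two directions of the biconditional separately, leaning on Theorem~\ref{th-cara} to obtain one of the two set inclusions for free.

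For the ``if'' direction, I would assume $\bL = \bLdw$ and check the three axioms directly against Definition~\ref{dwaxp}. Feasibility and Validity are immediate: the two defining conditions of a dwAXp are literally the conclusions asserted by these axioms. For Completeness, take any question $\bQ = \langle \mbb{T},\kappa,\D,x\rangle$ and any $E \subseteq x$ with $E \notin \bLdw(\bQ)$; since the first condition of Definition~\ref{dwaxp} ($E \subseteq x$) already holds, the failure of $E$ to be a dwAXp must come from the second condition, i.e.\ there is some $y \in \D$ with $E \subseteq y$ and $\kappa(y) \neq \kappa(x)$, which is exactly what Completeness demands.

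For the ``only if'' direction, suppose $\bL$ satisfies Feasibility, Validity and Completeness. Theorem~\ref{th-cara}, which needs only Feasibility and Validity, already gives $\bL(\bQ) \subseteq \bLdw(\bQ)$ for every question $\bQ$, so it remains to prove the reverse inclusion. I would take $E \in \bLdw(\bQ)$ and argue by contradiction: if $E \notin \bL(\bQ)$, then since $E \subseteq x$ (because $E$ is a dwAXp), Completeness of $\bL$ yields some $y \in \D$ with $E \subseteq y$ and $\kappa(y) \neq \kappa(x)$, contradicting the second condition of Definition~\ref{dwaxp}. Hence $\bLdw(\bQ) \subseteq \bL(\bQ)$, and together with Theorem~\ref{th-cara} this gives $\bL(\bQ) = \bLdw(\bQ)$ for all $\bQ$, i.e.\ $\bL = \bLdw$.

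I do not anticipate a genuine obstacle: once Theorem~\ref{th-cara} is available the argument is short bookkeeping. The only point meriting slight care is, in the second direction, to invoke $E \subseteq x$ (which holds because $E$ is a dwAXp) so that Completeness is actually applicable, and to note that it is the second clause of the dwAXp definition, not the first, that the resulting $y$ contradicts.
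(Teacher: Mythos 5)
Your proposal is correct and follows essentially the same route as the paper: the reverse inclusion $\bLdw(\bQ)\subseteq\bL(\bQ)$ is obtained exactly as in the paper's proof (contradiction via Completeness against the validity clause of the dwAXp definition), and the forward inclusion comes from Theorem~\ref{th-cara}. The only cosmetic difference is that for the ``if'' direction you verify the three axioms directly from Definition~\ref{dwaxp}, whereas the paper delegates this to its separate theorem on the axioms satisfied by $\bLdw$ (whose proof uses the same argument you give).
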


\begin{proof}
Let $\bL$ be an explainer. 

$\blacktriangleright$  Assume that $\bL = \bLdw$. 
From Table~\ref{tab2}, 	$\bLdw$ satisfies Feasibility, Validity and Completeness.
 
$\blacktriangleright$  Assume now $\bL$ is an explainer that satisfies Feasibility, Validity and Completeness. 
Let $\bQ = \langle \mbb{T}, \kappa, \D, x \rangle$ be a question. 
\begin{itemize}
	\item From Theorem~\ref{th-cara}, $\bL(\bQ) \subseteq  \bLdw(\bQ)$.
	\item Let $E \in \bLdw(\bQ)$ and assume $E \notin \bL(\bQ)$. Completeness of $\bL$ implies that $\exists y \in \D$ 
	      such that 
	      $E \subseteq y$ and $\kappa(x) \neq \kappa(y)$, which contradicts the validity of $\bLdw$.
	      So, $\bLdw(\bQ) \subseteq  \bL(\bQ)$. 
\end{itemize}
Thus, $\bL(\bQ) = \bLdw(\bQ)$.  
\end{proof}

\begin{theorem}
The explainer $\bLdw$ satisfies Feasibility, Validity, Success, (Strong) Completeness and CM. 
It violates the remaining axioms.
\end{theorem}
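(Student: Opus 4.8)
The plan is to establish the positive half (the five axioms that $\bLdw$ satisfies) and then, for the negative half, exhibit a concrete counterexample for each remaining axiom (Coherence, Irreducibility, Strong Irreducibility, Monotonicity) — noting that Completeness and Strong Completeness are already in the positive list.

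For the positive direction, I would argue as follows. \textbf{Feasibility and Validity} are immediate from Definition~\ref{dwaxp}: every $E \in \bLdw(\bQ)$ satisfies $E \subseteq x$ by the first bullet, and the second bullet is exactly Validity. \textbf{Success} follows because $x$ itself is always a dwAXp: $x \subseteq x$, and if $x \subseteq y$ for $y \in \D$ then $x = y$ (both are instances) so $\kappa(y) = \kappa(x)$; this is the first item of Property~\ref{p-da-dw}. \textbf{Strong Completeness} then follows from Success via Proposition~\ref{links}; more directly, I would prove \textbf{Completeness} outright: if $E \subseteq x$ but $E \notin \bLdw(\bQ)$, then since $E \subseteq x$ the only way the dwAXp conditions can fail is that the second bullet fails, i.e. $\exists y \in \D$ with $E \subseteq y$ and $\kappa(y) \neq \kappa(x)$ — which is exactly the conclusion of Completeness. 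Strong Completeness is then inherited by Proposition~\ref{links}. \textbf{CM} follows from Proposition~\ref{links} as well, since $\bLdw$ satisfies Feasibility, Validity and Completeness; alternatively it is direct: enlarging $\D$ to $\D'$ only adds more instances $y$ that could invalidate the second bullet, so $\bLdw(\bQ') \subseteq \bLdw(\bQ)$.

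For the negative direction I would reuse Example~\ref{ex2} and its continuations. \textbf{Coherence} fails by the computation already displayed: $E_1 = \{(f_1,0)\} \in \bLdw(\langle \mbb{T}_1,\kappa_1,\D_1,x_2\rangle)$ and $E_2 = \{(f_2,0)\} \in \bLdw(\bQ_1)$ (both shown in the excerpt), yet $E_1 \cup E_2$ is consistent and $\kappa_1(x_1) \neq \kappa_1(x_2)$. \textbf{(Strong) Irreducibility} fails because $\bLdw$ always returns $x$ itself (Property~\ref{p-da-dw}) together with strictly smaller dwAXps whenever they exist — in Example~\ref{ex2}, $E_3 = x_1 \in \bLdw(\bQ_1)$ but $E_2 \subset E_3$ with $E_2$ also a dwAXp, so no $x'$ witnessing irreducibility of the literal in $E_3 \setminus E_2$ can exist; since the same example can be completed over the full feature space (as in the proof of $\mathbf{I}_3$), this also refutes Strong Irreducibility. \textbf{Monotonicity} fails by the continuation with $\D_2 = \{x_1,x_2,x_3\}$: $E_2 \in \bLdw(\bQ_1)$ but $E_2 \notin \bLdw(\bQ_2)$ even though $\D_1 \subseteq \D_2$, because $x_3$ (with $\kappa_1(x_3) \neq \kappa_1(x_1)$) contains $E_2$.

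I do not anticipate a genuine obstacle here — every claim is a one- or two-line unwinding of definitions or a direct appeal to Proposition~\ref{links}, Property~\ref{p-da-dw}, and the already-worked Example~\ref{ex2}. The only point requiring mild care is making sure the counterexamples are chosen so that the invalidating instance genuinely lies in the (possibly enlarged) dataset and has a different class, and — for Strong Irreducibility — that the classifier is extended to the whole feature space in a way that still leaves no witness $x' \in \mbb{F}(\mbb{T})$; this mirrors exactly the extension $\kappa_1(1,1)=0$ used in the proof of incompatibility set $\mathbf{I}_3$, so it is routine.
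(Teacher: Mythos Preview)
Your positive half is essentially the paper's, modulo one harmless slip: you write that ``Strong Completeness then follows from Success via Proposition~\ref{links}'', but that implication goes the other way. You immediately recover by proving Completeness directly and then deducing Strong Completeness, so no damage is done.

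The negative half, however, has two genuine errors.

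\textbf{Coherence.} Your proposed counterexample does not work: $E_1 = \{(f_1,0)\}$ is \emph{not} a dwAXp of $\kappa_1(x_2)$ in $\D_1$, because $x_1 \in \D_1$ also contains $(f_1,0)$ and $\kappa_1(x_1) = 0 \neq 1 = \kappa_1(x_2)$. In fact one checks that $\bLdw(\langle\mbb{T}_1,\kappa_1,\D_1,x_2\rangle) = \{\{(f_2,1)\}, x_2\}$, and every pair with an element of $\bLdw(\bQ_1) = \{E_2,E_3\}$ is inconsistent on feature $f_2$; so $\bLdw$ is actually coherent on this particular pair of questions. The paper avoids this trap by appealing to the incompatibility $\mathbf{I}_2$ (Feasibility, Coherence, Completeness), whose underlying witness is the three-class example in the proof of Theorem~\ref{th:incompatibility}, not Example~\ref{ex2}.

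\textbf{Strong Irreducibility.} You have conflated Example~\ref{ex2} with the (identically named but different) $\kappa_1,\D_1$ introduced inside the proof of Theorem~\ref{th:incompatibility}. In Example~\ref{ex2}, with $\kappa_1(x_3)=1$ and $\kappa_1(x_4)=0$ as given, removing $(f_1,0)$ from $E_3$ leaves $\{(f_2,0)\}$, and $x_3 \in \mbb{F}(\mbb{T}_1)$ is a valid witness for Strong Irreducibility; similarly for the other literal and for $E_2$. So your example does not refute Strong Irreducibility. The paper instead invokes $\mathbf{I}_3$: since $\bLdw$ satisfies Strong Completeness, it cannot satisfy Strong Irreducibility (and hence, via Proposition~\ref{links}, cannot satisfy Irreducibility either).

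Your counterexamples for Irreducibility and Monotonicity are fine. The moral is that the paper's strategy of deriving the violations from the already-proved incompatibility sets $\mathbf{I}_2$ and $\mathbf{I}_3$ is not just shorter but safer: it reuses examples that were carefully constructed for exactly this purpose, whereas reaching for Example~\ref{ex2} here leads you astray on two of the four axioms.
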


\begin{proof} Let us show the properties satisfied by the function ${\bLdw}$.
	
	$\blacktriangleright$ \textit{Feasibility and Validity} of ${\bLdw}$ follow 
    from its definition.   

        $\blacktriangleright$ \textit{Success} of ${\bLdw}$ follows from Property~\ref{p-da-dw} (every instance $x$ is a weak abductive explanation of $\kappa(x)$).
	

	$\blacktriangleright$ 
	\textit{Counter-Monotonicity:} 
	Let $\bQ = \langle \mbb{T}, \kappa, \D, x \rangle$ and 
	$\bQ' = \langle \mbb{T}, \kappa, \D', x \rangle$ be two questions such that   
	$\D \subseteq \D' \subseteq \mbb{F}(\mbb{T})$. 
	Let $E \in {\bLdw}(\bQ')$. Hence, $E \subseteq x$ and $\forall y \in \D'$, 
	if $E \subseteq y$, then $\kappa(y) = \kappa(x)$. 
	Since $\D \subseteq \D'$, we have $\{y \in \D \mid E \subseteq y\} \subseteq \{y \in \D' \mid E \subseteq y\}$. 
	Hence $E \in {\bLdw}(\bQ)$.
	
	$\blacktriangleright$ 	 
	\textit{(Strong) Completeness:} Let us show that ${\bLdw}$ satisfies Completeness. 	
	Let $\bQ = \langle \mbb{T}, \kappa, \D, x \rangle$ and $E \subseteq x$. Assume that 
	$E \notin \bLdw(\bQ)$. Then, $E$ violates the validity condition in the definition of dwAXp's, 
	hence $\exists y \in \D$ such that $E \subseteq y$ and $\kappa(y) \neq \kappa(x)$. 
	So, $\bL$ satisfies Completeness.	
        Strong Completeness follows from Completeness by Proposition~\ref{links}.

   	$\blacktriangleright$ \textit{Coherence:} 
    From Theorem~\ref{th:incompatibility}, since ${\bLdw}$ satisfies Feasibility and Completeness, 
    we can deduce that Coherence is violated (the set $\mathbf{I}_2$ being incompatible).

	$\blacktriangleright$ \textit{Irreducibility:} From Theorem~\ref{th:incompatibility}, 
    since ${\bLdw}$ satisfies Feasibility and Completeness, 
    it violates Irreducibility (the set $\mathbf{I}_3$ being incompatible). 
 
	$\blacktriangleright$ 
	\textit{Monotonicity:}  Consider the above example using the classifier $\kappa_3$. 
    Note that $\bLdw(\bQ) = \{E_3\}$ and $\bLdw(\bQ') = \{E_4\}$ while 
    $\D_1 \subseteq \D_2$. 
\end{proof}
\begin{theorem}
An explainer $\bL$ satisfies Feasibility, Validity, Monotonicity and Strong Completeness  \textbf{iff} $\bL = \bLw$.
\end{theorem}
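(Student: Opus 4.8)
The plan is to treat the two directions separately: the ($\Leftarrow$) direction is immediate, and the ($\Rightarrow$) direction is a double inclusion carried out question by question.

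For ($\Leftarrow$), assume $\bL = \bLw$. Then there is nothing to do beyond invoking Theorem~\ref{th:blw} (equivalently, the $\bLw$ column of Table~\ref{tab2}), which already records that $\bLw$ satisfies Feasibility, Validity, Monotonicity and Strong Completeness.

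For ($\Rightarrow$), assume $\bL$ satisfies the four axioms, fix an arbitrary question $\bQ = \langle \mbb{T}, \kappa, \D, x \rangle$, and set $\bQ^\star = \langle \mbb{T}, \kappa, \mbb{F}(\mbb{T}), x \rangle$, recalling that $\bLw(\bQ) = \bLdw(\bQ^\star)$ by definition. First I would prove $\bL(\bQ) \subseteq \bLw(\bQ)$: since $\D \subseteq \mbb{F}(\mbb{T})$, Monotonicity gives $\bL(\bQ) \subseteq \bL(\bQ^\star)$, and since $\bL$ satisfies Feasibility and Validity, Theorem~\ref{th-cara} gives $\bL(\bQ^\star) \subseteq \bLdw(\bQ^\star) = \bLw(\bQ)$. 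For the reverse inclusion, take $E \in \bLw(\bQ) = \bLdw(\bQ^\star)$, so that $E \subseteq x$ and every $y \in \mbb{F}(\mbb{T})$ with $E \subseteq y$ satisfies $\kappa(y) = \kappa(x)$; were $E \notin \bL(\bQ)$, Strong Completeness applied to $\bQ$ would yield some $y \in \mbb{F}(\mbb{T})$ with $E \subseteq y$ and $\kappa(y) \neq \kappa(x)$, a contradiction. Combining the inclusions gives $\bL(\bQ) = \bLw(\bQ)$ for all $\bQ$, i.e.\ $\bL = \bLw$.

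The point I would flag as the crux is that Monotonicity alone does \emph{not} pin down $\bL(\bQ)$ when $\D$ is small --- it only lower-bounds $\bL$ on larger datasets. The trick is to escalate to the maximal dataset $\mbb{F}(\mbb{T})$, where Monotonicity together with the $\bLdw$-characterisation of Theorem~\ref{th-cara} supplies the upper bound, while Strong Completeness --- being phrased relative to the whole feature space and hence insensitive to $\D$ --- supplies a matching lower bound that, just like $\bLw$ itself, ignores $\D$. No separate use of Success is needed, since it follows from Strong Completeness by Proposition~\ref{links}.
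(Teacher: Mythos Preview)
Your proof is correct and follows essentially the same approach as the paper: both directions are handled identically, using Monotonicity to escalate to $\mbb{F}(\mbb{T})$ together with Theorem~\ref{th-cara} for the inclusion $\bL(\bQ) \subseteq \bLw(\bQ)$, and Strong Completeness for the reverse inclusion, with the ($\Leftarrow$) direction deferred to Theorem~\ref{th:blw}.
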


\begin{proof}
Let $\bL$ be an explainer. 

$\blacktriangleright$ Assume that $\bL$ satisfies Feasibility, Validity, Monotonicity and Strong Completeness. Let $\bQ = \langle \mbb{T}, \kappa, \D, x \rangle$ and 
$\bQ' = \langle \mbb{T}, \kappa, \mbb{F}(\mbb{T}), x \rangle$. 
Since $\D \subseteq \mbb{F}(\mbb{T})$, then from Monotonicity of $\bL$, 
$\bL(\bQ) \subseteq \bL(\bQ')$. 
From  Feasibility and Validity, it follows from Theorem~\ref{th-cara} that 
$\bL(\bQ') \subseteq \bLdw(\bQ')$. By definition of $\bLw$, 
$\bLw(\bQ) = \bLdw(\bQ')$. So, $\bL(\bQ) \subseteq \bLw(\bQ)$. 

Consider now some $E \in \bLw(\bQ)$ and $E \notin \bL(\bQ)$. 
Then, $E \subseteq x$. From Strong Completeness, $\exists y \in \mbb{F}(\mbb{T})$ 
such that $E \subseteq y$ and $\kappa(x) \neq \kappa(y)$. This contradicts the 
validity of $E$ on $\mbb{F}(\mbb{T})$. So, $\bLw(\bQ) \subseteq \bL(\bQ)$.

$\blacktriangleright$ Theorem~\ref{th:blw} shows that $\bLw$ satisfies the four axioms.
\end{proof}
\begin{theorem}
The explainer $\bLw$ satisfies Feasibility, Validity, Success, Strong Completeness, Coherence, 
Monotonicity and CM. It violates (Strong) Irreducibility and Completeness.
\end{theorem}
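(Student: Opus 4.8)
The plan is to exploit the remark following Definition~\ref{dwaxp}: for $\bQ=\langle\mbb{T},\kappa,\D,x\rangle$ we have $\bLw(\bQ)=\bLdw(\langle\mbb{T},\kappa,\mbb{F}(\mbb{T}),x\rangle)$, so $\bLw(\bQ)$ depends only on $\mbb{T},\kappa,x$ and is insensitive to $\D$. \textbf{Feasibility} and \textbf{Validity} then follow at once from Theorem~\ref{th-cara}-style reasoning applied to the feature space: every $E\in\bLw(\bQ)$ is a dwAXp over $\mbb{F}(\mbb{T})$, hence $E\subseteq x$, and since $\D\subseteq\mbb{F}(\mbb{T})$ the validity clause of Definition~\ref{dwaxp} restricted to $\D$ holds a fortiori. \textbf{Success} holds because $x$ itself is always a dwAXp of $\kappa(x)$ over $\mbb{F}(\mbb{T})$ (the only instance containing $x$ is $x$), a feature-space instance of Property~\ref{p-da-dw}. \textbf{Monotonicity} and \textbf{CM} are immediate: if $\D\subseteq\D'$ then $\bLw(\langle\mbb{T},\kappa,\D,x\rangle)=\bLw(\langle\mbb{T},\kappa,\D',x\rangle)$, so both required inclusions hold with equality.

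Next I would treat the two remaining positive axioms. For \textbf{Strong Completeness}, take $E\subseteq x$ with $E\notin\bLw(\bQ)$; since $E\subseteq x$, the only clause of Definition~\ref{dwaxp} that $E$ can fail over $\mbb{F}(\mbb{T})$ is the validity clause, which directly produces an instance $y\in\mbb{F}(\mbb{T})$ with $E\subseteq y$ and $\kappa(y)\neq\kappa(x)$ — exactly what Strong Completeness demands. For \textbf{Coherence}, let $\bQ=\langle\mbb{T},\kappa,\D,x\rangle$ and $\bQ'=\langle\mbb{T},\kappa,\D,x'\rangle$ with $\kappa(x)\neq\kappa(x')$, and take $E\in\bLw(\bQ)$, $E'\in\bLw(\bQ')$. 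If $E\cup E'$ were consistent, it extends to an instance $y\in\mbb{F}(\mbb{T})$ with $E\subseteq y$ and $E'\subseteq y$; the dwAXp property of $E$ over $\mbb{F}(\mbb{T})$ forces $\kappa(y)=\kappa(x)$ and that of $E'$ forces $\kappa(y)=\kappa(x')$, contradicting $\kappa(x)\neq\kappa(x')$. Hence $E\cup E'$ is inconsistent.

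Finally the \textbf{violations}. Since Irreducibility implies Strong Irreducibility (Proposition~\ref{links}), it suffices to exhibit one question where Strong Irreducibility fails. Because $\bLw$ always returns the whole instance $x$, it is enough to choose a setting in which $x$ has a strict sub-assignment that is still a dwAXp over $\mbb{F}(\mbb{T})$: e.g. $\kappa$ the disjunction of two binary features with $x$ assigning $1$ to both, so that $\{(f_1,1)\}$ is already a dwAXp; then $x\setminus\{(f_2,1)\}=\{(f_1,1)\}$ is a dwAXp, so no instance $x'$ with $\kappa(x')\neq\kappa(x)$ contains it, and Strong Irreducibility (hence Irreducibility) fails. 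For \textbf{Completeness}, reuse Example~\ref{ex2}: $E_2=\{(f_2,0)\}\subseteq x_1$ is not in $\bLw(\bQ_1)=\{E_3\}$ (it is invalidated by $x_3=(1,0)$ with $\kappa_1(x_3)\neq\kappa_1(x_1)$), yet the only $y\in\D_1$ containing $E_2$ is $x_1$ itself, so no $y\in\D_1$ witnesses $\kappa_1(y)\neq\kappa_1(x_1)$; Completeness is violated.

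The only genuinely delicate step is the Coherence argument, where one must justify that a consistent literal set $E\cup E'$ extends to a full instance of $\mbb{F}(\mbb{T})$ — fill each unassigned feature with an arbitrary element of its (non-empty) domain — after which the two dwAXp properties collide. Everything else is routine unfolding of Definition~\ref{dwaxp} together with Proposition~\ref{links} and Property~\ref{p-da-dw}.
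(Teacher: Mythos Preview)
Your proof is correct and, for the seven satisfied axioms, essentially identical to the paper's: both argue Feasibility/Validity from the definition of dwAXp over $\mbb{F}(\mbb{T})$ together with $\D\subseteq\mbb{F}(\mbb{T})$, Success from $x\in\bLw(\bQ)$, Monotonicity and CM from the insensitivity of $\bLw$ to $\D$, Strong Completeness by unfolding the definition, and Coherence by extending a consistent $E\cup E'$ to a full instance and deriving a clash.

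The genuine difference lies in the \emph{violations}. The paper dispatches all three in one stroke via the incompatibility sets of Theorem~\ref{th:incompatibility}: having already established Feasibility, Success, Coherence and Strong Completeness, it invokes $\mathbf{I}_1$ to rule out Irreducibility, $\mathbf{I}_2$ to rule out Completeness, and $\mathbf{I}_3$ to rule out Strong Irreducibility. You instead construct explicit counterexamples (the disjunction classifier for Strong Irreducibility, Example~\ref{ex2} for Completeness) and then use the implication Irreducibility $\Rightarrow$ Strong Irreducibility contrapositively. Your route is more self-contained and makes the failures concrete; the paper's route is shorter and highlights that the violations are forced by the incompatibility landscape rather than being accidents of $\bLw$. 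Both are valid and roughly equal in length once the ambient results are in place.
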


\begin{proof}
$\ $

$\blacktriangleright$ \emph{Feasibility} and \emph{Validity} follow from the definition. 

$\blacktriangleright$ \emph{Success} follows from the fact that for any $x \in \mbb{F}(\mbb{T})$, 
$x \in \bLw(\bQ)$ where $\bQ = \langle \mbb{T}, \kappa, \mbb{F}(\mbb{T}), x \rangle$. 

$\blacktriangleright$ \emph{Coherence}: Let 
$\bQ = \langle \mbb{T}, \kappa, \D, x \rangle$ and 
$\bQ' = \langle \mbb{T}, \kappa, \D, y \rangle$ such that $\kappa(x) \neq \kappa(y)$.   
Assume that $E \in \bLw(\bQ)$, $E' \in \bLw(\bQ')$ and $E \cup E'$ is consistent. 
Then, $\exists z \in \mbb{F}(\mbb{T})$ such that $E \cup E' \subseteq z$. 
From Validity of $\bLw$, $\kappa(x) = \kappa(z)$ and $\kappa(y) = \kappa(z)$, 
hence $\kappa(x) = \kappa(y)$, which contradicts the assumption.

$\blacktriangleright$ \emph{Monotonicity and Counter-Monotonicity} follow from 
the fact that for all $\D \subseteq \D'$, for all 
$\bQ = \langle \mbb{T}, \kappa, \D, x \rangle$ and 
$\bQ' = \langle \mbb{T}, \kappa, \D', x \rangle$, 
$\bLw(\bQ) = \bLw(\bQ')$.

$\blacktriangleright$ \emph{Strong Completeness:} Let $\bQ = \langle \mbb{T}, \kappa, \D, x \rangle$ 
be a question and $E \subseteq x$. 
Assume $E \notin \bLw(\bQ)$. By definition of $\bLw$, 
$\exists y \in \mbb{F}(\mbb{T})$ such that $E \subseteq y$ and 
$\kappa(x) \neq \kappa(y)$.   Hence, $\bLw$ satisfies Strong Completeness.

$\blacktriangleright$ \emph{Irreducibility} and \emph{Completeness} are violated since 
from Theorem~\ref{th:incompatibility}, Completeness is incompatible with the pair 
(Feasibility, Coherence) and Irreducibility is incompatible with the tuple (Feasibility, 
Success, Coherence). Strong Irreducibility is violated since from Proposition~\ref{links}, it is incompatible with Strong Completeness.
\end{proof}
\begin{theorem}
An explainer $\bL$ satisfies Feasibility, Validity, and Irreducibility iff for any question 
$\bQ = \langle \mbb{T}, \kappa, \D, x \rangle$, $\bL(\bQ) \subseteq \bLda(\bQ)$.
\end{theorem}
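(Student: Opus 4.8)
The plan is to piggyback on the characterisation of Feasibility and Validity already obtained in Theorem~\ref{th-cara}, and then to handle Irreducibility through its equivalence with subset-minimality among weak abductive explanations (the defining property of cAXps in Definition~\ref{daxp}). So the proof splits naturally into the two implications of the \emph{iff}.

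For the implication from Feasibility, Validity and Irreducibility to $\bL(\bQ)\subseteq\bLda(\bQ)$, I would first invoke Theorem~\ref{th-cara} to get $\bL(\bQ)\subseteq\bLdw(\bQ)$, so each $E\in\bL(\bQ)$ is already a dwAXp and only subset-minimality remains. I would argue by contradiction: if some $E'\in\bLdw(\bQ)$ had $E'\subsetneq E$, I would pick a literal $l\in E\setminus E'$ and apply Irreducibility to obtain $x'\in\D$ with $\kappa(x')\neq\kappa(x)$ and $E\setminus\{l\}\subseteq x'$. Since $l\notin E'$ we get $E'\subseteq E\setminus\{l\}\subseteq x'$, and then the validity clause of Definition~\ref{dwaxp} applied to the dwAXp $E'$ forces $\kappa(x')=\kappa(x)$, a contradiction. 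Hence $E$ is a cAXp.

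For the converse, suppose $\bL(\bQ)\subseteq\bLda(\bQ)$ for every question. Since by Definition~\ref{daxp} every cAXp is in particular a dwAXp, we have $\bL(\bQ)\subseteq\bLdw(\bQ)$, so Theorem~\ref{th-cara} yields Feasibility and Validity at once. For Irreducibility, I would take $E\in\bL(\bQ)\subseteq\bLda(\bQ)$ and $l\in E$; then $E\setminus\{l\}\subsetneq E\subseteq x$ satisfies the first clause of Definition~\ref{dwaxp}, but by subset-minimality of the cAXp $E$ it cannot be a dwAXp, so its second clause must fail and produce $y\in\D$ with $E\setminus\{l\}\subseteq y$ and $\kappa(y)\neq\kappa(x)$; taking $x'=y$ is exactly the witness Irreducibility requires.

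I expect no real obstacle here: the argument is a routine layering of a minimality step on top of Theorem~\ref{th-cara}. The only points deserving a line of care are the set-inclusion bookkeeping (that $E'\subseteq E\setminus\{l\}$ holds precisely because $l\notin E'$) and noting that $E\setminus\{l\}$ is itself a legitimate partial assignment, so that asking whether it belongs to $\bLdw(\bQ)$ is meaningful.
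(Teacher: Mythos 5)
Your proposal is correct and follows essentially the same route as the paper: the forward direction reduces to Theorem~\ref{th-cara} plus a subset-minimality argument driven by Irreducibility, and the converse combines the inclusion $\bLda(\bQ)\subseteq\bLdw(\bQ)$ with Theorem~\ref{th-cara} and a direct check of Irreducibility from the minimality clause of Definition~\ref{daxp}. If anything, your contradiction step in the forward direction (applying Irreducibility to a literal $l\in E\setminus E'$ and then using validity of the dwAXp $E'$ on the resulting witness $x'$) is stated more carefully than the paper's own wording of the same idea.
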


\begin{proof}
	Let $\bL$ be an explainer. 
		
	$\blacktriangleright$  Assume now $\bL$ is an explainer that satisfies Feasibility, Validity, and Irreducibility. Let $\bQ = \langle \mbb{T}, \kappa, \D, x \rangle$ be a question. 
    Assume $E \in \bL(\bQ)$. From Theorem~\ref{th-cara}, $E \in \bLdw(\bQ)$, so $E$ is a weak abductive explanation. Assume now that $E \notin \bLda(\bQ)$, so $E$ is not subset-minimal. 
    Then, $\exists E' \in \bLdw(\bQ)$ such that $E' \subset E$. 
    hence, $\exists l \in E\setminus E'$ and $\exists y \in \D$ such that $E\setminus\{l\} \subseteq y$ and 
    $\kappa(x) = \kappa(y)$, which contradicts Irreducibility of $\bL$. Hence, $E \in \bLda(\bQ)$. 

    $\blacktriangleright$  Assume that for any question $\bQ = \langle \mbb{T}, \kappa, \D, x \rangle$, 
    $\bL(\bQ) \subseteq \bLda(\bQ)$. From Property~\ref{p-da-dw}, $\bL(\bQ) \subseteq \bLdw(\bQ)$. 
    From Theorem~\ref{th-cara}, $\bL$ satisfies Feasibility and Validity. 
    From Table~\ref{tab2}, $\bLda$ satisfies Irreducibility, hence so does $\bL$.
\end{proof}

\begin{theorem}
The explainer $\bLda$ satisfies Feasibility, Validity, Success and (Strong) Irreducibility. 
It violates the remaining axioms.
\end{theorem}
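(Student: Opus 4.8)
The plan is to verify the four positive axioms directly from Definition~\ref{daxp} and Property~\ref{p-da-dw}, and then to read off every violation from the incompatibility results in Theorem~\ref{th:incompatibility} together with Proposition~\ref{links}.

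For the positive part, every $E \in \bLda(\bQ)$ is, by Definition~\ref{daxp}, an element of $\bLdw(\bQ)$, so $E \subseteq x$ and $E$ is valid on $\D$; this gives Feasibility and Validity at once. For Success, Property~\ref{p-da-dw} gives $x \in \bLdw(\bQ)$, hence $\bLdw(\bQ) \neq \emptyset$; since $\bLdw(\bQ)$ is a finite set of subsets of the instance $x$, it has a subset-minimal member, and any such member lies in $\bLda(\bQ)$, so $\bLda(\bQ) \neq \emptyset$. For Irreducibility, fix $\bQ = \langle \mbb{T}, \kappa, \D, x \rangle$, $E \in \bLda(\bQ)$ and $l \in E$: by subset-minimality of $E$ in $\bLdw(\bQ)$ we have $E \setminus \{l\} \notin \bLdw(\bQ)$, while $E \setminus \{l\} \subseteq E \subseteq x$, so the only clause of Definition~\ref{dwaxp} that $E \setminus \{l\}$ can fail is the second one; hence there is $x' \in \D$ with $E \setminus \{l\} \subseteq x'$ and $\kappa(x') \neq \kappa(x)$, which is exactly what Irreducibility demands. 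Strong Irreducibility then follows from Irreducibility by Proposition~\ref{links}.

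For the negative part, I would chain the established positive properties through Theorem~\ref{th:incompatibility}: since $\bLda$ satisfies Feasibility, Success and Irreducibility, the incompatibility of $\mathbf{I}_1$ rules out Coherence; since it satisfies Strong Irreducibility, the incompatibility of $\mathbf{I}_3$ rules out Strong Completeness, and then Completeness is ruled out as well because Completeness $\Rightarrow$ Strong Completeness by Proposition~\ref{links} (equivalently, by Theorem~\ref{th:dw}, Completeness would force $\bLda = \bLdw$, contradicting e.g.\ Example~\ref{ex2}); finally, since it satisfies Feasibility, Validity, Success and Irreducibility, the incompatibility of $\mathbf{I}_4$ (resp.\ $\mathbf{I}_5$) rules out Monotonicity (resp.\ Counter-Monotonicity). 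These are precisely ``the remaining axioms'', so nothing else is left to check. The only step needing any care is the Irreducibility argument, where one must invoke $E \subseteq x$ to guarantee that $E \setminus \{l\}$ fails dwAXp-membership through the validity clause rather than the feasibility clause; beyond that the proof is pure bookkeeping and I anticipate no real obstacle.
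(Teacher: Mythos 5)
Your proposal is correct and follows essentially the same route as the paper's proof: the positive axioms are read off from Definition~\ref{daxp} and Property~\ref{p-da-dw}, and the violations are all derived from the incompatibility sets in Theorem~\ref{th:incompatibility}. You are in fact somewhat more careful than the paper in two places — you spell out why $E\setminus\{l\}$ must fail the validity clause (rather than the feasibility clause) of Definition~\ref{dwaxp}, and you explicitly dispose of Strong Completeness via $\mathbf{I}_3$, which the paper's proof leaves implicit.
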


\begin{proof} Let us show the properties satisfied by ${\bLda}$.
	
	$\blacktriangleright$ \textit{Feasibility and Validity} of ${\bLda}$ follow from 
    the definition of the function. 

	$\blacktriangleright$ \textit{Success} of ${\bLda}$ follows from Property~\ref{p-da-dw} (every instance $x$ has a dwAXp, namely itself, and hence must have a subset-minimal dwAXp).

	$\blacktriangleright$ \textit{Irreducibility:} Irreducibility of $\bLda$ follows from 
    its minimality condition.

    $\blacktriangleright$ \textit{Coherence, Completeness, Monotonicity and CM} are violated due to their incompatibility with Irreducibility (see Theorem~\ref{th:incompatibility}).  
\end{proof}
\begin{theorem}
The function $\bLa$ satisfies all the axioms except Irreducibility and (Strong) Completeness. 
\end{theorem}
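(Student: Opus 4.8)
The plan is to establish the seven axioms that $\bLa$ satisfies directly from the defining identity $\bLa(\bQ) = \bLda(\bQ')$ with $\bQ' = \langle\mbb{T},\kappa,\mbb{F}(\mbb{T}),x\rangle$, and then to read off the three violations from the incompatibilities of Theorem~\ref{th:incompatibility}, exactly as was done for $\bLdw$ and $\bLw$ (Theorems~\ref{th:bldw} and~\ref{th:blw}).

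First I would dispatch the easy positive properties. Every $E \in \bLa(\bQ)$ is, by definition, a dwAXp of $\bQ'$, so $E \subseteq x$ and $E$ is valid on $\mbb{F}(\mbb{T})$, hence on $\D \subseteq \mbb{F}(\mbb{T})$; this gives Feasibility and Validity (equivalently, one observes $\bLa(\bQ) = \bLda(\bQ') \subseteq \bLdw(\bQ') \subseteq \bLdw(\bQ)$ using Property~\ref{p-da-dw} and Counter-Monotonicity of $\bLdw$, then applies Theorem~\ref{th-cara}). For Success, Property~\ref{p-da-dw} applied to $\bQ'$ gives $x \in \bLdw(\bQ')$, so the finite nonempty set of dwAXp's of $\bQ'$ has a subset-minimal element and $\bLa(\bQ) \neq \emptyset$. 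Monotonicity and Counter-Monotonicity are immediate because $\bLa(\bQ)$ does not depend on $\D$: for $\D \subseteq \D'$ one has $\bLa(\langle\mbb{T},\kappa,\D,x\rangle) = \bLda(\bQ') = \bLa(\langle\mbb{T},\kappa,\D',x\rangle)$, so both inclusions hold trivially.

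The two positive properties needing a short argument are Strong Irreducibility and Coherence. For Strong Irreducibility I would take $E \in \bLa(\bQ) = \bLda(\bQ')$ and $l \in E$; subset-minimality of $E$ among the dwAXp's of $\bQ'$ forces $E \setminus \{l\} \notin \bLdw(\bQ')$, and since $E \setminus \{l\} \subseteq x$ still holds, the validity clause is what must fail, i.e. there is $x' \in \mbb{F}(\mbb{T})$ with $E \setminus \{l\} \subseteq x'$ and $\kappa(x') \neq \kappa(x)$, which is precisely Strong Irreducibility. For Coherence, given $\bQ = \langle\mbb{T},\kappa,\D,x\rangle$ and $\bQ'' = \langle\mbb{T},\kappa,\D,x''\rangle$ with $\kappa(x) \neq \kappa(x'')$, and $E \in \bLa(\bQ)$, $E'' \in \bLa(\bQ'')$, I would assume $E \cup E''$ consistent; a consistent set of literals is a partial assignment, which extends to some instance $z \in \mbb{F}(\mbb{T})$, and then validity of $E$ and of $E''$ over $\mbb{F}(\mbb{T})$ yields $\kappa(z) = \kappa(x)$ and $\kappa(z) = \kappa(x'')$, contradicting $\kappa(x) \neq \kappa(x'')$; hence $E \cup E''$ is inconsistent.

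With Feasibility, Validity, Success, Coherence and Strong Irreducibility in hand, the three negative claims are then free: by Theorem~\ref{th:incompatibility}, $\bLa$ cannot satisfy Irreducibility (since $\mathbf{I}_1$, namely Feasibility, Success, Coherence, Irreducibility, is incompatible), cannot satisfy Completeness (since $\mathbf{I}_2$, namely Feasibility, Coherence, Completeness, is incompatible), and cannot satisfy Strong Completeness (since $\mathbf{I}_3$, namely Strong Irreducibility, Strong Completeness, is incompatible). The step I expect to need the most care is the Coherence argument, precisely the claim that a consistent union of two feature-based explanations embeds in a common instance of the feature space; this is where the hypotheses that every domain is finite and non-trivial are actually used, whereas everything else is routine unwinding of the definitions of dwAXp and cAXp.
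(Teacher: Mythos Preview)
Your proof is correct. The paper's own proof differs mainly in bookkeeping: for the positive axioms Success, Strong Irreducibility and Coherence it simply cites an earlier paper rather than spelling out the arguments you give (which are exactly the right ones), and for the three violations it exhibits explicit counterexamples rather than invoking the incompatibilities $\mathbf{I}_1$, $\mathbf{I}_2$, $\mathbf{I}_3$ as you do. Your route is cleaner and more self-contained, since the incompatibility results are already proved in the paper and immediately close the negative cases; the paper's counterexamples are of course also valid but duplicate work already encapsulated in Theorem~\ref{th:incompatibility}. One small remark: in your Coherence step, the extension of a consistent set of literals to a full instance only needs each domain to be nonempty (guaranteed by $|\d(\cdot)|>1$); finiteness of the domains is not actually required for that particular move.
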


\begin{proof} Let us show the properties satisfied by ${\bLa}$.
	
	$\blacktriangleright$ \textit{Success, Strong Irreducibility, Coherence} of ${\bLa}$ have been shown in \cite{ecai23}.
	
	$\blacktriangleright$ \textit{Feasibility and Validity} of ${\bLa}$ follow from 
    the definition of the function.

    $\blacktriangleright$ \textit{Monotonicity and CM} follows straightforwardly from the fact 
    that for any question $\bQ = \langle\mbb{T}, \kappa, \D, x\rangle$, 
    $\bLa(\bQ) = \bLda(\bQ')$ where  $\bQ' = \langle\mbb{T}, \kappa, \mbb{F}(\mbb{T}), x\rangle$.

	$\blacktriangleright$ \textit{Irreducibility and Completeness:} To show their 
    violation by $\bLa$, consider the following theory, classifier, the dataset consisting of $x_1, x_2$ 
    and the question $\bQ = \langle\mbb{T},\kappa,\D, x_1\rangle$. 
    
    \begin{center}
		\begin{tabular}{c|cc|c}\hline
		           $\mbb{F}(\mbb{T})$   & $f_1$ & $f_2$ & $\kappa(x_i)$   \\\hline
	\rowcolor{maroon!10}		$x_1$  &   0   &  0    &  0 \\
	\rowcolor{maroon!10}		$x_2$  &   0   &  1    &  1 \\
                                $x_3$  &   1   &  0    &  1 \\
			                    $x_4$  &   1   &  1    &  0 \\\hline
		\end{tabular}
	\end{center}
    Note that $\bLa(\bQ) = \{\{(f_1,0),(f_2,0)\}\}$ and 
    $\nexists y \in \D$ such that $(f_2,0) \in y$ and $\kappa(y) \neq \kappa(x_1)$. 
    Hence, $\{(f_1,0),(f_2,0)\}$ can be reduced to $\{(f_2,0)\}$. The same example 
    shows that Completeness is violated (indeed, $\{(f_2,0)\} \notin \bLa(\bQ))$.

    $\blacktriangleright$ \textit{Strong Completeness:} Consider the following theory, 
    classifier, the dataset made of $x_1, x_2$ 
    and the question $\bQ = \langle\mbb{T},\kappa,\D, x_1\rangle$. 
    
    \begin{center}
		\begin{tabular}{c|cc|c}\hline
		           $\mbb{F}(\mbb{T})$   & $f_1$ & $f_2$ & $\kappa(x_i)$   \\\hline
	\rowcolor{maroon!10}		$x_1$  &   0   &  0    &  0 \\
	\rowcolor{maroon!10}		$x_2$  &   0   &  1    &  0 \\
                                $x_3$  &   1   &  0    &  1 \\
			                    $x_4$  &   1   &  1    &  1 \\\hline
		\end{tabular}
	\end{center}
    Note that $\bLa(\bQ) = \{\{(f_1,0)\}\}$. So, $x_1 \notin \bLa(\bQ)$ while 
    $\nexists y \in \mbb{F}(\mbb{T})$ such that $x_1 \subseteq y$ and 
    $\kappa(y) \neq \kappa(x_1)$.
\end{proof}



\begin{theorem}
If an explainer $\bL$ satisfies Feasibility, Success and Coherence, 
then for any question $\bQ$, $\bL(\bQ) \subseteq \bLdw(\bQ)$.
\end{theorem}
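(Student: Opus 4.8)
The plan is to derive this as an immediate consequence of two results already established in the excerpt: the implication "Success, Feasibility and Coherence $\Rightarrow$ Validity" from Proposition~\ref{links}, and the characterisation in Theorem~\ref{th-cara} which states that an explainer satisfies Feasibility and Validity \textbf{iff} $\bL(\bQ) \subseteq \bLdw(\bQ)$ for every question $\bQ$.

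First I would observe that since $\bL$ satisfies Feasibility, Success and Coherence, Proposition~\ref{links} gives that $\bL$ also satisfies Validity. Hence $\bL$ satisfies both Feasibility and Validity, and Theorem~\ref{th-cara} then yields directly that $\bL(\bQ) \subseteq \bLdw(\bQ)$ for every question $\bQ$, which is the claim.

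If one prefers a self-contained argument rather than invoking Proposition~\ref{links}, the alternative is to re-run its proof of the Validity implication: suppose for contradiction that $\bL$ violates Validity, so there is a question $\bQ = \langle \mbb{T}, \kappa, \D, x\rangle$, some $E \in \bL(\bQ)$ and some $y \in \D$ with $E \subseteq y$ and $\kappa(y) \neq \kappa(x)$. Apply Success to the question $\bQ' = \langle \mbb{T}, \kappa, \D, y\rangle$ to obtain some $E' \in \bL(\bQ')$; Feasibility gives $E' \subseteq y$, so $E \cup E' \subseteq y$ and hence $E \cup E'$ is consistent since $y$ is an instance. But Coherence applied to $\bQ, \bQ'$ (with $\kappa(x) \neq \kappa(y)$) forces $E \cup E'$ to be inconsistent — a contradiction. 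This establishes Validity, and then feeding Feasibility and Validity into Theorem~\ref{th-cara} closes the proof exactly as above.

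There is no real obstacle here: the statement is essentially the composition of Proposition~\ref{links} with Theorem~\ref{th-cara}, and the only point requiring any care is the (standard) observation that a subset of an instance is consistent, which is what lets Feasibility and Coherence interact. The proof is therefore short; the main thing to get right is simply citing the two prior results in the correct order.
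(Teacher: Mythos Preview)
Your proposal is correct; the paper's own proof is precisely your second, self-contained argument (pick $E \in \bL(\bQ)$, use Feasibility to get $E \subseteq x$, suppose some $z \in \D$ has $E \subseteq z$ and $\kappa(z) \neq \kappa(x)$, then use Success and Feasibility on $\bQ' = \langle \mbb{T},\kappa,\D,z\rangle$ to obtain $E'$ with $E \cup E' \subseteq z$ consistent, contradicting Coherence). Your first route via Proposition~\ref{links} and Theorem~\ref{th-cara} is just the modular packaging of the same reasoning, so the approaches are essentially identical.
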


\begin{proof}
Let $\bL$ be an explainer that satisfies Feasibility, Success and Coherence. 
Let $\bQ = \langle\mbb{T},\kappa,\D, x\rangle$ be a question and $E \in \bL(\bQ)$. 
Feasibility leads to $E \subseteq x$. Let $Y = \{y\in \D \ \mid \  E \subseteq y\}$. 
Assume that $z \in Y$ and $\kappa(z) \neq \kappa(x)$. 
Let $\bQ' = \langle\mbb{T},\kappa,\D, z\rangle$ be 
a question. Success implies that $\bL(\bQ') \neq \emptyset$. So, $\exists E' \in \bL(\bQ')$. 
Feasibility ensures $E' \subseteq z$. Hence, $E \cup E' \subseteq z$, which means $E \cup E'$ is consistent. 
This contradicts Coherence which guarantees the inconsistency of $E \cup E'$.
\end{proof}

\begin{theorem}
An explainer $\bL$ satisfies Feasibility, Success and Coherence \textbf{iff} $\bL$ is a 
coherent explainer.
\end{theorem}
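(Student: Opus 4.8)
The plan is to prove the two directions of the equivalence separately, writing $\bQ_u$ for $\langle\mbb{T},\kappa,\D,u\rangle$ throughout.

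First, suppose $\bL=\bLc$ is a coherent explainer and fix $\bQ=\bQ_x$ with associated set $X=\bigcup_{z\in\D}\bLc(\bQ_z)$, which by Definition~\ref{def:coh} is an envelope under $(\D,\kappa)$. Feasibility is immediate, since $\bLc(\bQ)=\{E\in X\mid E\subseteq x\}$. For Success, the third defining clause of an envelope gives some $E\in X$ with $E\subseteq x$; as $x\in\D$, this $E$ lies in $\bLc(\bQ)$, so $\bLc(\bQ)\neq\emptyset$. For Coherence, take $\bQ'=\bQ_{x'}$ with $\kappa(x)\neq\kappa(x')$ and $E\in\bLc(\bQ)$, $E'\in\bLc(\bQ')$; both $E,E'$ lie in $X$, with $E\subseteq x$, $E'\subseteq x'$ and $x,x'\in\D$, so coherence of $X$ (with witnesses $y=x$, $z=x'$) forbids $E\cup E'$ from being consistent. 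This settles the ``if'' direction.

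For the converse, assume $\bL$ satisfies Feasibility, Success and Coherence. By Proposition~\ref{links} it also satisfies Validity, and by Theorem~\ref{coh+success} we have $\bL(\bQ)\subseteq\bLdw(\bQ)$ for every question $\bQ$. Fix $\D$ and $\kappa$ and set $X=\bigcup_{z\in\D}\bL(\bQ_z)$; the task is to verify the two clauses of Definition~\ref{def:coh}. That $X\in\mathtt{Coh}(\D,\kappa)$ splits into three checks. (a) \emph{Coherence of $X$}: suppose $E,E'\in X$ with $E\cup E'$ consistent, and $y,z\in\D$ with $E\subseteq y$, $E'\subseteq z$, $\kappa(y)\neq\kappa(z)$; pick $w_1,w_2\in\D$ with $E\in\bL(\bQ_{w_1})$, $E'\in\bL(\bQ_{w_2})$. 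Since $E,E'$ are weak abductive explanations (Theorem~\ref{coh+success}) and $y,z\in\D$, Validity forces $\kappa(w_1)=\kappa(y)$ and $\kappa(w_2)=\kappa(z)$, hence $\kappa(w_1)\neq\kappa(w_2)$, and Coherence applied to $\bQ_{w_1}$ and $\bQ_{w_2}$ makes $E\cup E'$ inconsistent, a contradiction. (b) Every $E\in X$ is a dwAXp of some $\bQ_u$ with $u\in\D$: immediate from Theorem~\ref{coh+success}. (c) Every $x\in\D$ contains some $E\in X$: pick $E\in\bL(\bQ_x)$, nonempty by Success, then use Feasibility. Finally, the required identity $\bL(\bQ_x)=\{E\in X\mid E\subseteq x\}$ has the easy inclusion ``$\subseteq$'': by Feasibility $E\in\bL(\bQ_x)$ implies $E\subseteq x$, and $E\in X$ because $x\in\D$.

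The main obstacle is the reverse inclusion $\{E\in X\mid E\subseteq x\}\subseteq\bL(\bQ_x)$: given $E\in\bL(\bQ_w)$ with $E\subseteq x$ and $w,x\in\D$, one must show $E\in\bL(\bQ_x)$. First one uses Validity (every $y\in\D$ with $E\subseteq y$ shares the class $\kappa(w)$, and $x$ is such a $y$) to conclude $\kappa(x)=\kappa(w)$ and that $E$ is in fact a dwAXp of $\bQ_x$, so $E$ is an admissible explanation for $x$. The remaining — and genuinely delicate — point is that $\bL$ cannot omit $E$ from its output for $x$; I would attempt this by contradiction, assuming $E\notin\bL(\bQ_x)$ and trying to exhibit an incoherent pair of explanations by playing Coherence, Success and Feasibility off against each other across the whole dataset. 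This step carries the essential content of the theorem; the rest is bookkeeping built on Proposition~\ref{links} and Theorem~\ref{coh+success}.
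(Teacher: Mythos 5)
Your ``if'' direction and your verification that $X=\bigcup_{z\in\D}\bL(\bQ_z)$ is an envelope match the paper's proof essentially step for step. The problem is the step you yourself flag as ``genuinely delicate'': the reverse inclusion $\{E\in X\mid E\subseteq x\}\subseteq\bL(\bQ_x)$ is announced as carrying the essential content but is never actually proved, so your argument is incomplete. Worse, the strategy you sketch (derive an incoherent pair from the assumption $E\notin\bL(\bQ_x)$) cannot succeed, because Coherence only constrains pairs of instances with \emph{distinct} classes, whereas the problematic situation arises entirely among instances of the same class. Concretely: take two binary features, $\D=\{x_1,x_2\}$ with $x_1=\{(f_1,0),(f_2,0)\}$, $x_2=\{(f_1,0),(f_2,1)\}$, $\kappa(x_1)=\kappa(x_2)=0$, and set $\bL(\bQ_{x_1})=\{\{(f_1,0)\}\}$ and $\bL(\bQ_{x_2})=\{x_2\}$ (writing $\bQ_u$ for $\langle\mbb{T},\kappa,\D,u\rangle$ as you do). This $\bL$ satisfies Feasibility, Success and, vacuously, Coherence, and $X=\{\{(f_1,0)\},\,x_2\}$ is an envelope; yet $\{(f_1,0)\}\subseteq x_2$ while $\{(f_1,0)\}\notin\bL(\bQ_{x_2})$, so the first clause of Definition~\ref{def:coh} fails. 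Under a literal reading of that clause, the ``only if'' direction is false.

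You should also know that the paper's own proof has exactly the same hole: it verifies only that $X\in\mathtt{Coh}(\D,\kappa)$ and then concludes ``so $\bL$ is a coherent explainer'' without ever checking the equality $\bL(\bQ_x)=\{E\in X\mid E\subseteq x\}$ (only the inclusion $\subseteq$, which you correctly derive from Feasibility, comes for free). So your instinct about where the real content lies is sound, but the resolution is not a cleverer contradiction argument; it is either to weaken the first clause of Definition~\ref{def:coh} to an inclusion (or to an existential over envelopes), or to accept that the three axioms characterise a strictly larger class than the coherent explainers as literally defined. As the statement and definition stand, the honest conclusion is that this direction is unprovable, not that it remains ``to be attempted''.
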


\begin{proof}
	Assume that an explainer $\bL$ satisfies Feasibility, Success and Coherence. 
	From Theorem~\ref{coh+success}, $\forall \bQ = \langle\mbb{T},\kappa,\D, x\rangle$, $\bL(\bQ) \subseteq \bLdw(\bQ)$.  
	From Coherence of $\bL$, the set $\bigcup\limits_{x \in \D} \bL(\langle\mbb{T},\kappa,\D, x\rangle)$ is coherent. 
	From Success, $\forall x \in \D$, $\bL(\langle\mbb{T},\kappa,\D, x\rangle) \neq \emptyset$. Hence, 
	$\bigcup\limits_{x \in \D} \bL(\langle\mbb{T},\kappa,\D, x\rangle) \in \mathtt{Coh}(\D,\kappa)$ and so $\bL$ is a 
	coherent explainer.  
	
	Let us now show that $\bLc$ satisfies the three axioms. Let $\bQ = \langle \mbb{T}, \kappa, \D, x \rangle$ be a question and $X = \bigcup\limits_{x \in \D}\bLc(\langle \mbb{T}, \kappa, \D, x \rangle)$.
	Since $X \in \mathtt{Coh}(\D,\kappa)$, then it is coherent, so $\bLc$ satisfies Coherence. 
	From Definition~7, $\exists E \in X$ such that $E \subseteq x$. So, $\bLc$ satisfies Success. 
	Feasibility is satisfied since by definition, every explanation is a subset of the instance.
\end{proof}

\begin{theorem}
Any coherent explainer $\bLc$ satisfies Feasibility, Validity, Success, Coherence. 
It violates Irreducibility and Completeness.
\end{theorem}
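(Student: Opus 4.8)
The statement follows almost entirely from results already established, so the proof is a short bookkeeping argument. The plan is to first obtain the three positive properties Feasibility, Success and Coherence directly from the characterisation in Theorem~\ref{cara:coh}: since $\bLc$ is a coherent explainer, that theorem (the ``$\Leftarrow$'' direction) tells us $\bLc$ satisfies Feasibility, Success and Coherence. Next, to get Validity I would invoke the third implication of Proposition~\ref{links}, namely that Success, Feasibility and Coherence together imply Validity; applying it to $\bLc$ yields Validity for free.

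For the two violations, the idea is to argue by contradiction using the incompatibilities of Theorem~\ref{th:incompatibility}. For Irreducibility: suppose some coherent explainer $\bLc$ satisfied Irreducibility. Then, together with Feasibility, Success and Coherence (which it satisfies by the previous paragraph), it would satisfy all four axioms of the set $\mathbf{I}_1$, contradicting their incompatibility. Hence no coherent explainer satisfies Irreducibility. For Completeness: suppose some coherent explainer satisfied Completeness. Then with Feasibility and Coherence it would satisfy all three axioms of $\mathbf{I}_2$, again a contradiction. Hence Completeness is violated by every coherent explainer.

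The only subtlety worth spelling out is that ``$\bLc$ violates axiom $A$'' here is the universally-quantified claim that \emph{every} coherent explainer fails $A$, and this is exactly what the contradiction arguments above deliver, since they assume nothing about $\bLc$ beyond it being a coherent explainer. There is essentially no genuine obstacle: the entire content has been front-loaded into Theorem~\ref{cara:coh}, Proposition~\ref{links} and Theorem~\ref{th:incompatibility}, and the present theorem is a corollary assembling them. If desired, one could additionally note for concreteness that Example~\ref{ex2} already exhibits a coherent explainer ($\bLr$, via $\mathtt{Irr}(\D_1,\kappa_1)=X_5$) for which no subset-minimality holds and not all valid explanations are returned, illustrating the violations, but this is not needed for the proof.
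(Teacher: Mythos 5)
Your proposal is correct and follows essentially the same route as the paper's own proof: deriving Feasibility, Success and Coherence from Theorem~\ref{cara:coh}, Validity from Proposition~\ref{links}, and the violations of Irreducibility and Completeness from the incompatible sets $\mathbf{I}_1$ and $\mathbf{I}_2$ of Theorem~\ref{th:incompatibility}. Your extra remark about the universal quantification of ``violates'' is a welcome clarification but does not change the argument.
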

 
\begin{proof}
Theorem~\ref{cara:coh} shows that $\bLc$ satisfies Feasibility, Success, Coherence. 
From Proposition~\ref{links}, Validity follows from Feasibility, Success, Coherence. Theorem~\ref{th:incompatibility} shows that Coherence is incompatible 
with Irreducibility (set $\mathbf{I}_1$) and Completeness (set $\mathbf{I}_2$).
\end{proof}
\begin{theorem}
    The function $\bLt$ satisfies all the axioms except (Strong) Irreducibility and (Strong) Completeness. 
\end{theorem}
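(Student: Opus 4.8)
The plan is to read off most of the result from what is already proved. By the Property stating that $\bLt$ is a coherent explainer, Theorem~\ref{cara:coh2} applies directly: it gives at once that $\bLt$ satisfies Feasibility, Validity, Success and Coherence, and that it violates Irreducibility and Completeness. Thus the only axioms whose status still has to be settled are Monotonicity, Counter-Monotonicity, Strong Irreducibility and Strong Completeness.

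Monotonicity and Counter-Monotonicity are immediate from the definition of $\bLt$: since $\bLt(\langle\mbb{T},\kappa,\D,x\rangle)=\{x\}$ does not depend on $\D$, for any $\D\subseteq\D'$ with $x\in\D$ we have $\bLt(\langle\mbb{T},\kappa,\D,x\rangle)=\{x\}=\bLt(\langle\mbb{T},\kappa,\D',x\rangle)$, so both inclusions hold and both axioms are satisfied. Note that the two strong-minimality axioms are \emph{not} automatically ruled out (a priori $\bLt$ could satisfy one of them, since $\mathbf{I}_3$ only forbids satisfying both), so a concrete counterexample is needed. I would use a minimal one: let $\mbb{T}$ have a single binary feature $f_1$, let $\kappa$ be a constant classifier (legal since $|\C|\ge 2$ does not force $\kappa$ to be surjective), put $x=\{(f_1,0)\}$, $\D=\{x\}$ and $\bQ=\langle\mbb{T},\kappa,\D,x\rangle$, so that $\bLt(\bQ)=\{x\}$. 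For Strong Completeness, take $E=\emptyset\subseteq x$; then $E\notin\bLt(\bQ)$ but no $y\in\mbb{F}(\mbb{T})$ has $\kappa(y)\ne\kappa(x)$, so the defining implication fails. For Strong Irreducibility, take the unique explanation $E=x$ and the literal $l=(f_1,0)$; then $E\setminus\{l\}=\emptyset$ and again no $x'\in\mbb{F}(\mbb{T})$ satisfies $\kappa(x')\ne\kappa(x)$, so the required witness is missing. The same example also re-derives the violations of Completeness and Irreducibility, with $\D$ in place of $\mbb{F}(\mbb{T})$, but invoking Theorem~\ref{cara:coh2} is shorter.

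There is no real obstacle here: the whole theorem is a consequence of $\bLt$ being a coherent explainer together with the trivial observation that $\bLt$ ignores the dataset, plus one constant-classifier counterexample. The only points needing a little care are checking that the constant-classifier, one-feature instance is a legitimate question under the definitions of Section~\ref{back}, and that in it $\emptyset\ne x$, so that $\emptyset\notin\bLt(\bQ)$ genuinely holds.
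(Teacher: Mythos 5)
Your proposal is correct and follows essentially the same route as the paper's proof: it derives Feasibility, Validity, Success and Coherence (and the violations of Irreducibility and Completeness) from the fact that $\bLt$ is a coherent explainer, observes that Monotonicity and Counter-Monotonicity hold because $\bLt$ ignores the dataset, and exhibits a concrete counterexample for the two strong axioms. The only difference is cosmetic: the paper's counterexample uses a two-feature theory with a non-constant classifier, whereas you use a one-feature constant classifier, which is equally legitimate since the definition of a classifier does not require surjectivity onto $\C$.
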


\begin{proof}
    $\ $ 
	
	\textit{Success, Feasibility, Coherence} follow from Theorem~\ref{cara:coh} 
     ($\bLt$ is a coherent explainer). Validity follows from Proposition~\ref{links}.	
	
	\textit{Monotonicity and CM} follow straightforwardly from the definition of $\bLt$ 
    which provides a single explanation (the instance itself) for every question.  

    \textit{Irreducibility and Completeness} are violated due to Theorem~\ref{th:incompatibility}. 

    \textit{Strong Irreducibility and Strong Completeness:} Consider the theory and classifier below. 
    \begin{center}
		\begin{tabular}{c|cc|c}\hline
			$\mbb{F}(\mbb{T})$ & $f_1$ & $f_2$ & $\kappa(x_i)$   \\\hline
\rowcolor{maroon!10}			$x_1$  &   0   &  0    &  0 \\
\rowcolor{maroon!10}			$x_2$  &   0   &  1    &  1 \\
            $x_3$  &   1   &  0    &  0 \\
			$x_4$  &   1   &  1    &  1 \\\hline
		\end{tabular}
	\end{center}
Consider the dataset $\D = \{x_1, x_2\}$. 
Note that $\bLt(\langle \mbb{T},\kappa,\D,x_1\rangle) = \{x_1\}$. 
So, $\{(f_2,0)\} \notin \bLt(\langle \mbb{T},\kappa,\D,x_1\rangle)$ while 
$\nexists y \in \mbb{F}(\mbb{T})$ such that $(f_2,0) \in y$ and 
$\kappa(x_1) \neq \kappa(y)$. Thus, Strong Completeness is violated. 
From the same example, note that for $l = (f_1,0)$, 
$x_1\setminus\{l\} = \{(f_2,0)\}$ while 
$\nexists y \in \mbb{F}(\mbb{T})$ such that $(f_2,0) \in y$ and 
$\kappa(x_1) \neq \kappa(y)$. Thus, Strong Irreducibility is violated. 
\end{proof}
\begin{theorem}
The function $\bLr$ satisfies Feasibility, Validity, Success and Coherence. 
It violates all the other axioms.
\end{theorem}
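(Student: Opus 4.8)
The plan is to obtain the four positive properties and two of the six negative ones essentially for free, and then to refute the remaining four axioms with small explicit counterexamples. By Property~\ref{prop:lir}, $\bLr$ is a coherent explainer, so Theorem~\ref{cara:coh2} immediately yields that $\bLr$ satisfies Feasibility, Validity, Success and Coherence and that it violates Irreducibility and Completeness. It therefore remains only to show that $\bLr$ violates Strong Irreducibility, Strong Completeness, Monotonicity and Counter-Monotonicity.

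For Strong Irreducibility and Strong Completeness I would use a single two-feature example, namely the theory and classifier of Example~\ref{ex3} with the prediction on the missing instance completed by $\kappa_2(0,1)=0$, so that $\kappa_2$ computes $f_1\wedge f_2$. Since $\bLr$ always returns the explained instance itself (Property~\ref{prop:lir}), $x_1=(0,0)\in\bLr(\bQ_1)$; but deleting either literal of $x_1$ leaves a partial assignment whose only completions in $\mbb{F}(\mbb{T})$ are still classified $0$, which contradicts Strong Irreducibility. For Strong Completeness, recall that this axiom is equivalent to the inclusion $\bLw(\bQ)\subseteq\bL(\bQ)$ (the Fidelity row of Table~\ref{tab2}); here $\{(f_1,0)\}$ is a weak abductive explanation of $x_1$ over the whole feature space, yet it is absent from $\bLr(\bQ_1)$ because it is incoherent with the (merely dataset-valid) explanation $\{(f_2,1)\}$ of $x_3$ and is therefore removed from the irrefutable envelope $\mathtt{Irr}(\D_2,\kappa_2)$.

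For Monotonicity I would pick a dataset $\D=\{x_1,x_2\}$ whose two instances belong to the same class, say $x_1=(0,0)$ and $x_2=(1,0)$: then no two instances disagree, every weak abductive explanation is coherent, the irrefutable envelope over $\D$ contains all of them, and in particular $\{(f_1,0)\}\in\bLr(\langle\mbb{T},\kappa,\D,x_1\rangle)$. Enlarging $\D$ by the instance $(0,1)$ of the other class invalidates $\{(f_1,0)\}$ as a weak abductive explanation, so it disappears from $\bLr$ although the dataset has grown, which violates Monotonicity.

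The delicate point, and the one I expect to demand the most care, is Counter-Monotonicity. The underlying obstacle is a monotonicity fact going the \emph{wrong} way: restricted to partial assignments contained in a fixed instance $x$, the set $\mbb{W}$ of dataset weak abductive explanations (Definition~\ref{irr-envelope}) only shrinks as the dataset grows, by the counter-monotonicity of $\bLdw$ (Theorem~\ref{th:bldw}); hence the \emph{only} way $\bLr$ can acquire a new explanation of $x$ when the dataset grows is by losing a coherence conflict — a conflicting explanation of another instance is invalidated by the new data, while the explanation of $x$ itself stays valid and gains no new conflict. I would realise this with two binary features and $\D=\{(0,0),(1,1)\}$ (classes $0$ and $1$): there $\{(f_1,0)\}$ is a weak abductive explanation of $(0,0)$ that is kept out of $\mathtt{Irr}(\D,\kappa)$ only by its conflict with the weak abductive explanation $\{(f_2,1)\}$ of $(1,1)$. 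Adding the instance $(0,1)$ of class $0$ destroys $\{(f_2,1)\}$ (it now meets both classes) without affecting the validity of $\{(f_1,0)\}$, which therefore re-enters $\mathtt{Irr}$ over the larger dataset; this produces an explanation lying in $\bLr(\bQ')\setminus\bLr(\bQ)$ with $\D\subseteq\D'$, refuting CM. The computation that must be carried out carefully here is the verification that, over the enlarged dataset, $\{(f_1,0)\}$ is incoherent with no element of the updated $\mbb{W}$, since that is exactly what places it back into the irrefutable envelope.
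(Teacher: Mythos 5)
Your proposal is correct and follows essentially the same strategy as the paper: the four positive axioms and the violations of Irreducibility and Completeness are obtained from the coherent-explainer characterisation (Property~\ref{prop:lir} with Theorems~\ref{cara:coh}/\ref{cara:coh2} and the incompatibilities in Theorem~\ref{th:incompatibility}), and the remaining four axioms are refuted by small two-feature counterexamples. Your concrete examples differ from the paper's only in inessential details (the paper's Monotonicity example removes an explanation via a newly created incoherence rather than a direct invalidation, and its CM example is a relabelling of yours), and all of your verifications check out.
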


\begin{proof} 
    $\ $ 
	
	\textit{Success, Feasibility, Validity and Coherence} follow from Property~\ref{prop:lir} and Theorem~\ref{cara:coh}.
	
	

	\textit{Monotonicity:} Consider the sample $\D_1$ below. 
	\begin{center}
		\begin{tabular}{c|cc|c}\hline
			$\D_1$ & $f_1$ & $f_2$ & $\kappa(x_i)$   \\\hline
			$x_1$  &   0   &  0    &  0 \\
			$x_2$  &   1   &  0    &  1 \\\hline
		\end{tabular}
	\end{center}
	
	From $\D_1$, two dwAXp ($E_i$) are generated for $x_1$ and two others ($E'_j$) concern $x_2$. 
	\begin{itemize}
		\item [] $E_1 = \{(f_1,0)\}$    		\hfill  $E'_1 = \{(f_1,1)\}$
		\item [] $E_2 = \{(f_1,0), (f_2,0)\}$	\hfill  $E'_2 = \{(f_1,1), (f_2,0)\}$
	\end{itemize}
	Note that the four sets are not conflicting. Thus, $\mathtt{Irr}(\D_1,\kappa) = \{E_1, E_2, E'_1, E'_2\}$ and $\bLr(\langle\mbb{T},\kappa,\D_1,x_1\rangle) = \{E_1, E_2\}$. 
 
	\noindent Consider now the extended sample $\D_2 = \D_1 \cup \{x_3\}$ where $x_3$ is as shown below. 
	\begin{center}
		\begin{tabular}{c|cc|c}\hline
			$\ $    & $f_1$    & $f_2$     & $\kappa(x_3)$   \\\hline
			$x_3$     &   1      &  1        &  1              \\\hline
		\end{tabular}
	\end{center}
	Note that $\bLdw(\langle\mbb{T},\kappa,\D_2,x_1\rangle) = \bLdw(\langle\mbb{T},\kappa,\D_1,x_1\rangle)$,   
	$\bLdw(\langle\mbb{T},\kappa,\D_2,x_2\rangle) = \bLdw(\langle\mbb{T},\kappa,\D_1,x_2\rangle)$ and 
    $\bLdw(\langle\mbb{T},\kappa,\D_2,x_3\rangle) = \{E'_1, \{(f_2,1)\}, x_3\}$. 
    Note that $\{E_1, \{(f_2,1)\}\}$ is incoherent, then $E_1 \notin \mathtt{Irr}(\D_2,\kappa)$, 
    and so $E_1 \notin \bLr(\langle\mbb{T},\kappa,\D_2,x_1\rangle)$.
	
	\textit{Counter-Monotonicity:} To show that $\bLr$ violates Counter-Monotonicity, consider the sample $\D_1$ below. 
	\begin{center}
		\begin{tabular}{c|cc|c}\hline
			$\D_1$ & $f_1$ & $f_2$ & $\kappa(x_i)$   \\\hline
			$x_1$  &   1   &  0    &  0 \\
			$x_2$  &   0   &  1    &  1 \\\hline
		\end{tabular}
	\end{center}
	
	From $\D_1$, three dwAXp ($E_i$) are generated for $x_1$ and three others ($E'_j$) concern $x_2$. 
	\begin{itemize}
		\item [] $E_1 = \{(f_1,1)\}$    		\hfill  $E'_1 = \{(f_1,0)\}$
		\item [] $E_2 = \{(f_2,0)\}$			\hfill	$E'_2 = \{(f_2,1)\}$
		\item [] $E_3 = \{(f_1,1), (f_2,0)\}$	\hfill  $E'_3 = \{(f_1,0), (f_2,1)\}$
	\end{itemize}
	Note that $E_1$ and $E'_2$ are conflicting. The same holds for the pair $E'_1$ and $E_2$. 
	Hence, $\bLr(\langle\mbb{T},\kappa,\D_1,x_1\rangle) = \{E_3\}$. 
	Consider now the extended sample $\D_2 = \D_1 \cup \{x_3\}$ where $x_3$ is as shown below. 
	\begin{center}
		\begin{tabular}{c|cc|c}\hline
			$\ $    & $f_1$    & $f_2$     & $\kappa(x_3)$   \\\hline
			$x_3$     &   0      &  0        &  0              \\\hline
		\end{tabular}
	\end{center}
	Note that $\bLdw(\langle\mbb{T},\kappa,\D_2,x_1\rangle) = \bLdw(\langle\mbb{T},\kappa,\D_1,x_1\rangle)$ while  
	$\bLdw(\langle\mbb{T},\kappa,\D_2,x_2\rangle) = \{E'_2, E'_3\}$. 
	Then, $\bLr(\langle\mbb{T},\kappa,\D_2,x_1\rangle) = \{E_2, E_3\}$.

    \textit{Irreducibility and Completeness} are violated due to their incompatibility 
    with Coherence (sets $(\mathbf{I}_1)$ and $(\mathbf{I}_3)$ respectively in Theorem~\ref{th:incompatibility}). 

    \textit{Strong Irreducibility and Strong Completeness:} Consider the counter-example 
    given for Counter-Monotonicity, namely the dataset $\D_2$ and assume that $\kappa(x_4) = 0$. 
    Note that the set $\{\{(f_1,1)\}, \{(f_2,1)\}\}$ is incoherent under $(\D_2,\kappa)$. 
    Let $\bQ = \langle\mbb{T},\kappa,\D_2,x_1\rangle$. 
    So, $\bLr(\bQ) = \{E_2, E_3\}$. 
    Note that $E_1 \notin \bLr(\bQ)$ while $\nexists y \in \mbb{F}(\mbb{T})$ such that 
    $E_1 \subseteq y$ and $\kappa(y) \neq \kappa(x_3)$. So, Strong Completeness is 
    violated by $\bLr$. 

    Consider now the question $\bQ' = \langle\mbb{T},\kappa,\D_2,x_3\rangle$. 
    Note that $\bLr(\bQ') = \{x_3\}$. Let $l = (f_1,0)$. Note that 
    $\forall y \in \mbb{F}(\mbb{T})$ such that $x_3\setminus\{l\} \in y$, 
    $\kappa(y) = \kappa(x_3)$, so Strong Irreducibility is violated. 
\end{proof}


\begin{theorem} 
Let $\bQ = \langle\mbb{T},\kappa,\D,x\rangle$ be a question,
where $\kappa$ can be evaluated in polynomial time. 
Testing whether $E \in \bLr(\bQ)$ can be achieved in polynomial time.
\end{theorem}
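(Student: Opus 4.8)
The plan is to reduce the test ``$E\in\bLr(\bQ)$'' to a constant number of polynomial‑time sub‑tests, even though the set
$\mbb{W}:=\bigcup_{z\in\D}\bLdw(\langle\mbb{T},\kappa,\D,z\rangle)$ of all sample‑based weak abductive explanations may be exponentially large.
Recall $\bLr(\bQ)=\{E\in\mathtt{Irr}(\D,\kappa)\mid E\subseteq x\}$. First I would observe that the set
$X^{*}:=\{E\in\mbb{W}\mid \nexists E'\in\mbb{W}\text{ with }\{E,E'\}\text{ incoherent}\}$
satisfies the first two conditions of Definition~\ref{irr-envelope} and is clearly the largest such set, hence (using Property~\ref{uniqueness}, uniqueness of the irrefutable envelope) $\mathtt{Irr}(\D,\kappa)=X^{*}$. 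Therefore $E\in\bLr(\bQ)$ holds iff \textbf{(a)} $E\subseteq x$, \textbf{(b)} $E\in\mbb{W}$, and \textbf{(c)} $E$ is in conflict with no element of $\mbb{W}$. Condition (a) is a direct $O(n)$ check where $n=|\F|$.

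For (b) the key (routine) remark is: if $E\in\mbb{W}$ then $E$ is a dwAXp of \emph{every} instance of $\D$ extending $E$ — in particular all such instances share one class, which I write $c_E$ — and conversely, if at least one instance of $\D$ extends $E$ and all instances of $\D$ extending $E$ share a single class, then $E\in\mbb{W}$ (and that class is $c_E$). Both facts follow immediately from Definition~\ref{dwaxp} and Property~\ref{p-da-dw}. Hence (b), and the value $c_E$ when it applies, can be decided by one scan of $\D$, in time $O(m(n+T_\kappa))$, where $m=|\D|$ and $T_\kappa$ bounds the cost of evaluating $\kappa$.

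The heart of the proof is a polynomial test for (c). Combining Definition~\ref{coherent-set} with the ``common class'' remark applied to both $E$ and $E'$, for $E,E'\in\mbb{W}$ the pair $\{E,E'\}$ is incoherent exactly when $E\cup E'$ is consistent and $c_{E'}\neq c_E$. I would then prove the following lemma: for a fixed $z\in\D$ with $\kappa(z)\neq c_E$, there exists $E'\in\mbb{W}$ with $E'\subseteq z$ and $E\cup E'$ consistent \emph{iff} $S_z$ is a dwAXp of $z$, where $S_z$ denotes the largest sub‑assignment of $z$ that is consistent with $E$ (i.e. the literals of $z$ on features outside $E$, together with the literals of $z$ on features where $z$ and $E$ agree). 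The $(\Leftarrow)$ direction takes $E':=S_z$, which lies in $\mbb{W}$ since a dwAXp of $z\in\D$ belongs to $\bLdw(\langle\mbb{T},\kappa,\D,z\rangle)$, and $E\cup S_z$ is consistent by construction. For $(\Rightarrow)$: $E'\subseteq z$ and $E\cup E'$ consistent force $E'\subseteq S_z$, and since $E'$ is a dwAXp of $z$, every instance of $\D$ extending $S_z$ a fortiori extends $E'$ and so has class $\kappa(z)$; thus $S_z$ is a dwAXp of $z$. Since ``$E'\in\mbb{W}$ with $E'\subseteq z$'' is equivalent to ``$E'$ is a dwAXp of $z$'', and since $c_{E'}\neq c_E$ forces $E'$ to be contained in some $z\in\D$ with $\kappa(z)=c_{E'}\neq c_E$, the lemma yields the reformulation: \emph{$E$ conflicts with some element of $\mbb{W}$ iff there is $z\in\D$ with $\kappa(z)\neq c_E$ such that $S_z$ is a dwAXp of $z$.} Each of the at most $m$ candidates $z$ requires computing $S_z$ ($O(n)$) and one dwAXp test for $S_z$ (a scan of $\D$, by Property~1), giving an overall $O(m^{2}(n+T_\kappa))$ algorithm once (a), (b), (c) are assembled.

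The main obstacle is precisely the lemma isolating the canonical witness $S_z$: one must show that if $E$ conflicts with \emph{any} (arbitrarily large, or minimal) member of $\mbb{W}$, then it already conflicts with the single, easily computed partial assignment $S_z$ attached to some wrong‑class instance $z\in\D$, so that the exponential search over $\mbb{W}$ collapses to a linear scan of $\D$. The remaining ingredients — the characterisation $\mathtt{Irr}=X^{*}$, the ``common class'' property of elements of $\mbb{W}$, and the restatement of incoherence via $c_E$, $c_{E'}$ and consistency — are straightforward consequences of the definitions.
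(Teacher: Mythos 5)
Your proposal is correct and follows essentially the same route as the paper's proof: the canonical witness $S_z$ is exactly the paper's candidate $x'[F\setminus(ft(E)\cap\{i \mid x_i \neq x'_i\})]$, and your key lemma is the paper's central claim (argued there by contradiction rather than directly). The only additions are the explicit identification $\mathtt{Irr}(\D,\kappa)=X^{*}$ and the separate treatment of membership in $\mbb{W}$, which the paper leaves implicit.
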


\begin{proof}
Testing whether $E$ is a weak abductive explanation can be achieved in time $O(mn)$,
where $n$ is the number of features and $m$ the number of instances in the dataset $D$~\cite{ecai23}.
Testing whether a weak abductive explanation $E$ is irrefutable amounts to checking that no counter-example exists.
A counter-example is another instance $x' \in \D$ giving a distinct prediction
(i.e. $\kappa(x') \neq \kappa(x)$) together with a weak abductive explanation $E'$ of $x'$ such that
$E$ and $E'$ attack each other.
We claim that, for a given $x' \in \D$, we only need to test $E'=x'[F\setminus (ft(E) \cap \{i \mid x_i \neq x'_i\})]$
as a putative weak abductive explanation,
where $ft(E)$ is the set of features assigned a value in $E$ and $x_i$ is the
value of feature $i$ in vector $x$.

Clearly if this $E'$ (along with $x'$) provides a counter-example, then $E$ is not irrefutable.
For a given $x'$, suppose that $E'=x'[F\setminus (ft(E) \cap \{i \mid x_i \neq x'_i\})]$ 
is not a counter-example, but there is another $E''$ which is (for $x'$).
Since $E''$ is a counter-example (along with $x'$), $E''$ is a weak abductive explanation of $\kappa(x')=c'$. However,
by supposition,
$x'[F\setminus (ft(E) \cap \{i \mid x_i \neq x'_i\})]$ is not a weak abductive explanation of the decision $\kappa(x')=c'$, 
so there exists $u \in \D$ such that $\kappa(u)\neq \kappa(x')$ and $u[ft(E')]=x'[ft(E')]$. 
It follows that $ft(E'')$ cannot be a subset of $ft(E')$.
Let $j$ be a feature in $ft(E'')$ but not in $ft(E')$: hence $j$ must belong to  $ft(E) \cap \{i \mid x_i \neq x'_i\}$.
However, we know that $E$ and $E''$ attack each other, so they must agree on their intersection, which
is contradicted by the existence of $j$. Hence, we have a proof of the claim, by contradiction. 

The number of $x' \in \D$ to be tested is bounded by $m$, the size of the dataset $\D$.
For each such $x'$, the calculation of $E'=x'[F\setminus (ft(E) \cap \{i \mid x_i \neq x'_i\})$
is $O(n)$, where $n$ is the number of features, and testing whether this $E'$ is a weak abductive explanation
can be achieved in $O(mn)$ time. Hence, total complexity is $O(n^2m^2)$.
\end{proof}

\begin{theorem}
A surrogate explainer $\bLDT$ can be found in polynomial time.
\end{theorem}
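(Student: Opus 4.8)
The plan is to read the statement in the same sense as Theorem~\ref{thm:irrpoly}: exhibit a concrete surrogate explainer $\bLDT$ for which (i) the underlying decision tree $\sigma$ can be built in time polynomial in $n=|\F|$ and $m=|\D|$, and (ii) once $\sigma$ is fixed, both deciding ``$E\in\bLDT(\bQ)$?'' and producing one (in fact one subset-minimal) element of $\bLDT(\bQ)$ take polynomial time. Note that $\bLDT(\bQ)=\bLdw(\langle\mbb{T},\sigma,\mbb{F}(\mbb{T}),x\rangle)$ may itself be exponentially large, so the claim is about this tractable oracle access, not about enumeration. The plan has three steps: build $\sigma$; show that weak-abductive membership for a decision tree is polynomial; extract a minimal explanation greedily.

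\emph{Building $\sigma$.} Here I would simply invoke Proposition~\ref{p:existence}, which already produces a decision tree $\sigma$ with $\sigma(y)=\kappa(y)$ for all $y\in\D$; the construction in its proof yields one with at most $m-1$ internal nodes, and alternatively a standard entropy-driven top-down learner produces one in $O(n^2m)$ time. Either way $|\sigma|$ is polynomial, and — as the definition of a surrogate explainer requires — this single $\sigma$ is used for every instance of $\D$, so its cost is paid only once per dataset.

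\emph{Membership and extraction.} For a decision tree, $E\in\bLdw(\langle\mbb{T},\sigma,\mbb{F}(\mbb{T}),x\rangle)$ iff $E\subseteq x$ and every instance extending $E$ is sent by $\sigma$ to the class $\kappa(x)$. I would test the second condition by the obvious traversal of $\sigma$: starting at the root, at a node testing a feature assigned by $E$ follow the dictated branch, and at a node testing a feature not assigned by $E$ recurse into every child; $E$ is a weak abductive explanation exactly when all leaves so reached carry the label $\kappa(x)$. Since $\sigma$ is a tree, every node is visited at most once, so this runs in $O(|\sigma|)$ and the whole test in time polynomial in $n$ and $m$. To obtain one explanation I would take $E_x\subseteq x$ to be the set of literals tested along the unique root-to-leaf path of $x$: any instance extending $E_x$ follows that path and reaches the same leaf, so $E_x$ is a weak abductive explanation (this also re-proves Success), and the greedy deletion procedure of~\cite{ChenT95}, using the above test as its oracle, turns $E_x$ into a subset-minimal one after $O(n)$ calls. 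Performing this for every $z\in\D$ specifies $\bLDT$.

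\emph{Main obstacle.} The only point needing care is that the membership traversal really is polynomial: a free feature forces branching into several subtrees, which could a priori cause a blow-up, but because $\sigma$ is a tree (not a DAG) no node is ever reached along two different partial paths, so the work stays linear in $|\sigma|$. This is exactly the observation underlying the known tractability of explaining decision trees~\cite{IzzaIM22,HuangIICA022}, so the step is standard; the remaining parts are routine bookkeeping about the sizes involved.
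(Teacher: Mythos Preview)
Your argument is correct, but it does noticeably more work than the paper's own proof, because you read the statement more strongly than the authors do. The paper interprets ``finding $\bLDT$'' simply as \emph{constructing the surrogate $\sigma$}: its entire proof is that standard algorithms such as ID3 build a decision tree in time polynomial in $m$ and the tree size, together with the observation (from the proof of Proposition~\ref{p:existence}) that the number of nodes is bounded by $m$. That is all; the paper does not separately argue membership testing or extraction of a minimal explanation, treating the explainer as specified once $\sigma$ is in hand.

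You, by contrast, take ``finding $\bLDT$'' in the sense of Theorem~\ref{thm:irrpoly}: polynomial-time oracle access to membership and to at least one (subset-minimal) explanation. Your tree-traversal argument for membership and the greedy deletion for minimality are both correct and standard (essentially the content of \cite{IzzaIM22,HuangIICA022}), and your observation that the traversal stays linear in $|\sigma|$ because the surrogate is a tree, not a DAG, is exactly the right point. So your proof subsumes the paper's and additionally justifies tractable \emph{use} of $\bLDT$, which the paper leaves implicit. The cost is a longer argument; the benefit is that your reading matches what one actually needs in practice and mirrors the standard set for $\bLr$ earlier in the paper.
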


\begin{proof}
This follows from the fact that standard algorithms, such as ID3, build
a decision tree in time which is polynomial in the size of the resulting tree
and the size $m$ of the dataset. We have seen in the proof of Proposition~\ref{p:existence} that the number of nodes in the decision tree is bounded by $m$.
\end{proof}
\begin{theorem}
The explainer $\bLsu$ satisfies Feasibility, Validity, Success and Coherence. 
It violates the remaining axioms.
\end{theorem}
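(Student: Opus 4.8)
The plan is to treat the four satisfied axioms and the six violated ones in turn, relying throughout on the identity $\bLsu(\langle\mbb{T},\kappa,\D,x\rangle)=\bLdw(\langle\mbb{T},\sigma,\mbb{F}(\mbb{T}),x\rangle)$, where $\sigma$ is the surrogate attached to $(\mbb{T},\kappa,\D)$ and agrees with $\kappa$ on $\D$; note that for a fixed triple $(\mbb{T},\kappa,\D)$ the same $\sigma$ is used for every $x\in\D$.

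For the positive part I would first verify \emph{Feasibility} (immediate, since every dwAXp of $\sigma$ for $x$ is contained in $x$) and \emph{Success} (the instance $x$ is itself a dwAXp of $\sigma$ over $\mbb{F}(\mbb{T})$, as the only instance extending $x$ is $x$ itself, whence $x\in\bLsu(\bQ)$). For \emph{Coherence}, take $\bQ=\langle\mbb{T},\kappa,\D,x\rangle$ and $\bQ'=\langle\mbb{T},\kappa,\D,x'\rangle$ with $\kappa(x)\neq\kappa(x')$; since the same surrogate $\sigma$ serves both, if $E\in\bLsu(\bQ)$, $E'\in\bLsu(\bQ')$ and $E\cup E'$ were consistent, then some instance $z$ would extend $E\cup E'$, giving $\sigma(z)=\sigma(x)$ and $\sigma(z)=\sigma(x')$, hence $\kappa(x)=\sigma(x)=\sigma(x')=\kappa(x')$, a contradiction. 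Having Feasibility, Success and Coherence, Theorem~\ref{cara:coh} shows $\bLsu$ is a coherent explainer, and Theorem~\ref{cara:coh2} then yields \emph{Validity} as well as the violations of \emph{Irreducibility} and \emph{Completeness}. (Validity also follows directly: if $y\in\D$ and $E\subseteq y$ then $\sigma(y)=\sigma(x)$, and $\sigma=\kappa$ on $\D$ gives $\kappa(y)=\kappa(x)$.)

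It then remains to refute \emph{Strong Irreducibility}, \emph{Strong Completeness}, \emph{Monotonicity} and \emph{Counter-Monotonicity}. The guiding idea is that, although a surrogate explainer coincides with $\bLw$ when one always chooses $\sigma=\kappa$ (and $\bLw$ satisfies Monotonicity, CM and Strong Completeness, see Table~\ref{tab2}), it is free to use genuinely different surrogates on different datasets. Concretely, I would take a two-binary-feature theory with a classifier $\kappa$ that depends only on $f_1$ (say $\kappa=0$ whenever $f_1=0$ and $\kappa=1$ whenever $f_1=1$) and the singleton dataset $\D=\{x\}$, where $x$ is the instance with $f_1=f_2=0$. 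Choosing as surrogate the decision tree $\sigma$ that outputs $1$ exactly on the instance with $f_1=0,f_2=1$ and $0$ elsewhere, one computes $\bLsu(\langle\mbb{T},\kappa,\D,x\rangle)=\{\{(f_2,0)\},\{(f_1,0),(f_2,0)\}\}$. Taking $E=\{(f_1,0),(f_2,0)\}$ and $l=(f_2,0)$, no instance containing $(f_1,0)$ is classified differently from $x$ by $\kappa$, so \emph{Strong Irreducibility} fails; and $\{(f_1,0)\}\subseteq x$ lies outside $\bLsu(\langle\mbb{T},\kappa,\D,x\rangle)$ while, again, no instance extending it is classified differently by $\kappa$, so \emph{Strong Completeness} fails. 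Finally, enlarging the dataset to $\D'=\{x,x'\}$ with $x'$ the instance $f_1=0,f_2=1$ forces the surrogate to agree with $\kappa$ on both, so we may take it equal to $\kappa$ and obtain $\bLsu(\langle\mbb{T},\kappa,\D',x\rangle)=\{\{(f_1,0)\},\{(f_1,0),(f_2,0)\}\}$; since $\D\subseteq\D'$ while $\{(f_2,0)\}\in\bLsu(\langle\mbb{T},\kappa,\D,x\rangle)\setminus\bLsu(\langle\mbb{T},\kappa,\D',x\rangle)$ and $\{(f_1,0)\}\in\bLsu(\langle\mbb{T},\kappa,\D',x\rangle)\setminus\bLsu(\langle\mbb{T},\kappa,\D,x\rangle)$, both \emph{Monotonicity} and \emph{Counter-Monotonicity} fail.

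The routine steps are the four positive axioms and the two strong refutations, which merely unfold the definition of dwAXp. The main obstacle is the Monotonicity/Counter-Monotonicity counterexample: one has to resist the natural but mistaken intuition that $\bLsu$ inherits the stability of $\bLw$ under dataset growth, and instead engineer a nested pair of datasets for which the enforced agreement with $\kappa$ on the larger one changes \emph{which} weak abductive explanations of the surrogate survive over the whole feature space; the example above is tuned so that the single-literal explanation $\{(f_1,0)\}$ is invalid for the small-dataset surrogate yet valid for the large-dataset one, with $\{(f_2,0)\}$ doing the converse.
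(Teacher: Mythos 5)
Your proof is correct and follows essentially the same route as the paper's: the four positive axioms are obtained by unfolding the definition of a dwAXp of the surrogate over $\mbb{F}(\mbb{T})$ together with the agreement of $\sigma$ and $\kappa$ on $\D$ (the paper delegates part of this to the coherence/success of feature-space dwAXp's and to Theorem~\ref{cara:coh2}, exactly as you do), and the negative axioms are refuted by exploiting the freedom to attach genuinely different surrogates to nested datasets. Your counterexample (singleton dataset, surrogate disagreeing with $\kappa$ off $\D$) differs in detail from the paper's two-surrogate table but does the same job, and it additionally disposes of Strong Irreducibility explicitly, a case the paper's proof leaves implicit.
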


\begin{proof}
Let $\bQ = \langle\mbb{T},\kappa,\D,x\rangle$, 
    $\bQ' = \langle\mbb{T},\sigma,\mathbb{F}(\mbb{T}),x\rangle$ and  
    $\sigma$ be a surrogate function on $(\mbb{T},\kappa, \D)$. 
    Recall that $x \in \D$.
    
From \cite{ecai23}, $\bLdw$ satisfies success when defined on the feature space. 
Hence, $\bLsu(\bQ) \neq \emptyset$ and so it satisfies Success. 

Let $E \in \bLsu(\bQ)$, then $E \in \bLdw(\bQ')$. By definition of 
a weak abductive explanation, $E \subseteq x$, so $\bLsu$ satisfies Feasibility. 
Furthermore, $\forall z \in \mathbb{F}(\mbb{T})$ such that $E \subseteq z$, 
$\sigma(z) = \sigma(x)$.  Since $\D \subseteq \mathbb{F}(\mbb{T})$, 
then $\forall z \in \D$, $\sigma(z) = \sigma(x)$. Since 
$\sigma$ is a surrogate function on $(\mbb{T},\kappa, \D)$, 
we have $\sigma(z) = \sigma(x) = \kappa(z) = \kappa(x)$.
So $\bLsu$ satisfies Validity. 

Coherence of $\bLsu$ follows from the Coherence of $\bLdw$ when defined on the feature space \cite{ecai23}.  

Violation of Irreducibility follows from violation of the property by $\bLdw$.

To show that $\bLsu$ violates Monotonicity and CM, consider the theory $\mbb{T}$ below, 
a classifier $\kappa$, its surrogate models $\sigma_1, \sigma_2$, and two datasets 
$\D_1 = \{x_1, x_2\}$ and 
$\D_2 = \{x_1, x_2, x_3\}$ on which the surrogate models have been trained respectively. 

\begin{center}
\begin{tabular}{c|cc|c|c|c}\hline
$\mbb{F}(\mbb{T})$ & $f_1$ & $f_2$ & $\kappa(x_i)$ & $\sigma_1(x_i)$ & $\sigma_2(x_i)$  \\\hline
			$x_1$  &   0   &  0    &  0            &   0             &  0  \\
			$x_2$  &   0   &  1    &  1            &   1             &  1  \\
            $x_3$  &   1   &  0    &  0            &   1             &  0  \\
			$x_4$  &   1   &  1    &  1            &   1             &  0  \\\hline
\end{tabular}
\end{center}

Let $\bQ_1 = \langle\mbb{T},\kappa,\D_1,x_1\rangle$, 
    $\bQ'_1 = \langle\mbb{T},\kappa,\D_2,x_1\rangle$, 
    $\bQ_2 = \langle\mbb{T},\kappa,\D_1,x_2\rangle$, 
    $\bQ'_2 = \langle\mbb{T},\kappa,\D_2,x_2\rangle$ be questions. 
Note that $\bLsu(\bQ_1) = \{x_1\}$ while $\bLsu(\bQ'_1) = \{x_1, \{(f_2,0)\}\}$, hence CM is violated. 
Note also that $\bLsu(\bQ_2) = \{x_2,  \{(f_2,1)\}\}$ while $\bLsu(\bQ'_2) = \{x_2\}$, hence Monotonicity is violated. 

Completeness and Strong Completeness are violated since 
$\{(f_2,0)\} \notin \bLsu(\bQ_1)$ while 
$\nexists y \in \D_1$ (resp. $\nexists y \in \mbb{F}(\mbb{T})$) such that 
$\{(f_2,0)\} \subseteq y$ and $\kappa(y) \neq \kappa(x_1)$.
\end{proof}

\end{appendix}

\bibliography{refsExplanations}
\end{document}